\newtheorem{lemma}{Lemma}[section]
\newcommand{\dotp}[2]{\langle #1, #2 \rangle}
\definecolor{OliveGreen}{rgb}{0,0.6,0}
\definecolor{candypink}{rgb}{0.89, 0.44, 0.48}
\definecolor{candypink}{rgb}{0.89, 0.44, 0.48}
\newtheoremstyle{theoremdd}% name of the style to be used
  {\topsep}% measure of space to leave above the theorem. E.g.: 3pt
  {\topsep}% measure of space to leave below the theorem. E.g.: 3pt
  {\itshape}% name of font to use in the body of the theorem
  {0pt}% measure of space to indent
  {\bfseries}% name of head font
  {. }% punctuation between head and body
  { }% space after theorem head; " " = normal interword space
  {\thmname{#1}\thmnumber{ #2}\textnormal{\thmnote{ (#3)}}}
\theoremstyle{theoremdd}
\declaretheorem[name=Theorem,numberwithin=section]{thm}
\declaretheorem[name=Proposition,numberwithin=section,numberlike=thm]{prop}
\declaretheorem[name=Definition,numberwithin=section,numberlike=thm]{defn}
\newcommand{\norm}[1]{\left\lVert#1\right\rVert}
\crefname{cor}{Corollary}{Corallaries}
\crefname{thm}{Theorem}{Theorems}
\crefname{defn}{Definition}{Definitions}
\crefname{prop}{Proposition}{Propositions}
\title{ The Shaped Transformer:\\ Attention Models in the Infinite
Depth-and-Width Limit}
\author{%
  Lorenzo Noci\thanks{Equal contribution. Correspondence to: \\ \phantom{}\hspace{5mm}\texttt{lorenzo.noci@inf.ethz.ch}, \texttt{chuning.li@mail.utoronto.ca}, \texttt{mufan.li@mail.utoronto.ca}}\;\,\thanks{
    ETH Zurich %\texttt{lorenzo.noci@inf.ethz.ch}
  } 
  \and
  Chuning Li\footnotemark[1]\;\,\thanks{
    University of Toronto and Vector Institute %\texttt{chuning.li@mail.utoronto.ca}
  } 
  \and
  Mufan (Bill) Li\footnotemark[1]\;\,\footnotemark[3]%\thanks{
  %   University of Toronto and Vector Institute, %\texttt{mufan.li@mail.utoronto.ca}
  % } 
  \and
  Bobby He\thanks{
  University of Oxford
  }
  \and
  Thomas Hofmann\footnotemark[2]
  \and
  Chris Maddison\footnotemark[3]
  \and
  Daniel M. Roy\footnotemark[3]
}
\date{}
\begin{document}

\maketitle

\begin{abstract}
In deep learning theory, the covariance matrix of the representations serves as a proxy to examine the network's trainability. 
Motivated by the success of Transformers, we study the covariance matrix of a modified Softmax-based attention model with skip connections in the proportional limit of infinite-depth-and-width. 
We show that at initialization the limiting distribution can be described by a stochastic differential equation (SDE) indexed by the depth-to-width ratio. 
To achieve a well-defined stochastic limit, the Transformer's attention mechanism is modified by centering the Softmax output at identity, and scaling the Softmax logits by a width-dependent temperature parameter. 
We examine the stability of the network through the corresponding SDE, showing how the scale of both the drift and diffusion can be elegantly controlled with the aid of residual connections. 
The existence of a stable SDE implies that the covariance structure is well-behaved, even for very large depth and width, thus preventing the notorious issues of rank degeneracy in deep attention models.
Finally, we show, through simulations, that the SDE provides a surprisingly good description of the corresponding finite-size model. We coin the name \emph{shaped Transformer} for these architectural modifications.
\end{abstract}

\section{Introduction}

Pre-trained large language models have experienced a remarkable increase in popularity due to their eerily human-like ability to puzzle through complex reasoning tasks, solve coding challenges, and produce pages of logically sound text \citep{brown2020language}. 
Arguably, the Transformer is the foundation of these successes \cite{vaswani2017attention}. 
Recent research has found evidence for scaling laws, linking the performance of these architectures to their parameter counts and the  quantity of training data, fueling the desire to train deeper and wider models  on ever larger datasets 
in order to unlock new levels of performance \cite{hestness2017deep,kaplan2020scaling,hoffmann2022training,tay2022scaling,aghajanyan2023scaling}. 

Bundled with the increased expressivity of deep architectures, however, is increased numerical instability, both in the forward pass and gradients, which hinders training.
One of the clearest examples of instability is the so-called rank collapse phenomenon \citep{dong2021attention, noci2022signal} -- the observation that, 
in Softmax-based attention models,
the network's representation of different tokens tend to perfectly align at large depth. 
The resulting poorly conditioned covariance and correlation between tokens leads to exploding and/or vanishing gradients at initialization, disrupting gradient updates of the affected parameters. 
This situation violates a well-known guiding principle
from the literature of deep signal propagation:
 a stable covariance is a necessary condition for stable training \citep{poole2016exponential, schoenholz2017deepinformationpropagation,yang2017mean,hayou2019impact,murray2022activation,xiao2020disentangling}. 
In fact, the instability of Transformers is evident when considering the critical role of hyperparameter tuning and the judicious use of normalization layers. 
In this work, we study Transformers in a novel infinite limit, rectify sources of instability with a novel modification, and derive the SDEs characterizing the covariance and output distribution.

\begin{figure}[t!]
     \centering
     \begin{subfigure}[b]{0.45\textwidth}
         \centering
          \includegraphics[width=\textwidth]{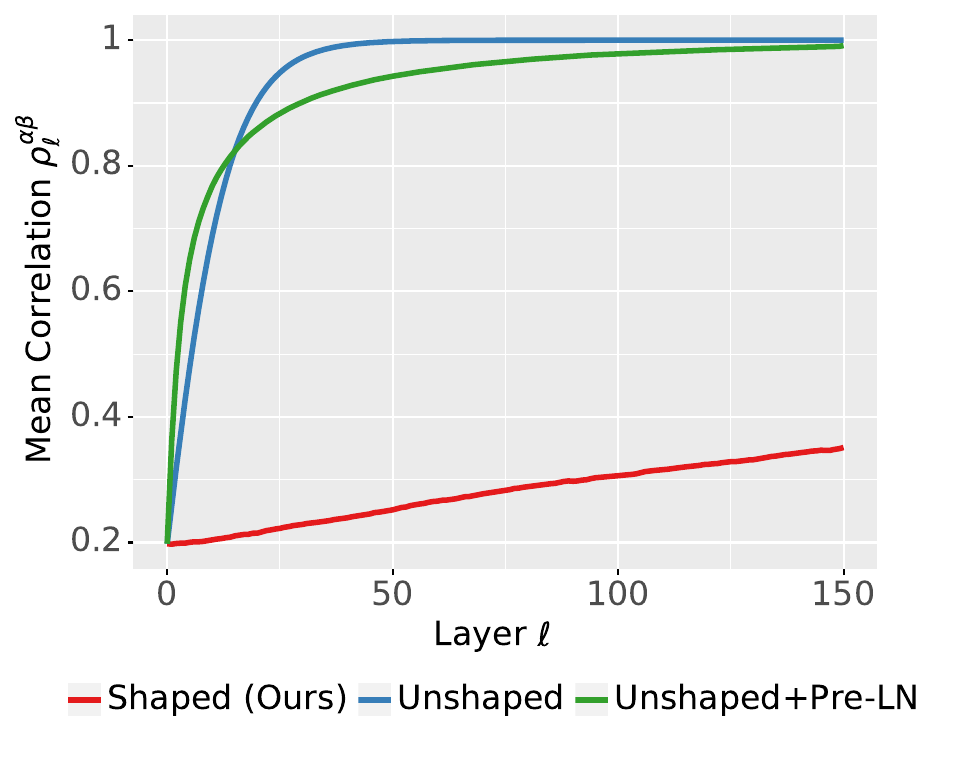}
        %  \caption{$y=x$}
         % \label{subfig:rho_path} % doesn't work with \cref for some reason 
     \end{subfigure}
     % \hfill
     \begin{subfigure}[b]{0.42\textwidth}
         \centering
         \includegraphics[width=\textwidth]{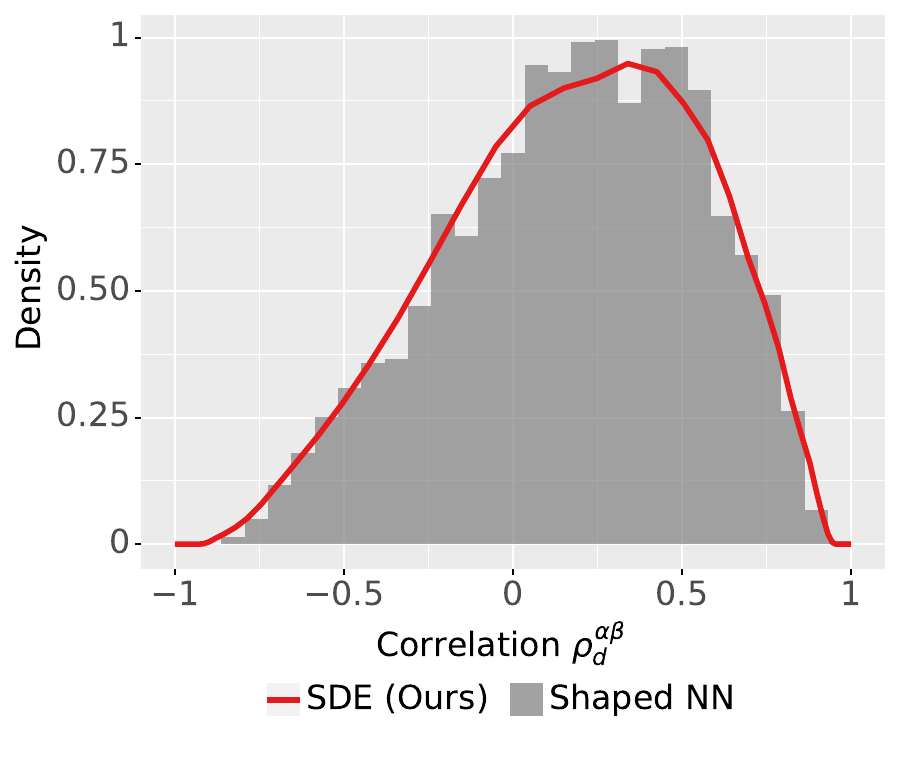}
        %  \caption{$y=3sinx$}
     %    %  \label{fig:three sin x}
     \end{subfigure}
\caption{
Our shaped Transformer prevents token representations from becoming perfectly aligned, i.e. rank collapse.
Left: mean correlation $\rho^{\alpha\beta}_\ell$ of Transformers (\cref{eq:shaped_transformer}) with and without shaped attention (\cref{eq:shaped-attention}) and Pre-LN \cite{xiong2020layer}. 
Right: kernel density estimate and histogram of correlations from covariance SDE in \cref{thm:sde-attention} and shaped attention NN. 
Here we note correlation converging to $1$ implies a poorly conditioned covariance matrix. 
Simulated with $n = 200, d = 150, \gamma = 1/\sqrt{8}, \tau_0 = 1, \rho^{\alpha\beta}_0 = 0.2$, SDE step size $0.01$, and $2^{12}$ samples.
}
\label{fig:rho_path_density}
\end{figure}

Scaling limits have been used successfully to provide guidance on architecture \citep{martens2021rapid,zhang2022deep,li2022neural} and tuning hyperparameters settings \citep{yang2022tensor}. Our work represents a contribution in this direction.
The ability to use such limits to diagnose instabilities depends on their tractability and faithfulness to real-world (finite) networks. 
In this regard, not all limits are created equal.
In particular, the faithfulness of scaling limits depends critically on how other parameters are scaled with width. One of the simplest (and thus most popular) limits to work with -- the ``NTK'' limit \cite{neal1995bayesianneuralnetworks,lee2018dnngp,matthews2018gaussian,hron2020infinite,jacot2018neural} -- treats the depth of the network as fixed. 
As a result, at initialization, this limit does not accumulate sufficient random fluctuations over the depth of the network, leading to deterministic covariance matrices that do not agree with those of standard (finite) networks. 
Such networks have another defect: they are incapable of learning features in the limit \citep{yang2020feature}. 
Various other limits have been studied, towards identifying tractable yet faithful models of initialization and/or training. 
These include mean field limits
\citep{sirignano2020mean,mei2018mean,chizat2018global,rotskoff2018trainability} and 
the perturbative regime \citep{yaida2020non, dyer2019asymptotics, roberts2022principles,zavatone2021exact, zavatone2021asymptotics, hanin2022correlation, dinan2023effective}. 

This work operates in a relatively new regime -- the \emph{proportional} infinite depth-and-width limit --
where depth $d$ and width $n$ diverge as the ratio $d/n$ tends to a positive constant.
This limit, first analyzed by \citet{hanin2019products}, has been the recent subject of study in the context of neural network \citep{hanin2019finite,hu2021random,li2021future,noci2021precise,li2022neural}. 
A related line of work also studied the Lyapunov exponent for products of random matrices 
\citep{hanin2021non,akemann2019integrable, ipsen2015lyapunov, jiang2017spectral}. 
This regime retains the network's stochasticity and, at initialization, has been shown to closely resemble the behaviour of finite architectures, yet still yield a relatively simple limiting description, expressible in terms of stochastic differential equations \citep{li2021future,li2022neural}. 
In this work, we fully characterize the {initial output distributions} of a network with skip connections and Softmax-based attention mechanisms, in the proportional infinite-depth-and-width limit.

Inspired by the idea of shaping activation functions \cite{martens2021rapid,zhang2022deep,li2022neural,he2023deep}, our theoretical approach finds an adequately modified attention mechanism via its SDE limit. 
Our modification involves making the attention matrix closer to the identity, and appropriately choosing the temperature parameter $\tau$, which re-scales the logits of the Softmax. 
Similar to shaping activation functions, the temperature scaling we devise linearizes and reduces the saturation of the Softmax, a known source of training instability in Transformers \citep{dehghani2023scaling}. 
In order to model the feedforward layer of a Transformer's block,
we extend existing results \citep{li2022neural}
to derive an SDE for the proportional limit of shaped-ReLU feedforward multi-layer perceptrons (MLPs) with skip connections. Combined, we fully characterize the output distribution of a Transformer with shaped non-linearities (\cref{cor:full_transformer}).

\begin{figure}[t!]
     \centering
     \begin{subfigure}[b]{0.45\textwidth}
         \centering
          \includegraphics[width=\textwidth]{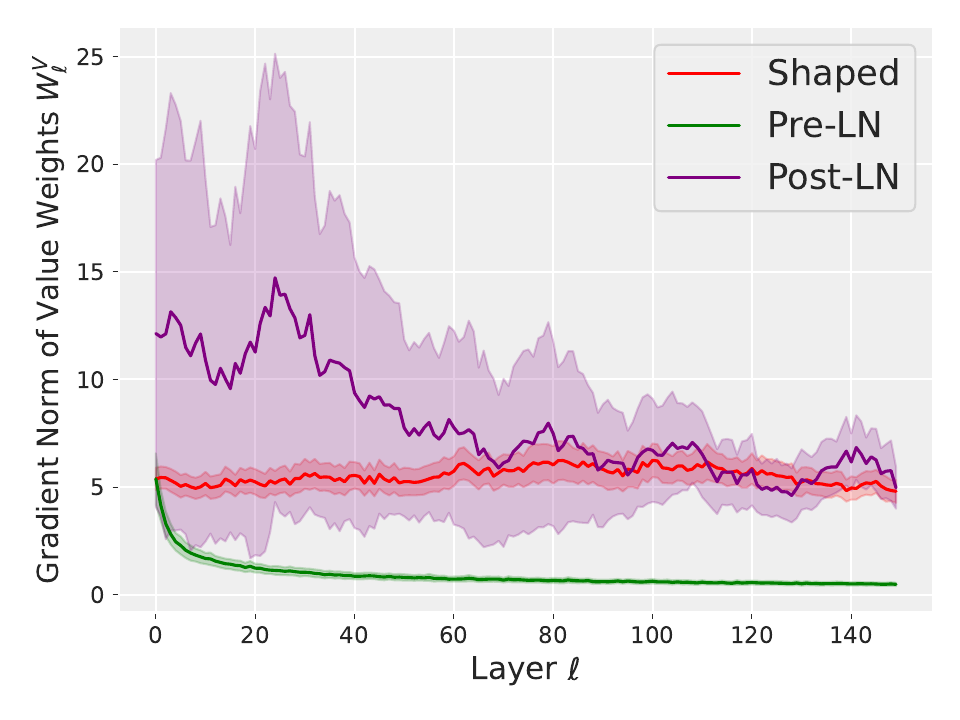}
        \caption{Value Weights $W^V_\ell$}
         % \label{subfig:rho_path} % doesn't work with \cref for some reason 
     \end{subfigure}
     % \hfill
     \begin{subfigure}[b]{0.45\textwidth}
         \centering
         \includegraphics[width=\textwidth]{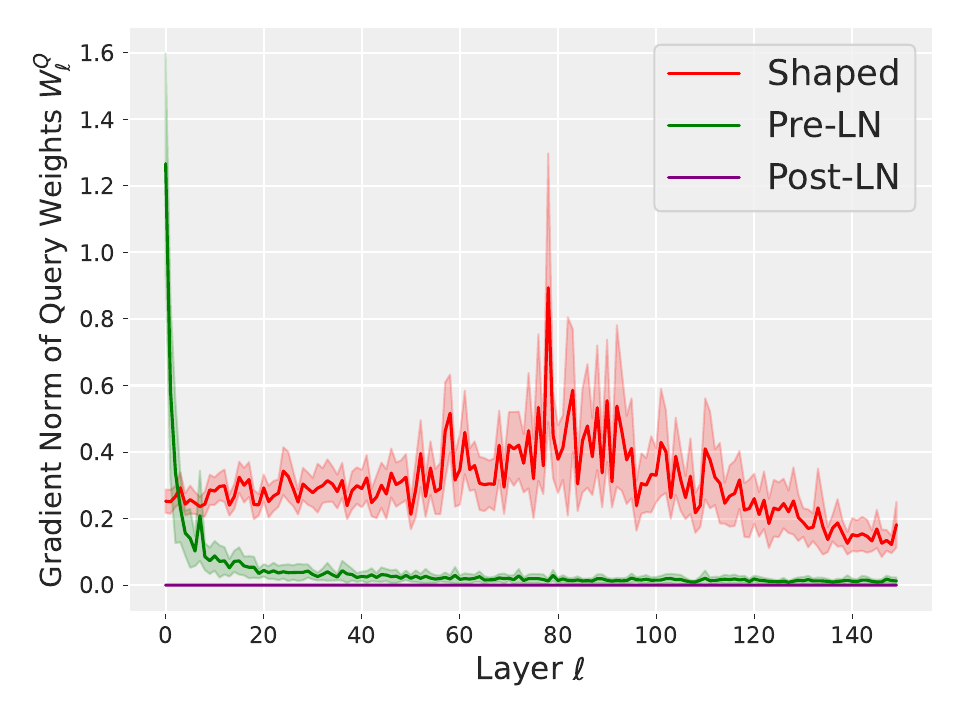}
        \caption{Query Weights $W^Q_\ell$}
     %    %  \label{fig:three sin x}
     \end{subfigure}
\caption{
Comparing gradients norms at initialization for different parameters as a function of depth, with and without shaped attention. The architecture is the same as in \cref{fig:rho_path_density} but with autoregressive causal masking, and the task is next-token prediction on code data. Left: Value weights $W^V_{\ell}$ for shaped attention, standard Pre-LN, and the original Post-LN block \cite{vaswani2017attention}. Right: the same gradient norm plot but for Query weights $W^Q_l$. We find that shaping the attention mechanism successfully prevents gradients from vanishing, while unshaped Transformers suffer from rapidly vanishing gradients. Interestingly, only the Post-LN query gradients vanish, but value gradients are stable across depths, which is consistent with the findings of \citet{noci2022signal}. On the other hand, shaped attention has stable gradients for both parameters inside and outside the Softmax nonlinearity.
}
\label{fig:gradients}
\end{figure}
Notably, our modification successfully prevents a poorly conditioned covariance matrix, whereas the vanilla Softmax-based attention model without LayerNorm \cite{ba2016layer} fails in this regard, and the corresponding Pre-LN architecture provides only marginal improvements (see \cref{fig:rho_path_density}).
Given that our modification is inspired by previous work on shaping activation functions, we coin the terms \emph{shaped attention} for the proposed attention mechanism and \emph{shaped Transformer} for the overall architecture that includes the MLP block and residual connections. 
Through simulations (e.g., \cref{fig:rho_path_density}), we show that the limiting neural covariance SDE approximates the distribution of finite-size Transformers with shaped attention mechanism surprisingly well. 
We also provide preliminary training experiments for our proposed shaped attention architecture on standard language modeling tasks, demonstrating the feasibility of the new architecture in practice (see \cref{sec:discussion} and \cref{sec:experiments}). 

In summary, our contributions are as follows:
% \vspace{-0.3em}
% 
\begin{enumerate}[leftmargin=2em,itemsep=0em]
    \item 
    We study the effect of skip connections in the 
    {proportional limit}, showing that under a precise relation between the scaling parameters of the shortcut and residual branches, the feature covariance converges to the solution of a weighted version of the neural covariance SDE for MLPs (\cref{thm:sde-resnet}). 
    The dependence on the depth-to-width ratio implies the existence of a stable non-commutative limit for residual networks, complementing the commutative limit studied in \citet{hayou2023width}. 
    \item 
    We propose \emph{shaped attention}, where we modify the Softmax-based attention mechanism to be a perturbation of the identity. 
    We demonstrate that shaped attention successfully prevents the degeneracy of correlation in contrast to existing Transformer architectures (\cref{fig:rho_path_density}). The enhanced stability in the forward pass is reflected in the gradients, which are also stable with depth, as we empirically show in \cref{fig:gradients}. 
    \item 
    For the proposed shaped attention architecture, we derive the neural covariance SDE characterizing the initial distribution in the {proportional limit}
    (\cref{thm:sde-attention}). 
    Consequently, we provide the first characterization of Transformer-type architectures, i.e. the shaped Transformer, in the large depth-and-width regime (\cref{cor:full_transformer}). 
    \item We provide simulations to validate the theory and to interpret the effects of network hyperparamaters on the covariance matrix of the shaped Transformer. 
    Specifically, we study finite time stability of the SDE and provide explicit guidance on hyperparameters to prevent numerical instability. 
\end{enumerate}
The paper is organized as follows:
In \cref{sec:background}, we provide the basic setup and some background on existing results. 
In \cref{sec:resnets}, we generalize the SDE results of \citet{li2022neural} to include skip connections. This serves as a model to understand the effect of skip connections in isolation from the attention model. 
In \cref{sec:sde-attention},
we present our main result, first pinpointing the origins of instability in the Softmax, then showing how the modifications underlying \emph{shaped attention} allow us to derive a non-trivial SDE limit. 
Finally, in \cref{sec:discussion}, we discuss the implications of our results and some future directions. 
Proofs of all theorems and additional experiments are deferred to the Appendix.

\section{Background}
\label{sec:background}

\paragraph{Setup.} Let $X_\ell \in \mathbb{R}^{m\times n}$ be the data matrix representing a sequence of $m$ tokens embedded in $n$ dimensions at layer $\ell \in [d]$, where $d$ is the depth of the network. We elide the explicit dependence on $\ell$ when it is clear from the context, and use superscript Greek letters to indicate specific tokens' representations, for instance $x^\alpha_\ell \in \mathbb{R}^n$ is the $\alpha$-th row of $X_\ell$. We consider the following attention model with residual connections:
\begin{equation}
\label{eq:model}
    X_{\ell + 1} = \lambda X_\ell + \gamma A_\ell X_\ell \ \frac{1}{\sqrt{n}} W^V_\ell \, 
\end{equation}
where $\gamma, \lambda \in [0,1]$ are parameters that control the strength of the shortcut and residual branch, respectively,  
$W^V_\ell \in \mathbb{R}^{n \times n}$ is the weight matrix of the values, and $A_\ell \in \mathbb{R}^{m \times m}$ is the attention matrix. 
We consider Softmax-based scaled dot-product attention, where $A_\ell$ has the form:
\begin{equation}
\label{eq:Softmax-att}
    A_\ell = \text{Softmax}\left(\frac{1}{\tau} X_\ell \ \frac{1}{\sqrt{n}} W_\ell^Q  \ \frac{1}{\sqrt{n}} W_\ell^{K, \top} \ X_\ell^\top\right) ,
\end{equation}
where the Softmax is applied row-wise, $W_\ell^Q, W_\ell^{K} \in \mathbb{R}^{n \times n_k}$ are additional random weights, and $\tau$ is a temperature parameter, which controls the entropy of the distribution. 
Here we let all the weight matrices $W_\ell^Q, W_\ell^{K}, W^V_\ell$ have $\mathcal{N}(0,1)$-iid entries. 
In the case where $\lambda, \gamma = 1$, with the application of LayerNorm on the residual branch \citep{xiong2020layer}, and with $\tau=\sqrt{n_k}$, we recover the attention block of the vanilla "Pre-LN" Transformer architecture \citep{vaswani2017attention}. 
Here we note that we pull the conventional $n^{-1/2}$ factor outside of the weight matrices, which preserves the forward pass, and yields equivalent training dynamics up to a reparameterization of the learning rate \cite{yang2020feature}. 
In this work, we consider unnormalized architectures, and control the variance propagation with the condition $\lambda^2 + \gamma^2 = 1$ \citep{li2021future}. 
We are interested in studying the so-called \emph{neural covariance} for the attention model (\cref{eq:model}) in {the proportional limit}. 

\paragraph{Neural Covariance.}
In deep learning theory, researchers have long sought to understand how networks internally represent different inputs and how different architectural choices affect these representations. The approach followed by work on signal propagation has been to study how the relative alignment of different inputs evolves across the network, as measured by the {neural covariance} $V^{\alpha\beta}_{\ell} := \frac{1}{n}\dotp{x^{\alpha}_\ell}{x^{\beta}_\ell}$ (or $\rho^{\alpha\beta} := (V^{\alpha\alpha}_{\ell}V^{\beta\beta}_{\ell})^{-1/2}V^{\alpha\beta}_{\ell}$ if interested only in the correlation). 
At initialization, characterizations of this covariance structure have been exploited to infer important properties of neural networks \citep{poole2016exponential, schoenholz2017deepinformationpropagation}. 
As an example, in the sequential infinite-width-\emph{then}-depth limit, the correlation $\rho^{\alpha\beta}_d$ of MLPs is known to converge to a fixed point independent of the input \cite{schoenholz2017deepinformationpropagation, martens2021rapid}.
In this regime, the model is not able to discriminate different data points, which severely hinders training, as the gradient step for the deep layers is taken in the same direction regardless of the input. 
In the context of Softmax-based attention models, \citet{dong2021attention} proved that the feature matrix $X_\ell$ loses rank doubly exponentially fast with depth, and \citet{noci2022signal} showed how this leads to vanishing gradients of the queries and keys parameters, thus further highlighting how the stability of forward and backward passes are deeply entangled (see also \cref{fig:gradients}). 

\paragraph{Stabilizing the Effect of Non-Linear Layers.}
Central to the issue of degeneracy of the neural covariance are commonly used non-linear activation functions that severely deviate from the identity. The recent line of work of Deep Kernel Shaping (DKS) \citep{martens2021rapid, zhang2022deep, li2022neural} addresses the issue by considering the cumulative amount of non-linearity throughout layers, and \emph{shaping} the activation function by making it closer to the identity map.
Inspired by this line of work, \citet{he2023deep} devise an initialization for Transformers that avoid the rank collapse problem without the aid of skip connections or LayerNorm. 

In an alternative approach, the line of work behind Stable ResNets \citep{hayou2023width,hayou2021stable, tarnowski2019dynamical,hayou2022infinite} considers scaling the residual branches by $\gamma = 1/\sqrt{\text{depth}}$, and postulates this scaling is sufficient to stabilize the neural covariance with minimal assumptions on the activation function. 
\citet{noci2022signal} adopts this scaling to give precise formulas on the expected covariance of a Transformer at initialization. 
In this work, we consider $\gamma$ constant in width and depth, and derive a complementary limiting result. 

\paragraph{The Proportional Infinite-Depth-and-Width Limit.}
In the context of feed-forward MLPs, the output distribution with respect to a single input was studied in \cite{hanin2019products,noci2021precise}, where it was shown that for the ReLU nonlinearity, the norm of the activations $V^{\alpha\alpha}$ converges to a log-normal random variable. 
To resolve the degeneracy of covariance and provide a characterization of output distributions for \emph{multiple inputs}, \citet{li2022neural} shapes the ReLU by setting its slope $1/\sqrt{\text{width}}$-away from linearity. 
In the {proportional limit}, the effect of the non-linearity accumulates over the $d$ layers, and the covariance matrix $V_\ell = [V^{\alpha\beta}_\ell]_{\alpha\beta}$ converges weakly to the solution of the SDE 
\begin{equation}
\label{eq:sde_mlp}
    dV_t = b_{\text{ReLU}}(V_t) \, dt + \Sigma_{\text{lin}}^{1/2}(V_t) \, dB_t \,, 
\end{equation}
where the formulae for coefficients $b_{\text{ReLU}}, \Sigma_{\text{lin}}$ can be found in \cref{thm:sde-resnet}.

We note that the output neuron distributions are directly recovered as a conditional Gaussian with covariance $V_T$ for $T = \frac{d}{n}$, in a similar spirit as the neural network Gaussian process (NNGP) results \cite{neal1995bayesianneuralnetworks,lee2018dnngp,matthews2018gaussian}. 
For example, the $i$-th output $X_{\text{out},i}$ conditioned on $V_d$ are asymptotically iid. $\mathcal{N}(0, V_T)$ as $d, n \to \infty$. 
The reader is referred to \cref{app:cov_sde} for more technical background on the covariance SDE and the convergence result. 

While the existing results are limited to initialization, we remind the reader that this is a necessary step before we can study training dynamics. 
In particular, the NNGP techniques developed for infinite-width networks at initialization were directly used to study the training dynamics in the same limit \cite{jacot2018neural,yang2020tensor2}. 
We will provide further discussions on this topic in \cref{sec:discussion}.

\section{Warm-Up: a Neural Covariance SDE for ResNets}
\label{sec:resnets}
To understand the effect of skip connections, it is helpful to look at a simplified model composed of a shaped ReLU-activated layer and skip connections:
\begin{equation}
\label{eq:resnet}
    X_{\ell + 1} = \lambda X_\ell +
    \gamma \sigma_{s}\left(X_\ell \ {\frac{1}{\sqrt{n}}} W^{\text{pre}}_\ell\right)
    \sqrt{\frac{c}{n}} W^{\text{post}}_\ell \,,
\end{equation}
where $\sigma_s(x) := s_+ \max(x, 0) + s_- \min(x, 0)$ is the shaped ReLU with slopes $s_{\pm} := 1 + {c_\pm}{n^{-1/2}}$ for some constants $c_+, c_- \in \mathbb{R}$ . We assume i.i.d weights $(W^{\text{pre}}_\ell)_{ij},(W^{\text{post}}_\ell)_{ij} \overset{\text{iid}}{\sim} \mathcal{N}(0,1)$, and $c^{-1} = \mathbb{E} \, \sigma_s(g)^2$ for $g \sim N(0,1)$ is a constant that ensures that the activations are normalized \cite{he2015delving}.
Notice that this is the form of the feedforward layer in a Transformer \cite{vaswani2017attention}. 

We will next define the notion of convergence for our covariance matrices and state our first main result. 
We refer the reader to \cref{app:cov_sde} for more precise details on the Skorohod topology. 

\begin{defn}[Convergence of Covariance]
\label{defn:conv_cov}
Let $X_\ell \in \mathbb{R}^{m\times n}$ be the $\ell$-th layer matrix of representations, and define the feature covariance as $V_\ell = \frac{1}{n} X_\ell X_\ell^\top$. 
Let $V_t^{(n)} = V_{\lfloor tn \rfloor} \in \mathbb{R}^{m(m+1)/2}$ be the the continuous time interpolation of the upper triangular entries as a vector. 
We say the {covariance} $V^{(n)}$ {converges} to $V$, if in the limit as $n,d \to \infty, \frac{d}{n} \to T$, the process $\{V_t^{(n)}\}_{t \in [0,T]}$ converges to $\{V_t\}_{t \in [0,T]}$ weakly in the Skorohod topology. 
\end{defn}

\begin{restatable}{thm}{sderesnet}
\label{thm:sde-resnet}
Let $X_\ell$ be the hidden layers of a ResNet defined in \cref{eq:resnet} with $\lambda^2 + \gamma^2 = 1$, where both $\lambda$ and $\gamma$ do not depend on $d,n$. 
Then the feature covariance $V_\ell$ converges to the solution of the following SDE (in the sense of \cref{defn:conv_cov})
\begin{equation}
    dV_t = b_{\text{res}}(V_t) \, dt + \Sigma_{\text{res}}(V_t)^{1/2} \, dB_t \,, 
    \quad 
    V_0 = \frac{1}{n} X_0 X_0^\top \,, 
\end{equation}
where $b_{\text{res}}(V) = \gamma^2 b_{\text{ReLU}}(V) = \gamma^2 [\nu(\rho^{\alpha\beta}) \sqrt{ V^{\alpha\alpha} V^{\beta\beta} }]_{\alpha\leq\beta}$ with $\rho^{\alpha\beta} = V^{\alpha\beta} (V^{\alpha\alpha} V^{\beta\beta})^{-1/2}$ and 
\begin{equation}
    \nu(\rho) = \frac{(c_+ - c_-)^2}{2\pi} \left( \sqrt{1-\rho^2} - \rho \arccos \rho \right) \,, 
\end{equation}
furthermore, $\Sigma_{\text{res}}(V) = 2\gamma^2\Sigma_{\text{lin}}(V) = 2 \gamma^2 [ V^{\alpha\delta} V^{\beta\omega} + V^{\alpha\omega} V^{\beta\delta} ]_{\alpha\leq\beta, \delta\leq\omega}$. 
\end{restatable}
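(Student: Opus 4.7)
The plan is to apply the standard diffusion-approximation framework for Markov chains. I will compute the one-step conditional mean and covariance of $V_\ell$ to leading order in $1/n$, verify a Lindeberg-type control on higher moments, and then invoke a classical weak-convergence theorem for triangular arrays of Markov chains (e.g.\ Theorem~7.4.1 of Ethier--Kurtz, or the diffusion approximation of Stroock--Varadhan). Since the pure shaped-ReLU MLP case is already treated in \citet{li2022neural}, the main new content is to account for the skip connection, which (i) rescales the drift by a clean factor of $\gamma^2$, and (ii) modifies the diffusion through an additional ``cross'' contribution from the $\lambda X_\ell$ branch.

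\textbf{One-step expansion and drift.} Set $\Phi_\ell := \sigma_s(X_\ell W^{\text{pre}}_\ell/\sqrt n)\sqrt{c/n}\,W^{\text{post}}_\ell$. Using $\lambda^2 + \gamma^2 = 1$, the one-step update reads
\[
V_{\ell+1} - V_\ell = -\gamma^2 V_\ell + \gamma^2\, \Phi_\ell\Phi_\ell^\top/n + \lambda\gamma\,(X_\ell\Phi_\ell^\top + \Phi_\ell X_\ell^\top)/n.
\]
The cross term has zero conditional mean because $W^{\text{post}}_\ell$ is centered. For the quadratic piece, expanding the shaped ReLU as $\sigma_s(x) = x + \phi(x)/\sqrt n$ with $\phi(x) := c_+\max(x,0)+c_-\min(x,0)$, and Taylor-expanding the normalization constant $c$, the $O(1/\sqrt n)$ corrections cancel pairwise to yield $\mathbb{E}[\Phi_\ell\Phi_\ell^\top/n \mid X_\ell] = V_\ell + b_{\text{ReLU}}(V_\ell)/n + o(1/n)$; the specific formula for $\nu(\rho)$ then follows from the closed-form Gaussian integral $\mathbb{E}[\phi(Z^\alpha)\phi(Z^\beta)]$ with $Z \sim \mathcal{N}(0,V_\ell)$. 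Combining, $\mathbb{E}[V_{\ell+1} - V_\ell \mid X_\ell] = (\gamma^2/n)\, b_{\text{ReLU}}(V_\ell) + o(1/n)$, which matches $b_{\text{res}}$.

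\textbf{Diffusion.} The key structural observation is that $\Phi_\ell\Phi_\ell^\top$ is even in $W^{\text{post}}_\ell$ while the cross term is odd, so their conditional covariance vanishes by Gaussian parity. Each surviving piece is handled via the iterated law of total covariance, conditioning additionally on $W^{\text{pre}}_\ell$. For the quadratic piece, an Isserlis/Wick expansion over $W^{\text{post}}_\ell$ yields $\Sigma_{\text{lin}}(V_\ell)/n$ from the inner covariance, while averaging the conditional mean $(c/n)\sum_j \sigma_s(Z)_{\alpha j}\sigma_s(Z)_{\beta j}$ over $W^{\text{pre}}_\ell$ contributes another $\Sigma_{\text{lin}}(V_\ell)/n$ (from the Gaussian covariance of $Z^\alpha Z^\beta$ and $Z^\delta Z^\omega$), totaling $2\Sigma_{\text{lin}}(V_\ell)/n$. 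For the cross piece, its conditional mean given $W^{\text{pre}}_\ell$ already vanishes, so only the inner Gaussian covariance contributes; a direct computation using $(X_\ell X_\ell^\top)_{\alpha\delta}/n = V^{\alpha\delta}$ together with $\sum_j \sigma_s(Z)_{\beta j}\sigma_s(Z)_{\omega j}/n \to c^{-1} V^{\beta\omega}$ also yields $2\Sigma_{\text{lin}}(V_\ell)/n$. Summing with their squared coefficients gives $(\gamma^4 + \lambda^2\gamma^2)\cdot 2\Sigma_{\text{lin}}(V_\ell)/n = 2\gamma^2\,\Sigma_{\text{lin}}(V_\ell)/n$, precisely $\Sigma_{\text{res}}(V_\ell)/n$ after invoking $\lambda^2+\gamma^2=1$.

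\textbf{Closing and main obstacle.} A routine fourth-moment estimate on $\Delta V_\ell$ (via sub-exponential tails of Gaussian polynomials) verifies the Lindeberg-type negligibility condition on the increments, after which the martingale-problem convergence theorem of Ethier--Kurtz delivers weak convergence of $V^{(n)}_t = V_{\lfloor tn\rfloor}$ to the claimed SDE in the Skorohod topology. I expect the main obstacle to be the diffusion step: correctly tracking the two independent sources of randomness ($W^{\text{pre}}$ vs.\ $W^{\text{post}}$) through the iterated total-covariance decomposition, and verifying that the $\gamma^4$ and $\lambda^2\gamma^2$ contributions combine, via $\lambda^2+\gamma^2=1$, into the clean coefficient $2\gamma^2$. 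A secondary subtlety is that the various $O(1/\sqrt n)$ corrections from shaping the ReLU and from expanding the normalization constant $c$ must cancel pairwise at the drift level, so that the $O(1/n)$ drift emerges cleanly with the advertised coefficient $\nu(\rho)$.
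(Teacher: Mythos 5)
Your proposal is correct and follows essentially the same route as the paper's proof: the same $\lambda^2 V_\ell + (\text{cross}) + (\text{quadratic})$ decomposition, the same parity argument to kill the cross-covariance between the linear and quadratic pieces, the same cancellation of the $O(n^{-1/2})$ shaping corrections in the drift, and the same reduction to a Markov-chain-to-SDE convergence theorem. The one mild organizational difference is in how you compute the variance of the quadratic piece: you decompose via the law of total covariance (conditioning on $W^{\text{pre}}$ and splitting into an inner Isserlis term over $W^{\text{post}}$ plus an outer variance over $W^{\text{pre}}$), each contributing $\Sigma_{\text{lin}}/n$, whereas the paper computes the fourth moment of $\mathcal{T}_2$ in one pass via Isserlis over $W^{\text{post}}$ and an expansion of the Gaussian kernels $K_1$, $K_2$ from \citet{li2022neural}. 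Both yield $2\Sigma_{\text{lin}}$, and your iterated decomposition is perhaps conceptually cleaner for separating the two sources of randomness; the paper's is more direct and reuses moment formulas already available. Your bookkeeping of the final covariance coefficient $(\gamma^4 + \lambda^2\gamma^2)\cdot 2 = 2\gamma^2$ under $\lambda^2 + \gamma^2 = 1$ matches the paper exactly. One could quibble that you invoke Ethier--Kurtz / Stroock--Varadhan where the paper invokes its own Proposition A.6 (a repackaged version of the same machinery with uniformly bounded moments of all orders rather than a Lindeberg condition), but this is a stylistic rather than substantive difference.
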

Notice how the limiting SDE closely resembles the MLP case (\cref{eq:sde_mlp}), which is recovered exactly when $\gamma=1$. 
The only difference is the extra $2$ factor, which comes from the fact that in our definition each layer has effectively two times the number of weight matrices than the standard formulation for MLPs. 
As the drift depends solely on the nonlinearity, and the diffusion depends soley on the random weights, only the diffusion variance is doubled. 
The residual branch parameter $\gamma < 1$ dampens both the drift and the variance of the Brownian motion by $\gamma^2$, thus it can be interpreted as a time change. 
In other words, the effect of $\gamma$ at initialization is equivalent to reducing depth-to-width ratio, inline with existing intuitions that ResNets have a lower ``effective-depth'' \cite{veit2016residual}. 
To visualize the stabilizing effect of $\gamma$ on the distribution, in \cref{fig:resnet} (right) we plot the 95th percentile correlation as a function of $\gamma$. The increasing trend indicates a larger probability of perfect alignment between two tokens. In \cref{fig:resnet} (left) we plot the densities of both the residual SDE and the corresponding residual network for various values of $\gamma$. Notice how the samples from the SDE well-approximates the histogram of a finite network. 

\begin{figure}[t]
     \centering
     \begin{subfigure}[b]{0.42\textwidth}
         \centering
          \includegraphics[width=\textwidth]{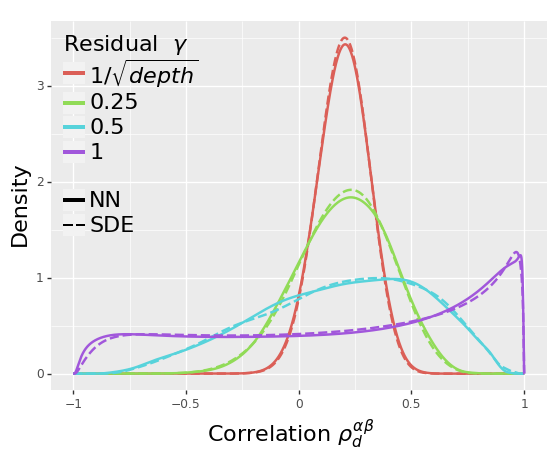}
     \end{subfigure}
     \begin{subfigure}[b]{0.44\textwidth}
         \centering
         \includegraphics[width=\textwidth]{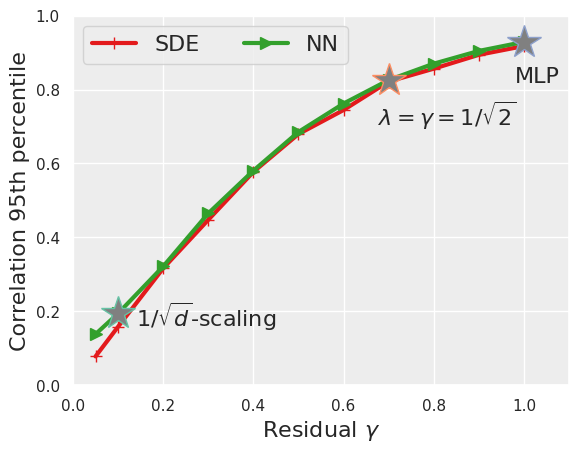}
     \end{subfigure}
\caption{
Left: Kernel density estimates of correlation $\rho^{\alpha\beta}_d$ for various values of the residual strength parameter $\gamma$. In particular, $\gamma=1$ recovers a shaped-ReLU MLP without skip connections, and $
\gamma = 1/\sqrt{d}$ is the setting studied in \citet{noci2022signal,hayou2023width}. Solid lines represent finite networks, while our SDE simulations are presented in dashed lines. 
Right: 95th percentile of the absolute value of the correlation distribution as a function of $\gamma$. 
Note reducing $\gamma$ reduces the concentration around $\rho^{\alpha\beta} = 1$, and our SDE reliably approximates finite size networks. 
Simulated with $n = 300, d = 100, \rho^{\alpha\beta}_0 = 0.2, c_+ = 0, c_-=-1$, and $2^{13}$ samples. 
}
\label{fig:resnet}
\end{figure}

\paragraph{A Stable Non-Commutative Limit.}
Our results complement those of  \citet{hayou2023width}, where the authors have shown that for a similar ResNet under the parameter scaling $\lambda=1, \gamma=1/\sqrt{d}$, the depth and width limits \emph{commute}. 
More precisely, the covariance $V^{\alpha\beta}$ converges to the same limit regardless of the order with respect to which the limit is taken or the depth-to-width ratio. 
Furthermore, the limit is \emph{deterministic}, and can be described by an ordinary differential equation (ODE). 
Intuitively, the convergence to a deterministic quantity occurs because $\gamma=1/\sqrt{d}$ suppresses the random fluctuations enough to vanish in the limit. 
On the other hand, our results show that for $\lambda,\gamma$ constant in $n,d$, the random fluctuations are on the right order of $O(n^{-1/2})$ as in the MLP case (\cref{eq:sde_mlp}), hence they do not vanish in the simultaneous limit. 
The most notable difference is that our limiting regime is \emph{non-commutative} as it depends on the depth-to-width ratio of the network. 
We remark that both regimes prevents degeneracy of covariance for residual architectures, forming two theories that complement each other.

\section{Neural Covariance SDE for Softmax-Based Attention}
\label{sec:sde-attention}

\subsection{Unshaped Attention and Its Taylor Expansion} 
\label{sec:unshaped-attention}

A central piece to the neural covariance SDE theory for MLPs \cite{li2022neural} is identifying the exact scaling of shaped activation functions. 
In particular, the effect of the activations on the covariance Markov chain $V_\ell$ must be on the same order as the random weights in an MLP, thus forming an approximate Euler-discretization
\begin{equation}
\label{eq:markov_chain_euler}
    V_{\ell+1} = V_\ell + \frac{b(V_\ell)}{n} + \frac{\Sigma(V_\ell)^{1/2} \xi_\ell}{ \sqrt{n} } + O(n^{-3/2}) \,, 
\end{equation}
where $b,\Sigma$ are deterministic coefficients, and $\xi_\ell$ are random vectors with zero mean and identity covariance. 
From here onwards, we use $O(n^{-p})$ to denote a random variable $Z$ such that $n^p Z$ has all moments bounded by universal constants (i.e. independent of $n$). 
Since the update can be interpreted as discretization with step size $n^{-1}$, naturally the Markov chain converges to an SDE. 
We again note that a stable SDE implies a stable covariance structure for finite size networks. 

To achieve the same goal for modified attention mechanisms, we consider a similar approach as \citet{li2022neural} for smooth activation functions, and Taylor expand the Softmax function in terms of a large temperature parameter $\tau$. To this end, let $Y_\ell$ to be the matrix of dot-products between queries, and keys, i.e. $Y_\ell :=  X_\ell \ \frac{1}{\sqrt{n}} W_\ell^Q \ \frac{1}{\sqrt{n}} W_\ell^{K, \top} \ X_\ell^\top$. 

More specifically, given a row $y^\alpha \in \mathbb{R}^{1\times m}$ of the logits $Y_\ell \in \mathbb{R}^{m\times m}$, we can Taylor expand the row-wise Softmax in terms of $\tau^{-1}$: 
\begin{equation}
\label{eq:taylor-exp}
    \text{Softmax}(\tau^{-1} y^\alpha)
    = 
    \frac{1}{m} \mathbf{1}^\top + \frac{1}{\tau m} (y^\alpha - \overline{y^\alpha}) + \frac{1}{2\tau^2 m}\left[ ( y^\alpha - \overline{y^\alpha} )^2
    - \left( \overline{y^\alpha}^2 - \overline{(y^\alpha)^2} \right)\right]
    + O(\tau^{-3}) ,
\end{equation}
where $\overline{y^\alpha} := \frac{1}{m}\sum_{\beta}y^{\alpha\beta} \mathbf{1}^\top$ and $(y^\alpha)^2$ is the vector with squared entries of $y^\alpha$, and $\mathbf{1} \in \mathbb{R}^{m\times 1}$ is the (column) vector of ones. We note in practice $\tau$ is often set to $\sqrt{n_k}$, which is often quite large and allows for asymptotic analysis \cite{noci2022signal}. 

We observe that the zero-th order term $m^{-1} \mathbf{1}^\top$ is independent of $\tau$. 
Considering the form of the attention block as $A_\ell X_\ell \frac{1}{\sqrt{n}} W^V_\ell$, this yields an update that is no longer a small perturbation of $V_\ell$, regardless of how $\tau$ is chosen.
Therefore, to form a Markov chain like \cref{eq:markov_chain_euler}, we actually require $A_\ell$ to be approximately the identity matrix. 

\subsection{Shaped Attention}
To shape the Softmax-attention mechanism as a perturbation of the identity matrix, we propose the following modifications which we call the \emph{shaped attention} \footnote{In principle, it could be possible to have a close-to-identity Softmax matrix when the logits are large. 
However, this regime also corresponds to a very saturated Softmax, thus making training unstable \cite{zhai2023stabilizing}. 
As a result, we will avoid this direction in this work. 
} 
\begin{equation}
\label{eq:shaped-attention}
    A_\ell = I + \text{Softmax}(\tau^{-1} Y_\ell) - \frac{1}{m} \mathbf{1} \mathbf{1}^\top \,, 
    \quad 
    \tau = \tau_0\sqrt{nn_k} \,. 
\end{equation}
The shaped attention presents three modifications to the Softmax attention in \cref{eq:Softmax-att}.
Firstly, the zero-order term $m^{-1} \mathbf{1} \mathbf{1}^\top$ of the Taylor expansion (\cref{eq:taylor-exp}) is removed as it causes a non-infinitesimal drift in the Markov Chain that ultimately leads to instability in the covariance (see \cref{sec:unshaped-attention}). 
Secondly, we also observe that when $\tau$ is very large, the centered Softmax is a perturbation around zero. 
To recover an approximate Euler-update like in \cref{eq:markov_chain_euler}, we simply add back the identity matrix. 
By biasing the attention matrix towards the identity, we encourage each token to self-attend. 
This type of modification was also previously considered by \citet{he2023deep}. 
Finally, the Softmax’s temperature is chosen to scale as $\tau=\tau_0\sqrt{nn_k}$, for some constant $\tau_0 > 0$, 
which guarantees a non-degenerate limit as
$(d, n) \to \infty$ (\cref{thm:sde-attention}). 
Note that the extra $\sqrt{n}$ term is a departure from the standard parameterization. 

In \cref{fig:interventions}, we show how removing any of the proposed changes individually alters the neural covariance structure, which becomes degenerate for large depths, while the proposed modifications remain stable. We stress that here for simplicity we focus on attention without masking. Shaped attention can be extended to include masking (e.g. casual masking) by centering each i-th row of the Softmax matrix by a different factor $1/m_i$, where $m_i$ is the number of un-masked tokens in the i-th row.

\begin{figure}[t!]
     \centering
     \begin{subfigure}[b]{0.41\textwidth}
         \centering
          \includegraphics[width=\textwidth]{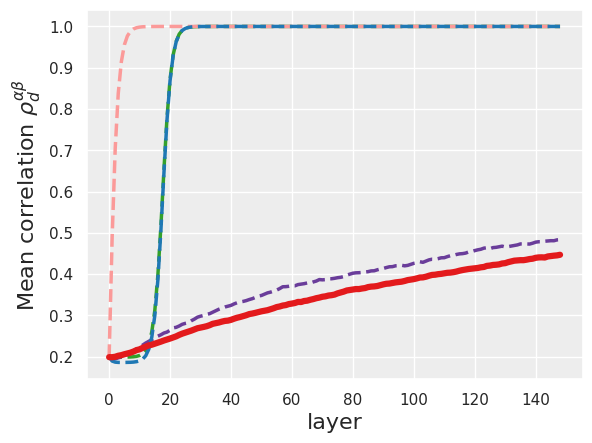}
        \label{fig:interventions-corr}
     \end{subfigure}
     \begin{subfigure}[b]{0.58\textwidth}
         \centering
         \includegraphics[width=\textwidth]{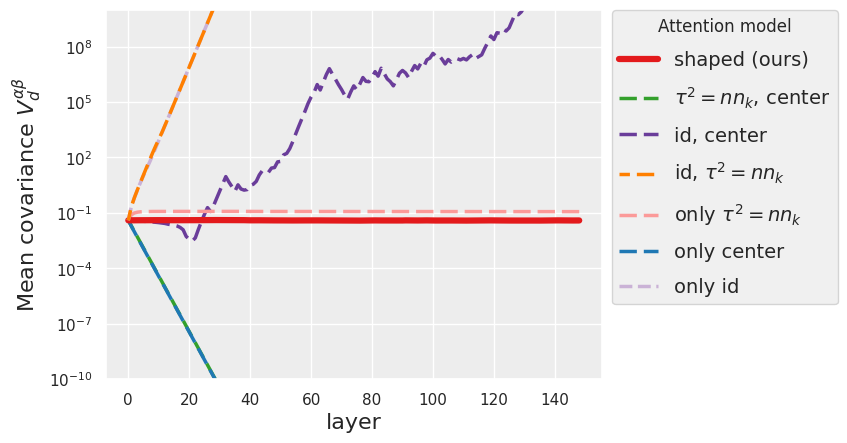}
        \label{fig:interventions-cov}
     \end{subfigure}
\caption{Mean correlation (left) and covariance (right) (in absolute value) under various interventions on the proposed shaped attention. In particular, we remove either one or two of the three modifications from the shaped attention in \cref{eq:shaped-attention}. For instance "\texttt{$\tau^2=nn_k$, center}" indicates that we use the proposed temperature, and we center by $m^{-1}$, but we do not add the identity matrix, while in "\texttt{only id}" we add the identity matrix but use $\tau=\sqrt{n_k}$ and do not center. We note in this "\texttt{only id}" case, the covariance remains unstable due to incorrect scaling. 
Due to exploding covariance, we choose to not include the cases "\texttt{id, $\tau^2=nn_k$}" and "\texttt{only id}" in the correlation plot (but only in the covariance plot). Simulated with $n = 300, d = 150, \rho^{\alpha\beta}_0 = 0.2$, $\gamma=1/\sqrt{2}$ and $2^{13}$ samples.
}
\label{fig:interventions}
\end{figure}

\subsection{Main Result -- Neural Covariance SDEs for Shaped Attention Models 
and Shaped Transformers
}

Before we state our main results, we will first define a weakened notion of convergence, which is required whenever the drift and covariance coefficients are not Lipschitz. 
This was also required for the case of shaped MLPs with smooth activations \cite{li2022neural}. 
\begin{defn}
[Local Convergence]
\label{defn:local_conv}
We say the covariance $V^{(n)}$ converges locally to $V$ if the stopped process $\{V^{(n)}_{t\wedge T_r}\}_{t \geq 0}$ converges to $\{V_{t\wedge T_r}\}_{t \geq 0}$ in the sense of \cref{defn:conv_cov} for all stopping times of the form $T_r = \inf \{ t > 0 : \|V_t\| \geq r \}$ with $r>0$. 
\end{defn}

Let the covariance with respect to the average token be defined as $V^{\alpha\bar{x}} := m^{-1}\sum_{\nu=1}^m V^{\alpha\nu}$, and the average trace be $\bar V := m^{-1} \sum_{\nu=1}^m V^{\nu\nu} $. 
We will need to compute a couple of important moments from the Taylor expansion terms of the Softmax (\cref{lemma:mom-Softmax-taylor}) 
\begin{equation}
\begin{aligned}
    S_1^{\alpha\delta,\beta\omega} 
    &:= n_k^{-1}\mathbb{E} (Y^{\alpha\delta} - \overline{y^\alpha})(Y^{\beta\omega} - \overline{y^\beta}) 
    = V^{\alpha\beta}\left( V^{\delta\omega} - V^{\delta\bar{x}} - V^{\omega\bar{x}} + V^{\bar{x}\bar{x}}\right) 
    \,, \\ 
    S_2^{\alpha\delta} 
    &:= n_k^{-1}\mathbb{E}\left[( Y^{\alpha\delta} - \overline{y}^\alpha )^2
    - ( \overline{(Y^{\alpha})^2} - \overline{y^\alpha}^2 )
    \right] 
    = V^{\alpha\alpha}\left(V^{\delta\delta} - 2V^{\delta\bar{x}} + 2V^{\bar{x}\bar{x}} - \bar{V}\right) \,. 
\end{aligned}
\end{equation}
We are now ready to state our main result. 
    
\begin{restatable}{thm}{sdeatt}
\label{thm:sde-attention}
Let $X_\ell$ be the hidden layers of a residual attention network defined in \cref{eq:model} with shaped attention in \cref{eq:shaped-attention}, parameters $\lambda^2 + \gamma^2 = 1$ and $\tau = \tau_0 \sqrt{n n_k}$, where $\lambda,\gamma,\tau_0$ all do not depend on $d,n$. 
Then the feature covariance $V_\ell$ converges locally to the solution of the following SDE (in the sense of \cref{defn:local_conv})
\begin{equation*}
    dV_t = b(V_t)dt + \Sigma(V_t)^{1/2} dB_t \,, 
    \quad 
    V_0 = \frac{1}{n} X_0 X_0^\top \,, 
\end{equation*}
where the drift has the following form
\begin{equation*}
    b(V) = \frac{\gamma^2}{\tau_0^2} \left[ \frac{1}{m^2}\sum_{\nu,\kappa=1}^m V^{\nu\kappa}S_1^{\alpha\nu,\beta\kappa} + \frac{1}{2m} \sum_{\nu=1}^m (V^{\beta \nu}S_2^{\alpha\nu} + V^{\alpha\nu}S_2^{\beta\nu})\right]_{\alpha\leq\beta} \, ,
\end{equation*} 
the diffusion coefficient is defined by 
$\Sigma(V) = \gamma^2(2-\gamma^2) \Sigma_{\text{lin}}(V) + {\gamma^4} \tau_0^{-2} [\mathcal{A}^{\alpha\beta,\delta\omega}]_{\alpha\leq\beta, \delta\leq\omega}$, and 
\begin{equation*}
    \mathcal{A}^{\alpha \beta,\delta \omega} := \frac{1}{m^2}\sum_{\nu, \kappa=1}^m
    \left(V^{\alpha\kappa}V^{\delta\nu}S_1^{\beta\kappa, \omega\nu} + V^{\alpha\kappa}V^{\omega\nu}S_1^{\beta\kappa, \delta\nu} + V^{\beta\nu}V^{\delta\kappa}S_1^{\alpha\nu, \omega\kappa} + V^{\beta\nu}V^{\omega\kappa}S_1^{\alpha\nu, \delta\kappa}\right) \,.
\end{equation*}
\end{restatable}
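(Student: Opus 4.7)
The plan is to apply the Markov-chain-to-SDE convergence strategy used for \cref{thm:sde-resnet} and the MLP case in \cite{li2022neural}: I would express the covariance increment $V_{\ell+1} - V_\ell$ as an approximate Euler--Maruyama step with drift $b(V_\ell)/n$ and diffusion $\Sigma(V_\ell)/n$, then invoke a standard weak convergence theorem for Markov chains to SDEs (e.g., Ethier--Kurtz Theorem~7.4.1 via the martingale problem). The novelty relative to the MLP case is that the shaped-ReLU is replaced by the shaped Softmax, which is handled through the Taylor expansion \cref{eq:taylor-exp}.

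First I would decompose the increment. Squaring \cref{eq:model} and using $\lambda^2 + \gamma^2 = 1$, one obtains
\begin{equation*}
V_{\ell+1} - V_\ell = \gamma^2 A_\ell \delta_\ell A_\ell^\top \;+\; \gamma^2\bigl(\epsilon_\ell V_\ell + V_\ell\epsilon_\ell^\top + \epsilon_\ell V_\ell \epsilon_\ell^\top\bigr) \;+\; \lambda\gamma\, C_\ell,
\end{equation*}
where $\epsilon_\ell := A_\ell - I$, $\delta_\ell := \tfrac{1}{n^2} X_\ell W^V_\ell (W^V_\ell)^\top X_\ell^\top - V_\ell$ is the Wishart-type fluctuation driven by $W^V_\ell$, and $C_\ell$ is the cross term linear in $W^V_\ell$. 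Both $\delta_\ell$ and $C_\ell$ have conditional mean zero given $(X_\ell, W^Q_\ell, W^K_\ell)$ by the independence and zero-mean of $W^V_\ell$, and $\epsilon_\ell$ depends only on $(X_\ell, W^Q_\ell, W^K_\ell)$, so the three blocks are conditionally uncorrelated.

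Next I would Taylor-expand $\epsilon_\ell$ via \cref{eq:taylor-exp} with $\tau = \tau_0\sqrt{nn_k}$. Each entry of $Y_\ell$, conditional on $X_\ell$, is a centered sum of $n_k$ iid bilinear forms in $(W^Q_\ell, W^K_\ell)$ with variance of order $n_k V^{\alpha\alpha} V^{\beta\beta}$, so the moment lemma \cref{lemma:mom-Softmax-taylor} gives
\begin{equation*}
\mathbb{E}[\epsilon^{\alpha\nu} \mid X_\ell] = \frac{S_2^{\alpha\nu}}{2\tau_0^2\, nm} + O(n^{-3/2}), \qquad \mathbb{E}[\epsilon^{\alpha\nu}\epsilon^{\beta\kappa} \mid X_\ell] = \frac{S_1^{\alpha\nu,\beta\kappa}}{\tau_0^2\, nm^2} + O(n^{-3/2}).
\end{equation*}
Plugging back into the decomposition, $\gamma^2\mathbb{E}[\epsilon V + V\epsilon^\top \mid X_\ell]$ reproduces the $S_2$-part of $b$ and $\gamma^2\mathbb{E}[\epsilon V\epsilon^\top \mid X_\ell]$ reproduces the $S_1$-part, together matching $b(V_\ell)/n$. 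For the diffusion, $\gamma^4 \mathrm{Var}(\delta_\ell) + \lambda^2\gamma^2 \mathrm{Var}(C_\ell) = \gamma^2(\gamma^2 + 2\lambda^2) \Sigma_{\text{lin}}(V_\ell)/n$, which simplifies to $\gamma^2(2-\gamma^2)\Sigma_{\text{lin}}(V_\ell)/n$ via $\lambda^2+\gamma^2=1$, while $\gamma^4\mathrm{Cov}(\epsilon_\ell V_\ell + V_\ell\epsilon_\ell^\top)$ assembles the $\mathcal{A}$ block from the same leading $S_1$ moment, recovering $\Sigma(V_\ell)/n$.

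Finally I would verify the standard hypotheses of the convergence theorem: matching of the first two conditional moments; a uniform $O(n^{-3/2})$ bound on the conditional third moment of $V_{\ell+1}-V_\ell$ using Gaussian moment bounds on $W^Q,W^K,W^V$ and the uniform bound $\|\epsilon_\ell\|_\infty \leq 1$ (centered Softmax is bounded); and tightness. Since $b$ and $\Sigma$ are polynomials in $V$ and hence only locally Lipschitz, convergence is only obtained up to stopping times $T_r$, which is exactly the local convergence stated in \cref{defn:local_conv}. The main obstacle, I expect, is uniform control of the Taylor remainder for the Softmax, since the polynomial remainder bound of \cref{eq:taylor-exp} fails when the logits $Y_\ell$ are atypically large. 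This is handled by a standard truncation: on the event $\{\|Y_\ell\|_\infty \lesssim n^{1/4}\sqrt{n_k}\}$ the third-order Taylor remainder is of order $n^{-3/2}$, while on its complement sub-Gaussian concentration of $Y_\ell$ given $\{\|V_\ell\|\leq r\}$ yields arbitrary polynomial decay, so the trivial bound $\|\epsilon_\ell\|_\infty \leq 1$ already produces an $o(1/n)$ contribution in expectation. Combining these bounds gives the required uniform $o(1/n)$ control of the remainder in both the drift and diffusion up to time $T_r$.
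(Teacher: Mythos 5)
Your proposal is correct and follows essentially the same Markov-chain-to-SDE strategy as the paper's proof, with the same key ingredients (Taylor expansion of the Softmax in $\tau^{-1}$, computation of the $Y_\ell$-moments giving $S_1$ and $S_2$, Gaussian calculus for the $W^V$ fluctuations, and local convergence via \cref{prop:conv_markov_chain_to_sde}). The one genuine organizational difference is the decomposition: the paper keeps $A_\ell$ intact inside the terms $\mathcal{T}_1 = \tfrac{1}{n}(X W^{V,\top} X^\top A^\top + A X W^V X^\top)$ and $\mathcal{T}_2 = \tfrac{1}{n\sqrt{n}} A X W^V W^{V,\top} X^\top A^\top$ and computes joint second and fourth moments of $A_\ell$ (\cref{lemma:mom2-stable-attention}, \cref{lemma:mom4-stable-attention}, \cref{lemma:cov-stable-attention}), whereas you factor out $\epsilon_\ell = A_\ell - I$ and the Wishart fluctuation $\delta_\ell$ at the outset, separating the increment into three conditionally uncorrelated blocks. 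This buys a cleaner causal picture: the drift comes entirely from $\mathbb{E}[\epsilon V + V\epsilon^\top]$ (the $S_2$ part) plus $\mathbb{E}[\epsilon V \epsilon^\top]$ (the $S_1$ part); the $\mathcal{A}$-block of the diffusion comes solely from $\mathrm{Cov}(\epsilon V + V\epsilon^\top)$ and thus only needs second moments of $\epsilon$, avoiding the explicit fourth-moment computation of $A$ (which in the paper reduces to pairwise $S_1$ terms anyway since all but the leading $\delta$-terms are suppressed); and the $\gamma^2(2-\gamma^2)\Sigma_{\text{lin}}$ part falls out transparently of $\gamma^4\mathrm{Var}(\delta_\ell) + 2\lambda^2\gamma^2\Sigma_{\text{lin}}/n$. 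Your verification of conditional uncorrelatedness (odd $W^V$-degree for the $(\delta,C)$ and $(b,C)$ crosses, conditional-mean-zero of $\delta_\ell$ for the $(\delta, \epsilon)$ cross) is correct. One small addition worth flagging: you are more explicit than the paper about controlling the Taylor remainder of the centered Softmax (where the polynomial remainder bound degrades for atypically large logits), proposing the correct fix of sub-Gaussian truncation on $\{\|Y_\ell\|_\infty \lesssim n^{1/4}\sqrt{n_k}\}$ combined with the trivial boundedness of the centered Softmax on the complement, which establishes the uniform $O(n^{-3/2})$ moment bound required by the remainder hypothesis in \cref{prop:conv_markov_chain_to_sde}. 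The paper largely elides this point, writing $O(\tau^{-3})$ without spelling out the uniform-in-moments justification, so your treatment is if anything slightly more careful.
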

The drift depends on the shaped attention mechanism through $S_1^{\alpha\delta,\beta\omega}$ and $S_2^{\alpha\delta}$, the moments of the first and second order terms of the Softmax's Taylor expansion. On the other hand, the diffusion term depends on the attention solely through $S_1$, present in the additional term $\mathcal{A}^{\alpha\beta,\delta\omega}$. The presence of $\mathcal{A}^{\alpha\beta,\delta\omega}$ is an intriguing difference compared to shaped ReLU networks, where the diffusion is not affected by the activation function. Both components of the SDE depend on averages over the tokens, reflecting the mixing property of the self-attention mechanism, in which every pair of tokens is compared through dot products to form the attention weights. Finally, notice how the residual branch parameter $\gamma^2$ has a dampening effect on the scale of both the drift and the diffusion in a similar way as in fully-connected residual network. 

We are now ready to introduce the full shaped Transformer architecture, where we combine the attention and residual layers:
\begin{equation}
\label{eq:shaped_transformer}
    Z_{\ell} = \lambda X_\ell + \gamma A_\ell X_\ell \frac{1}{\sqrt{n}}W^V_\ell \,, 
    \quad 
    X_{\ell + 1} = \lambda Z_\ell + \gamma \sigma_{s}\left( Z_{\ell} \frac{1}{\sqrt{n}}W^{\text{pre}}_\ell\right)
    \sqrt{\frac{c}{n}} W^{\text{post}}_\ell \,,
\end{equation}
where $A_\ell$ is the shaped attention defined by \cref{eq:shaped-attention}.
We note that covariance SDE handle stacking of different layer types very conveniently by simply adding the drift and covariance of the diffusion coefficients, which we summarize in the Corollary below. 

\begin{restatable}[Shaped Transformer Covariance SDE]{cor}{transformersde}
\label{cor:full_transformer}
Let $X_\ell$ be the hidden layers of a shaped transformer defined in \cref{eq:shaped_transformer} with parameters $\lambda^2 + \gamma^2 = 1$ and $\tau = \tau_0 \sqrt{n n_k}$, where $\lambda,\gamma,\tau_0$ all do not depend on $d,n$. 
Then the feature covariance $V_\ell$ converges locally to the solution of the following SDE (in the sense of \cref{defn:local_conv}) 
\begin{equation}
    dV_t = [b(V_t) + b_{\text{res}}(V_t)] \, dt + 
    [\Sigma(V_t) + \Sigma_{\text{res}}(V_t) ]^{1/2} \, dB_t \,, 
\end{equation}
where the coefficients are defined in \cref{thm:sde-resnet} and \cref{thm:sde-attention}. 
\end{restatable}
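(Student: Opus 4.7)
The statement is a ``stacking'' claim: one transformer block in \cref{eq:shaped_transformer} consists of one attention sub-layer (analysed in \cref{thm:sde-attention}) followed by one shaped-MLP sub-layer (analysed in \cref{thm:sde-resnet}), and the proof amounts to showing that the two one-step Markov updates compose into a single update whose drift and diffusion coefficients are the sums of the individual ones. The crucial structural input is that the attention weights $(W^Q_\ell, W^K_\ell, W^V_\ell)$ and the MLP weights $(W^{\text{pre}}_\ell, W^{\text{post}}_\ell)$ are independent, so the two noise contributions are conditionally independent and their covariances add.

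\textbf{Per-block expansion.} Let $\widetilde V_\ell := \tfrac{1}{n} Z_\ell Z_\ell^\top$ be the intermediate covariance after the attention half of block $\ell$, and $V_{\ell+1} := \tfrac{1}{n} X_{\ell+1} X_{\ell+1}^\top$. Each sub-layer update has the approximate Euler form of \cref{eq:markov_chain_euler}: \cref{thm:sde-attention}, via its internal one-step expansion, gives
\begin{equation*}
\widetilde V_\ell = V_\ell + \tfrac{1}{n}\, b(V_\ell) + \tfrac{1}{\sqrt{n}}\, \Sigma(V_\ell)^{1/2} \xi^{\text{att}}_\ell + O(n^{-3/2}),
\end{equation*}
and \cref{thm:sde-resnet}, applied conditionally on $Z_\ell$, gives
\begin{equation*}
V_{\ell+1} = \widetilde V_\ell + \tfrac{1}{n}\, b_{\text{res}}(\widetilde V_\ell) + \tfrac{1}{\sqrt{n}}\, \Sigma_{\text{res}}(\widetilde V_\ell)^{1/2} \xi^{\text{mlp}}_\ell + O(n^{-3/2}),
\end{equation*}
where $\xi^{\text{att}}_\ell$ and $\xi^{\text{mlp}}_\ell$ are mean-zero, identity-covariance, and, given $V_\ell$, independent.

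\textbf{Combining the two half-steps.} On the stopping domain $\{\|V\|\le r\}$ of \cref{defn:local_conv}, both $b_{\text{res}}$ and $\Sigma_{\text{res}}^{1/2}$ are locally Lipschitz, so the bound $\widetilde V_\ell - V_\ell = O(n^{-1/2})$ lets us replace $\widetilde V_\ell$ by $V_\ell$ inside the MLP coefficients at the cost of $O(n^{-3/2})$ drift and $O(n^{-1})$ diffusion corrections. Adding the two expansions then yields
\begin{equation*}
V_{\ell+1} = V_\ell + \tfrac{1}{n}\,[b(V_\ell) + b_{\text{res}}(V_\ell)] + \tfrac{1}{\sqrt{n}}\bigl[\Sigma(V_\ell)^{1/2}\xi^{\text{att}}_\ell + \Sigma_{\text{res}}(V_\ell)^{1/2}\xi^{\text{mlp}}_\ell\bigr] + O(n^{-3/2}),
\end{equation*}
and by the conditional independence of the two noise vectors, the bracketed term has zero conditional mean and conditional covariance $\Sigma(V_\ell) + \Sigma_{\text{res}}(V_\ell)$. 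This is exactly the canonical form \cref{eq:markov_chain_euler} with coefficients $(b + b_{\text{res}},\, \Sigma + \Sigma_{\text{res}})$.

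\textbf{Convergence and main obstacle.} From here, local weak convergence to the claimed SDE follows by the same Markov-chain-to-SDE argument already used inside \cref{thm:sde-resnet} and \cref{thm:sde-attention}: for any smooth test function $f$ one checks that $n\,\mathbb{E}[f(V_{\ell+1}) - f(V_\ell)\mid V_\ell]$ converges locally uniformly to the generator $\langle b + b_{\text{res}}, \nabla f\rangle + \tfrac{1}{2}\mathrm{tr}[(\Sigma + \Sigma_{\text{res}})\,\nabla^2 f]$, combined with fourth-moment bounds on the increments and tightness in the Skorohod topology. On $\{\|V\|\le r\}$ each of these ingredients decomposes into the analogous ingredient already proved separately for the two sub-layers, with no cross terms at leading order thanks to the independence argument above. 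The main technical obstacle is inherited, not new: because $b$ and $\Sigma$ have polynomial rather than Lipschitz growth in $V$, convergence cannot be global, which forces the local formulation via the stopping times $T_r$ of \cref{defn:local_conv}; no additional estimates beyond those already established for \cref{thm:sde-resnet,thm:sde-attention} are required.
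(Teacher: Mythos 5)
Your proposal is correct and follows essentially the same route as the paper: both decompose the block into the attention half-step and the MLP half-step, use the $O(n^{-1/2})$ smallness of the intermediate increment to replace the intermediate covariance by $V_\ell$ in the MLP coefficients, and invoke independence of the two weight families so that the two diffusion contributions add. The only cosmetic difference is that you merge the two noise vectors into a single one with covariance $\Sigma + \Sigma_{\text{res}}$ at the discrete Markov-chain level, whereas the paper first writes the limit as an SDE driven by two independent Brownian motions and then observes $\Sigma^{1/2}\,dB_t + \Sigma_{\text{res}}^{1/2}\,dB_t' \overset{d}{=} (\Sigma + \Sigma_{\text{res}})^{1/2}\,dB_t$; these are equivalent.
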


\subsection{On Finite Time Stability of the SDE and Shaped Attention Networks}

Although we did not observe numerical instability in majority of our simulations of the shaped attention networks and the corresponding SDE, we did observe that the drift component $b(V_t)$ in \cref{thm:sde-attention} is cubic in the entries of $V_t$. 
Whenever the drift is not Lipschitz as in this case, we do not have general guarantees for the existence of a solution for all time (see the Feller test for explosions \cite[Theorem 5.5.29]{karatzas2012brownian}). 
In fact, MLPs with smooth activations also yield non-Lipschitz drift coefficients as seen in \citet{li2022neural}. 

However, locally Lipschitz coefficients are sufficient to guarantee the existence of local solutions, in the sense of up to a stopping time \cite[Proposition 6.9]{miller15stochastic}.
Not only does this fact help us establish a precise notion of convergence (\cref{defn:local_conv}), we can also study the practical implications of this for finite sized attention networks. 
More specifically, we can inspect the effect of architectural changes to a stopping time. 

To demonstrate the potential numerical instabilities, we had to choose an \emph{adversarial} set of parameters:
in particular, an unrealistically large norm (approx. $10\sqrt{n}$) for the initial tokens $X_0$, which enlarges the eigenvalues of $V_0$ to the order of $100$. 
Given these initial conditions and a large residual connection weight $\gamma$, we were able to consistently generate numerically unstable behaviour in shaped attention networks (see \cref{fig:stability} (left)). 

That being said, it is very straight forward to stabilize the network by tweaking parameters such as $\gamma, \tau_0$ and the depth-to-width ratio of the network. 
We demonstrate the effect of tuning $\gamma$ on both sample trajectories of the maximum eigenvalue of $V_\ell$ and the stopping time in \cref{fig:stability}. 
As we may intuitively expect, tuning $\gamma$ smaller will delay the time scale of numerical instabilities, hence allowing for larger depth networks to remain stable.

\begin{figure}[t!]
         \centering
        \includegraphics[width= 0.95 \textwidth]{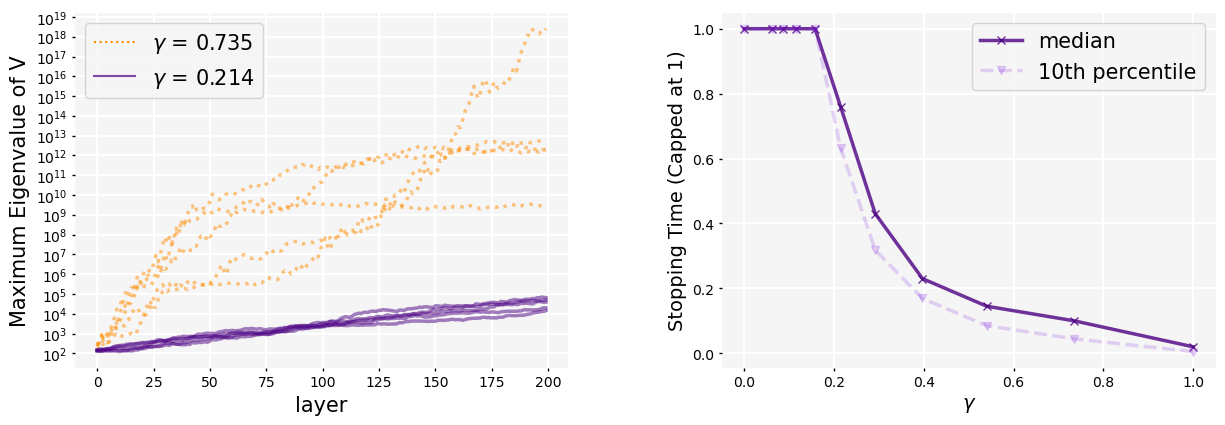}
\caption{
Left: Trajectories of the maximum eigenvalue of the covariance matrix in a shaped attention network, with \emph{adversarially} large initial condition. 
Right: Stopping time of the shaped attention neural network, capped at 1. 
Stopping time is defined as $t^* = d^*/n$ with $d^*$ the maximum depth beyond which one of the eigenvalues of the covariance matrix exceeds $10^4$ or drops below $10^{-4}$. 
Simulated with $n=d=200$, $\tau_0=1$, and $100$ samples used for median and $10$th percentile. 
}
\label{fig:stability}
\end{figure}

\section{Discussion}
\label{sec:discussion}

\paragraph{Architecture Design and Hyperparameter Tuning.} 
Previous work have demonstrated the practical impact scaling limits can have on designing activation functions \cite{martens2021rapid,zhang2022deep} and tuning hyperparameters \cite{yang2022tensor}. 
We follow this line of motivations and proposed a novel attention mechanism, which successfully stabilizes the covariance structure in arbitrarily deep Transformers (e.g. \cref{fig:rho_path_density}). 
The natural next step is to investigate the scaling of gradients in the infinite-depth-and-width limit. 
As \citet{yang2022tensor} illustrated, the existence of an infinite-width limit for the gradient implies the optimal hyperparameters for the training algorithm will also converge.
This type of results allows for tuning of hyperparameters on networks with a much smaller width, yet extends easily to arbitrarily large networks that approximates the same limit, saving massive amounts of computing cost in the process. 
Given the existence of an infinite-depth-and-width limit for the forward pass, we believe it's possible to extract optimal hyperparameters from networks with not only a much smaller width, but \emph{smaller depth} as well.

\paragraph{Preliminary Experiments.}
Although this work is primarily theoretical, it is important to consider whether or not the proposed architecture is useful in practice. 
Given limited computing resources, we chose to only briefly test the feasibility of training the shaped Transformer. 
Nevertheless, our preliminary experiments show promising results when it comes to training stability.
In particular, the shaped Transformer (without LayerNorm) does indeed train at approximately the same speed as well tuned Transformer architectures. 
Full details of the experiment and results can be found in \cref{sec:experiments}. 
A more comprehensive set of experiments with different tasks, datasets, and larger networks will be required to confidently determine the practical feasibility of the shaped Transformer, which we defer to future work.  

\paragraph{Training Dynamics and Generalization.}
As mentioned in the introduction, the limitations of infinite-width NTK theories motivates our study of the proportional infinite-depth-and-width limit. 
In particular, to address many of the open problems in deep learning theory, we need a faithful and tractable description of training dynamics. 
Given the results at initialization, the proportional limit holds the potential for such a theory of training as well. 
Another promising indicator is that deep networks learn features in the proportional regime \cite{hanin2019finite}, which has been identified as a key advantage of neural networks over kernel methods \cite{yang2020feature,ghorbani2020neural,abbe2022merged,ba2022high,damian2022neural,mousavi2022neural,abbe2023sgd,berthier2023learning}.
A precise theory of training will help us understand other types of instabilities during training and improve existing optimization methods. 
Furthermore, determining the network which training converges to is a necessary step towards a theory of generalization, as demonstrated by the infinite-width approach \cite{bartlett2021deep}. 
In light of our results, we believe that our theory sets the stage for future work on training and generalization in deep learning.

\section*{Acknowledgement}

CL and ML would like to thank Keiran Paster for insightful discussions.
LN would like to thank Sotiris Anagnostidis for support in pre-processing the dataset used for the training experiments of this manuscript. 
ML is supported by the Ontario Graduate Scholarship and Vector Institute. 
DMR is supported in part by Canada CIFAR AI Chair funding through the Vector Institute, an NSERC Discovery Grant, Ontario Early Researcher Award, a stipend provided by the Charles Simonyi Endowment, and a New Frontiers in Research Exploration Grant.

\medskip
% \bibliographystyle{plainnat}
% \bibliography{biblio}
\printbibliography

\newpage 

\appendix

\section{Preliminaries: Covariance SDE Framework}
\label{app:cov_sde}

In this section, we will review existing results on Markov chain convergence to an SDE, as well as existing results for the covariance SDE. 
See also the Appendix of \citet{li2022neural} for more details. 

Firstly, we will define the Skorohod $J_1$-topology, or just the Skorohod topology for short \cite[Appendix 5]{kallenberg2021foundations}. 
The topology is used to describe convergence of continuous time processes with discontinuities, in particular, Markov chains with a continuous time interpolation fits in this category. 
Let $S$ be a complete separable metric space, and $D_{\mathbb{R}_+, S}$ be the space of c\`adl\`ag functions (right continuous with left limits) from $\mathbb{R}_+ \to S$. 
We use $x_n \xrightarrow{ul} x$ to denote locally uniform convergence (uniform on compact subsets of $\mathbb{R}_+$), and consider the class of bijections $\lambda$ on $\mathbb{R}_+$ so that $\lambda$ is strictly increasing with $\lambda_0 = 0$ (can be interpreted as a time change). 

\begin{defn}
We define \textbf{Skorohod convergence} $x_n \xrightarrow{s} x$ on $D_{\mathbb{R}_+, S}$ if there exists a sequence of bijections $\lambda_n$ satisfying above conditions and 
\begin{equation}
    \lambda_n \xrightarrow{ul} \text{Id} \,, \quad 
    x_n \circ \lambda_n \xrightarrow{ul} x \,. 
\end{equation}
In particular, we call the topology that induces this convergence the \textbf{Skorohod topology} \cite[Theorem A5.3]{kallenberg2021foundations}.
\end{defn}

On a heuristic level, if we have sequences of Markov chains $Y^n$ that satisfy the following type of Euler updates 
\begin{equation}
    Y^n_{\ell+1} = Y^n_\ell + \frac{b(Y^n_\ell)}{n} + \frac{\sigma(Y^n_\ell) \, \xi_\ell}{\sqrt{n}} \,, 
\end{equation}
where $\xi_\ell$ are iid random variables with zero mean and identity covariance, 
then we can interpolate this process in continuous time with $X^n_t = Y^n_{\lfloor tn \rfloor}$ and show that as $n\to\infty$, we have that $X^n$ converges to the solution of the following SDE (weakly with respect to the Skorohod topology)
\begin{equation}
    dX_t = b(X_t) \, dt + \sigma(X_t) \, dB_t \,, \quad X_0 = \lim_{n\to\infty} Y^n_0 \,. 
\end{equation}

Our next theorem essentially weakens this result in several ways. 
Firstly, we don't need to take the step size $n^{-1}$, but instead replace it with $n^{-2p}$ for all $p>0$. 
Next, we can allow the update to contain higher order terms that vanish, in particular, any terms of order $O(n^{-2p-\delta})$ for all $\delta>0$. 
Here, we remind the readers that $O(n^{-p})$ denotes a random variable $Z$ such that $n^p Z$ has all moment bounded by a universal constant (independent of $n$). 
Thirdly, we can allow $b(y)$ to be random, or more precisely replace it with $\widehat b(y, \omega_n)$ such that $\mathbb{E} \widehat b(y, \omega_n) = b(y)$. 
Fourthly, we can also allow $b,\sigma$ to be a sequence $b_n, \sigma_n$ such that they converge to $b,\sigma$ on uniformly on compact sets. 
Finally, we can also weaken the topology of convergence to locally, that is all processes are stopped by a stopping time as in \cref{defn:local_conv}. 

Now we will state the main technical result in this section. 

\begin{prop}
[Convergence of Markov Chains to SDE, Proposition A.6, \cite{li2022neural}]
\label{prop:conv_markov_chain_to_sde}
Let $Y^n$ be a discrete time Markov chain on $\mathbb{R}^N$ defined by the following update 
for $p,\delta > 0$
\begin{equation}
    Y^n_{\ell+1} 
    = Y^n_\ell + \frac{\widehat b_n(Y^n_\ell, \omega^n_\ell )}{n^{2p}} 
    + \frac{\sigma_n(Y^n_\ell)}{n^p} \xi^n_\ell 
    + O(n^{-2p-\delta}) \,, 
\end{equation}
where $\xi^n_\ell \in \mathbb{R}^N$ are iid random variables with zero mean, identity covariance, and moments uniformly bounded in $n$. 
Furthermore, $\omega^n_\ell$ are also iid random variables such that 
$\mathbb{E}[ \widehat b_n(Y^n_\ell, \omega^n_\ell) | Y^n_\ell = y ] = b_n(y)$ 
and $\widehat b_n(y, \omega^n_\ell)$ has uniformly bounded moments in $n$. 
Finally, $\sigma_n$ is a deterministic function, and the remainder terms in $O(n^{-2p-\delta})$ have uniformly bounded moments in $n$. 

Suppose $b_n, \sigma_n$ are uniformly Lipschitz functions in $n$ and converges to $b, \sigma$ uniformly on compact sets, 
then in the limit as $n\to\infty$, the process $X^n_t = Y^n_{\lfloor t n^{2p} \rfloor}$ converges in distribution to the solution of the following SDE in the Skorohod topology of $D_{\mathbb{R}_+, \mathbb{R}^N}$ 
\begin{equation}
    d X_t = b(X_t) \, dt + \sigma(X_t) \, dB_t \,, 
    \quad 
    X_0 = \lim_{n\to\infty} Y_0^n \,. 
\end{equation}

Suppose otherwise $b_n, \sigma_n$ are only locally Lipschitz (but still uniform in $n$), then $X^n$ converges locally to $X$ in the same topology (see \cref{defn:local_conv}). 
More precisely, for any fixed $r > 0$, we consider the stopping times 
\begin{equation}
    \tau^n := \inf \left\{ t \geq 0 : |X^n_t| \geq r \right\} 
    \,, \quad 
    \tau := \inf \left\{ t \geq 0 : |X_t| \geq r
    \right\} \,, 
\end{equation}
then the stopped process $X^n_{t \wedge \tau^n}$ converges in distribution to the stopped solution $X_{t \wedge \tau}$ of the above SDE in the same topology. 
\end{prop}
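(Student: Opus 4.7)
\medskip

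\noindent\textbf{Proof Proposal for \cref{prop:conv_markov_chain_to_sde}.}
The plan is to follow the classical diffusion-approximation template for Markov chains (as in Ethier--Kurtz, Chapter~7, or Stroock--Varadhan): reduce weak convergence in the Skorohod topology to (i)~convergence of the local drift and local covariance, (ii)~a Lindeberg-type control on the jump sizes, (iii)~tightness of the interpolated processes, and (iv)~uniqueness of the limiting martingale problem. The error term $O(n^{-2p-\delta})$ and the auxiliary randomness $\omega^n_\ell$ will be absorbed into the error analysis by using that the problematic quantities have uniformly bounded moments.

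The first step is to compute the conditional one-step moments of the increment $\Delta Y^n_\ell := Y^n_{\ell+1} - Y^n_\ell$. Using $\mathbb{E}[\widehat b_n(y,\omega)]=b_n(y)$, independence of $\omega^n_\ell$ and $\xi^n_\ell$, $\mathbb{E}\xi^n_\ell = 0$, and $\mathbb{E}\xi^n_\ell \xi^{n\top}_\ell = I$, I obtain
\begin{equation*}
n^{2p}\,\mathbb{E}\bigl[\Delta Y^n_\ell \,\big|\, Y^n_\ell = y\bigr] = b_n(y) + O(n^{-\delta}),
\qquad
n^{2p}\,\mathbb{E}\bigl[\Delta Y^n_\ell \Delta Y^{n\top}_\ell \,\big|\, Y^n_\ell = y\bigr] = \sigma_n(y)\sigma_n(y)^\top + O(n^{-\min(2p,\delta)}),
\end{equation*}
where the cross terms between $\widehat b_n/n^{2p}$ and $\sigma_n\xi^n_\ell/n^p$ vanish by conditional independence, the variance of $\widehat b_n/n^{2p}$ is $O(n^{-4p})$ and hence negligible after scaling, and the error contributions are controlled using the uniform moment bounds on $\widehat b_n, \xi^n_\ell$, and the remainder. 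A similar computation using bounded fourth moments of $\xi^n_\ell$ yields $n^{2p}\mathbb{E}[|\Delta Y^n_\ell|^4 \mid Y^n_\ell=y] = O(n^{-2p})$, which implies the Lindeberg condition $n^{2p}\mathbb{E}[|\Delta Y^n_\ell|^2 \mathbf{1}\{|\Delta Y^n_\ell|>\varepsilon\} \mid \cdot] \to 0$.

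Next I would establish tightness of $\{X^n\}$ in $D_{\mathbb{R}_+,\mathbb{R}^N}$ via Aldous' criterion, combined with the moment estimate $\mathbb{E}|X^n_{t+h}-X^n_t|^2 \le C h$ following from the one-step covariance bound and telescoping. Under the global Lipschitz assumption on $b_n,\sigma_n$ (uniform in $n$), the solution of the limiting SDE is pathwise unique, hence the martingale problem associated with the generator $\mathcal{L}f(x) = b(x)^\top \nabla f(x) + \tfrac{1}{2}\mathrm{tr}(\sigma(x)\sigma(x)^\top \nabla^2 f(x))$ is well-posed. For any $f \in C_c^\infty(\mathbb{R}^N)$ I would form $M^n_t := f(X^n_t) - \sum_{k< t n^{2p}} \mathbb{E}[f(Y^n_{k+1})-f(Y^n_k) \mid Y^n_k]$ and expand $f$ by Taylor to second order; the step-one moment computations above together with $b_n \to b$, $\sigma_n\sigma_n^\top \to \sigma\sigma^\top$ uniformly on compact sets show that the compensator converges to $\int_0^t \mathcal{L}f(X_s)\,ds$ uniformly on compacts. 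Combined with tightness, any subsequential limit solves the martingale problem, and well-posedness pins down the unique limit $X$, yielding weak convergence $X^n \Rightarrow X$ in Skorohod topology.

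For the locally Lipschitz case I would use a standard truncation/localization argument. Fix $r>0$, pick a smooth cutoff $\chi_r$ supported in $\{|x|\le r+1\}$ with $\chi_r \equiv 1$ on $\{|x|\le r\}$, and define $\widetilde b_n = \chi_r b_n$, $\widetilde \sigma_n = \chi_r \sigma_n$; these are now globally Lipschitz uniformly in $n$ and converge to $\widetilde b,\widetilde\sigma$ uniformly on compact sets, so the first part applies to the truncated chain $\widetilde X^n$ converging to $\widetilde X$. Since $Y^n$ and its truncated version agree up to the exit time $\tau^n$ (and similarly $X$ agrees with $\widetilde X$ up to $\tau$), the stopped processes $X^n_{\cdot \wedge \tau^n}$ are equal in law to $\widetilde X^n_{\cdot \wedge \tau^n}$ and one obtains convergence in Skorohod topology to $X_{\cdot \wedge \tau}$ by the continuous-mapping theorem applied to the stopping map (which is continuous at paths that do not hit the boundary transversally, a property that holds a.s.\ for the limit diffusion).

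The main obstacle I anticipate is the bookkeeping around the auxiliary randomness $\widehat b_n(\cdot, \omega^n_\ell)$ and the higher-order remainders: one must carefully show that the variance of $\widehat b_n/n^{2p}$ does not leak into the diffusion at leading order and that the $O(n^{-2p-\delta})$ remainder does not spoil the martingale decomposition. Both are handled by invoking the hypothesized \emph{uniform in $n$} moment bounds, but the argument needs care to keep all constants depending only on $n$-uniform quantities, so that after scaling by $n^{2p}$ the contributions vanish. A secondary subtlety is the continuity of the stopping map in the localization step, which is standard but requires noting that the limit diffusion a.s.\ exits any open ball continuously.
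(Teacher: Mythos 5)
Your outline is correct in substance, but note that the paper does not prove this proposition at all: it is imported verbatim as Proposition~A.6 of \cite{li2022neural}, and the appendix only states it as background. Your reconstruction via one-step conditional moments, a Lindeberg bound from fourth moments, Aldous tightness, identification through the martingale problem, and truncation/localization for the locally Lipschitz case is exactly the standard Stroock--Varadhan/Ethier--Kurtz route that the cited source follows, so it matches the intended proof. Two small bookkeeping points if you were to write it out: independence of $\omega^n_\ell$ and $\xi^n_\ell$ is not explicitly assumed, so the cross term in the second-moment computation should be dismissed by Cauchy--Schwarz (it is $O(n^{-p})$ after rescaling) rather than by ``conditional independence''; and the uniform moment bounds on $\widehat b_n(y,\omega)$ and the remainder must be taken locally uniformly in $y$ for the compensator convergence on compacts to go through.
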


We will briefly recall the main result of \citet{li2022neural} next. 
As mentioned earlier in the text, the setting is for MLPs defined as follows 
\begin{equation}
    X_{\ell+1} = \sigma_s( X_\ell ) \sqrt{\frac{c}{n}} W_\ell \,, 
\end{equation}
where $\sigma_s(x) = s_+ \max(x,0) + s_- \min(x,0)$ with $s_\pm = 1 + \frac{c_\pm}{\sqrt{n}}$, $c^{-1} = \mathbb{E} \, \sigma_s(g)^2$ for $g\sim\mathcal{N}(0,1)$, $n$ is the width of the network, and $W_{\ell,ij} \sim \mathcal{N}(0,1)$ are iid random weights. 

Then it was shown that in the limit as $n,d\to\infty$ with $\frac{d}{n} = T > 0$, the covariance matrix $V_\ell = \frac{1}{n} X_\ell X_\ell^\top$ (or equivalent the post activation $\frac{c}{n} \sigma_s(X_\ell) \sigma_s(X_\ell)^\top$ since the activation becomes infinitesimal) converges the solution of the following SDE in the Skorohod topology \cite[Theorem 3.2]{li2022neural}
\begin{equation}
    dV_t = b_{\text{ReLU}}(V_t) \, dt + \Sigma_{\text{lin}}(V_t)^{1/2} \, dB_t \,, 
    \quad 
    V_0 = \frac{1}{n} X_0 X_0^\top \,, 
\end{equation}
where the coefficients were given in \cref{thm:sde-resnet}. 

We remark that if the output is defined as $X_{\text{out}} = \frac{1}{\sqrt{n}} W_{\text{out}} X_d \in \mathbb{R}^{n_{\text{out}}}$, then we can recover the output distribution as 
\begin{equation}
    X_{\text{out}} \sim N(0 , V_T \otimes I_{n_{\text{out}}}) \,, 
\end{equation}
where we treated $X_{\text{out}}$ as a vector in $\mathbb{R}^{ m n_{\text{out}}}$.

\section{SDE for Residual Network}

Recall that we adopt the following model:
\begin{equation}
    X_{\ell + 1} = \lambda X_\ell + \gamma \frac{1}{\sqrt{n}}\sigma_{s}\left(\sqrt{\frac{c}{n}}X_\ell W^{\text{pre}}_\ell\right)W^{\text{post}}_\ell ,
\end{equation}
where $\sigma_s(x) := s_+ \max(x, 0) + s_- \min(x, 0)$ is the shaped ReLU with slopes $s_{\pm} := 1 + \frac{c_\pm}{\sqrt{n}}$ for some constants $c_+, c_- \in \mathbb{R}$ . 
We let the weights be $(W^{\text{pre}}_\ell)_{ij}(W^{\text{post}}_\ell)_{ij} \overset{\text{iid}}{\sim} \mathcal{N}(0,1)$ and $c^{-1} = \mathbb{E} \sigma_s(g)^2$ for $g \sim \mathcal{N}(0,1)$ is the He initialization constant \cite{he2015delving}.

From here onwards, we will define the filtration $\mathcal{F}_\ell = \sigma( \{X_k\}_{k \in [\ell]})$, where $\sigma(\cdot)$ denotes the sigma-algebra generated by the random variable. 
Furthermore, we will define the conditional expectation $\mathbb{E}_\ell[\,\cdot\,] = \mathbb{E}[\,\cdot\,|\mathcal{F}_\ell]$.

We are interested in studying the \emph{neural covariance}, i.e. $V^{\alpha\beta}_{\ell} := \frac{c}{n}\dotp{x^{\alpha}_\ell}{x^{\beta}_\ell}$ where $\ell \in [d]$ indexes the layers and $d$ is the depth. In particular, we are interested in understanding the simultaneous limit $(n,d) \to \infty$, where the ratio $t=d/n$ remains constant. 

From now on, we will remove the dependence on $\ell$. Defining the residual branch as $f(X, W) := \frac{1}{\sqrt{n}}\sigma_{s}\left(\sqrt{\frac{c}{n}}X W^{\text{pre}}\right)W^{\text{post}}$, we can write $V^{\alpha\beta}$ as:
\begin{equation*}
    V^{\alpha\beta} = \lambda^2 X + \lambda\gamma \dotp{x^\alpha}{f(X, W)^\beta} + \lambda \gamma \dotp{x^\beta}{f(X, W)^\alpha} + \gamma^2 \dotp{f(X, W)^\alpha}{f(X, W)^\beta}.
\end{equation*}
For the cross product terms, we get:
\begin{align*}
    &\dotp{x^\alpha}{f(X, W)^\beta} = \frac{\sqrt{c}}{n}\dotp{x^\alpha}{\left(\sigma_s\left(X W^{\text{pre}}\right)W^{\text{post}}\right)^\beta} = \frac{\sqrt{c}}{n} \sum_{i,j}^n x^\alpha_i W^{\text{post}}_{ji} \sigma_s\left(\sum_{j'}x^\beta_{j'} W^{\text{pre}}_{j'j}\right) \,, \\
    & \dotp{x^\beta}{f(X, W)^\alpha} = \frac{\sqrt{c}}{n} \sum_{i,j}^n x^\beta_i W^{\text{post}}_{ji} \sigma_s\left(\sum_{j'}x^\alpha_{j'} W^{\text{pre}}_{j'j}\right). 
\end{align*}
For the term in $\gamma^2$:
\begin{align*}
    & \frac{c}{n^2}\dotp{\left(\sigma_s\left(X W^{\text{pre}}\right)W^{\text{post}}\right)^\alpha}{\left(\sigma_s\left(X W^{\text{pre}}\right)W^{\text{post}}\right)^\beta} = \frac{c}{n^2} \sum_{i=1}^n \left(\sigma_s\left(X W^{\text{pre}}\right)W^{\text{post}}\right)_{\alpha i}\left(\sigma_s\left(X W^{\text{pre}}\right)W^{\text{post}}\right)_{\beta i} \\
    &= \frac{c}{n^2} \sum_{i,j,j'=1}^n W^{\text{post}}_{ji} W^{\text{post}}_{j'i}\sigma_s\left(\sum_{k} x^\alpha_{k}  W^{\text{pre}}_{kj}\right)\sigma_s\left(\sum_{k'}x^\beta_{k'}W^{\text{post}}_{k'j'}\right) \,. 
\end{align*}

We will define the following terms 
\begin{align*}
    \mathcal{T}_1^{\alpha\beta} := \frac{c\sqrt{c}}{n\sqrt{n}} \sum_{i,j=1}^n W^{\text{post}}_{ji} \left(x^\alpha_i  \sigma_s\left(\sum_{j'}x^\beta_{j'} W^{\text{pre}}_{j'j}\right) +  x^\beta_i \sigma_s\left(\sum_{j'}x^\alpha_{j'} W^{\text{pre}}_{j'j}\right) \right) ,
\end{align*}
and 
\begin{align*}
    \mathcal{T}_2^{\alpha\beta} := \frac{c^2}{n^2\sqrt{n}} \sum_{i,j,j'=1}^n W^{\text{post}}_{ji} W^{\text{post}}_{j'i}\sigma_s\left(\sum_{k} x^\alpha_{k} W^{\text{pre}}_{kj}\right)\sigma_s\left(\sum_{k'}x^\beta_{k'}W^{\text{pre}}_{k'j'}\right).
\end{align*}

Hence we get the following update for $V^{\alpha\beta}_{\ell} := \frac{c}{n}\dotp{x^{\alpha}_\ell}{x^{\beta}_\ell}$:
\begin{equation*}
    V^{\alpha \beta}_{\ell + 1} =  \lambda^2 V^{\alpha \beta}_{\ell} + \frac{\gamma\lambda}{\sqrt{n}}\mathcal{T}_1^{\alpha\beta} + \frac{\gamma^2}{\sqrt{n}} \mathcal{T}_2^{\alpha\beta} \,. 
\end{equation*}

It is easy to see that the cross product terms have (conditional) mean zero. For the term with $\gamma^2$:
\begin{equation}
    \mathbb{E}_\ell[\mathcal{T}_2^{\alpha\beta}] = \sqrt{n}\sqrt{V^{\alpha\alpha}V^{\beta\beta}}cK_1(\rho^{\alpha\beta}),
\end{equation}
where $K_1(\rho^{\alpha\beta}) := \mathbb{E}[\sigma_s(g^\alpha)\sigma_s(g^\beta)]$ where $g^\alpha, g^\beta \sim \mathcal{N}(0,1)$ and $\mathbb{E}[g^\alpha g^\beta] = \frac{\dotp{x^\alpha}{x^\beta}}{\norm{x^\alpha}\norm{x^\beta}}$. 
Here we recall $\mathbb{E}_\ell[\,\cdot\,] = \mathbb{E}[\,\cdot\,|\mathcal{F}_\ell]$ is the conditional expectation given the sigma-algebra generated by $\mathcal{F}_\ell = \sigma(\{X_k\}_{k\in[\ell]})$.

\citet{li2022neural} showed that if the non linearity scaling exponent is $p=1/2$, then:
\begin{equation*}
    c K_1(\rho) = \rho + \frac{\nu(\rho)}{n} + \mathcal{O}(n^{-3/2}),
\end{equation*}
where $\nu(\rho) = \frac{(c_+ + c_-)^2}{2\pi}\left(\sqrt{1-\rho^2} + \rho\text{arccos}(\rho)\right)$. 
Using this result, and summing and subtracting the mean of $\mathcal{T}_2$:
\begin{align*}
    V^{\alpha \beta}_{\ell + 1} &=  \lambda^2 V^{\alpha \beta}_{\ell} + \gamma^2\sqrt{V^{\alpha\alpha}V^{\beta\beta}}cK_1(\rho^{\alpha\beta}) + \frac{\gamma\lambda}{\sqrt{n}}\mathcal{T}_1^{\alpha\beta} + \frac{\gamma^2}{\sqrt{n}}(\mathcal{T}_2^{\alpha\beta}-\mathbb{E}_\ell[\mathcal{T}_2^{\alpha\beta}])\\
    &=  \lambda^2 V^{\alpha \beta}_{\ell} + \gamma^2 V^{\alpha\beta}_\ell + \gamma^2 \sqrt{V^{\alpha\alpha}V^{\beta\beta}} \frac{\nu(\rho^{\alpha\beta})}{n} + \frac{\gamma\lambda}{\sqrt{n}}\mathcal{T}_1^{\alpha\beta} + \frac{\gamma^2}{\sqrt{n}} (\mathcal{T}_2^{\alpha\beta}-\mathbb{E}_\ell[\mathcal{T}_2^{\alpha\beta}]) + \mathcal{O}(n^{-3/2}) \,. 
\end{align*}

Using $\lambda^2 + \gamma^2 = 1$:
\begin{align*}
    V^{\alpha \beta}_{\ell + 1}
    &=   V^{\alpha \beta}_{\ell} + \gamma^2 \sqrt{V^{\alpha\alpha}V^{\beta\beta}} \frac{\nu(\rho^{\alpha\beta})}{n} + \frac{\gamma\lambda}{\sqrt{n}}\mathcal{T}_1^{\alpha\beta} + \frac{\gamma^2}{\sqrt{n}} (\mathcal{T}_2^{\alpha\beta} - \mathbb{E}_\ell[\mathcal{T}_2^{\alpha\beta}]) + \mathcal{O}(n^{-3/2}) \,. 
\end{align*}

Furthermore, we need second order moments of $\mathcal{T}_1$ and $\mathcal{T}_2$. We derive them in the following Lemmas:

\begin{lemma}
\label{lemma:mom-t1-double}
    \begin{align*}
    \mathbb{E}_\ell[\mathcal{T}_1^{\alpha\beta}\mathcal{T}_1^{\delta\omega}] &= V^{\alpha\delta} \sqrt{V^{\beta\beta}V^{\omega\omega}} cK_1(\rho^{\beta\omega}) + V^{\alpha\omega} \sqrt{V^{\beta\beta}V^{\delta\delta}} c K_1(\rho^{\beta\delta})\\
    &+ V^{\beta\delta}\sqrt{V^{\alpha\alpha}V^{\omega\omega}} c K_1(\rho^{\alpha\omega}) + V^{\beta\omega} \sqrt{V^{\alpha\alpha}V^{\delta\delta}} c K_1(\rho^{\alpha\delta}) \\
    &= 2 V^{\alpha\delta}V^{\beta\omega} + 2 V^{\alpha\omega} V^{\beta\delta} + \mathcal{O}(n^{-1}) \,. 
\end{align*}
\end{lemma}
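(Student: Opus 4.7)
The plan is to compute the conditional expectation by integrating out the two independent Gaussian weight matrices in turn, exploiting linearity of $\sigma_s$ within each half-line, and then invoke the Taylor expansion of $cK_1(\rho)$ already established by \citet{li2022neural}.

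First I would expand the product $\mathcal{T}_1^{\alpha\beta}\mathcal{T}_1^{\delta\omega}$ as a quadruple sum over indices $(i,j,i',j')$, producing four cross terms corresponding to the four ways of pairing the two summands inside each $\mathcal{T}_1$ factor (namely $(x^\alpha,\sigma(\cdot^\beta))\times(x^\delta,\sigma(\cdot^\omega))$ and the three analogues obtained by swapping $\alpha\leftrightarrow\beta$ or $\delta\leftrightarrow\omega$). Because $W^{\text{post}}_\ell$ is independent of $W^{\text{pre}}_\ell$ and of $\mathcal{F}_\ell$, I would apply the tower property and integrate $W^{\text{post}}$ out first, using $\mathbb{E}[W^{\text{post}}_{ji}W^{\text{post}}_{j'i'}]=\delta_{jj'}\delta_{ii'}$ to collapse the quadruple sum into a double sum over $(i,j)$.

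Next, each resulting term factorizes: an index-$i$ factor of the form $\sum_i x^\alpha_i x^\delta_i = \tfrac{n}{c}V^{\alpha\delta}$ (and its three symmetric siblings), times an index-$j$ factor $\sum_j \mathbb{E}_\ell\bigl[\sigma_s(\langle x^\beta,W^{\text{pre}}_{\cdot j}\rangle)\sigma_s(\langle x^\omega,W^{\text{pre}}_{\cdot j}\rangle)\bigr]$. For each $j$ the pair of inner products is a mean-zero bivariate Gaussian with covariance $\tfrac{n}{c}\bigl(\begin{smallmatrix}V^{\beta\beta} & V^{\beta\omega}\\ V^{\beta\omega} & V^{\omega\omega}\end{smallmatrix}\bigr)$, and positive-homogeneity of $\sigma_s$ lets me pull out the scale: the expectation evaluates exactly to $\tfrac{n\sqrt{V^{\beta\beta}V^{\omega\omega}}}{c}K_1(\rho^{\beta\omega})$. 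Summing over $j$ and combining with the $c^3/n^3$ prefactor from $(\mathcal{T}_1)^2$ yields $cV^{\alpha\delta}\sqrt{V^{\beta\beta}V^{\omega\omega}}K_1(\rho^{\beta\omega})$ for the first pairing; the other three pairings produce the remaining three terms in the lemma by direct relabeling.

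Finally, I would invoke the expansion $cK_1(\rho)=\rho+\nu(\rho)/n+O(n^{-3/2})$ (restated from \citet{li2022neural} earlier in this section) and use $\rho^{\beta\omega}\sqrt{V^{\beta\beta}V^{\omega\omega}}=V^{\beta\omega}$ to reduce each of the four terms to $V^{\cdot\cdot}V^{\cdot\cdot}+O(n^{-1})$; the first and fourth terms both contribute $V^{\alpha\delta}V^{\beta\omega}$, the second and third contribute $V^{\alpha\omega}V^{\beta\delta}$, giving the claimed $2V^{\alpha\delta}V^{\beta\omega}+2V^{\alpha\omega}V^{\beta\delta}+O(n^{-1})$. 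The only real bookkeeping hurdle is keeping the four pairings straight and ensuring that the $O(n^{-1})$ remainder has moments uniformly bounded in $n$, which follows from the fact that $V^{\alpha\alpha}$ has bounded conditional moments (inherited from $X_\ell$) and the remainder in the $K_1$ expansion is polynomial in $\rho$; no step is genuinely delicate beyond this.
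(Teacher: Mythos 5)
Your proof is correct and follows essentially the same route as the paper: both exploit the positive homogeneity of $\sigma_s$ to extract norms and reduce each inner product to a standardized Gaussian, integrate out $W^{\text{post}}$ via $\mathbb{E}[W^{\text{post}}_{ji}W^{\text{post}}_{j'i'}]=\delta_{jj'}\delta_{ii'}$ (collapsing the quadruple sum to a double sum), identify each surviving term as a product of a $V$-factor and a $cK_1(\rho)$-factor, and then substitute the expansion $cK_1(\rho)=\rho+O(n^{-1})$ from \citet{li2022neural}. Your bookkeeping of the four pairings and the resulting $2V^{\alpha\delta}V^{\beta\omega}+2V^{\alpha\omega}V^{\beta\delta}$ combination matches the paper's derivation exactly.
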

\begin{proof}
Recall the definition of $\mathcal{T}_1^{\alpha\beta}$:
\begin{align*}
    \mathcal{T}_1^{\alpha\beta} := \frac{c\sqrt{c}}{n\sqrt{n}} \sum_{i,j=1}^n W^{\text{post}}_{ji} \left(x^\alpha_i  \sigma_s\left(\sum_{j'}x^\beta_{j'} W^{\text{pre}}_{j'j}\right) +  x^\beta_i \sigma_s\left(\sum_{j'}x^\alpha_{j'} W^{\text{pre}}_{j'j}\right) \right) ,
\end{align*}
We have that:
\begin{align*}
    \mathbb{E}_\ell[\mathcal{T}_1^{\alpha\beta}\mathcal{T}_1^{\delta\omega}] &= \frac{c^3}{n^3} \sum_{iji'j'} \mathbb{E}\left[W_{ji} W_{j'i'}\right] \mathbb{E}_\ell\left[\left(x_i^\alpha \norm{x^\beta} \sigma_s(g^\beta_j) + x_i^\beta \norm{x^\alpha} \sigma_s(g^\alpha_j)\right) \left(x_i^\delta \norm{x^\omega} \sigma_s(g^\omega_j) + x_i^\omega \norm{x^\delta} \sigma_s(g^\delta_j)\right)\right] \\
    &= \frac{c^3}{n^3} \sum_{ij}\mathbb{E}_\ell\Big[ x_i^\alpha x_i^{\delta}\norm{x^\beta}\norm{x^{\omega} }\sigma_s(g_j^\beta)\sigma_s(g_j^\omega) + x_i^\alpha x_i^{\omega}\norm{x^\beta}\norm{x^{\delta} }\sigma_s(g_j^\beta)\sigma_s(g_j^\delta) \\
    &+ x_i^\beta x_i^{\delta}\norm{x^\alpha}\norm{x^{\omega} }\sigma_s(g_j^\alpha)\sigma_s(g_j^\omega) + x_i^\beta x_i^{\omega}\norm{x^\alpha}\norm{x^{\delta} }\sigma_s(g_j^\alpha)\sigma_s(g_j^\delta)\Big] \\
    &= V^{\alpha\delta} \sqrt{V^{\beta\beta}V^{\omega\omega}} cK_1(\rho^{\beta\omega}) + V^{\alpha\omega} \sqrt{V^{\beta\beta}V^{\delta\delta}} c K_1(\rho^{\beta\delta}) \\
    &+ V^{\beta\delta}\sqrt{V^{\alpha\alpha}V^{\omega\omega}} c K_1(\rho^{\alpha\omega}) + V^{\beta\omega} \sqrt{V^{\alpha\alpha}V^{\delta\delta}} c K_1(\rho^{\alpha\delta}) \, 
\end{align*}
\end{proof}

\begin{lemma}
\label{lemma:mom-t2-double}
    \begin{align*}
        &\mathbb{E}_\ell[\mathcal{T}_2^{\alpha\beta}] = c\sqrt{n} \sqrt{V^{\alpha\alpha}V^{\beta\beta}}K_1(\rho^{\alpha\beta}) = \sqrt{n} V^{\alpha\beta} + \frac{1}{\sqrt{n}}\sqrt{V^{\alpha\alpha}V^{\beta\beta}}\nu(\rho) + \mathcal{O}(n^{-1}) \,, \\ 
    % \end{align*}
    % % 
    % \begin{align*}
    &\mathbb{E}_\ell[\mathcal{T}_2^{\alpha\beta}\mathcal{T}_2^{\delta\omega}] - \mathbb{E}_\ell[\mathcal{T}_2^{\alpha\beta}]\mathbb{E}[\mathcal{T}_2^{\delta\omega}] 
    = 2 V^{\alpha\delta}V^{\beta\omega} + 2 V^{\alpha\omega}V^{\beta\delta} + \mathcal{O}(n^{-1}) \,. 
\end{align*}
\end{lemma}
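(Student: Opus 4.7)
The plan is to compute both expectations by iterating: first over $W^{\text{post}}$, treating the pre-activations as conditionally Gaussian functions of $W^{\text{pre}}$, and then over $W^{\text{pre}}$. Using positive homogeneity of $\sigma_s$, I write $\sigma_s\!\bigl(\textstyle\sum_k x_k^\alpha W^{\text{pre}}_{kj}\bigr) = \norm{x^\alpha}\,\sigma_s(g_j^\alpha)$ where the $g_j^\alpha$ are standard Gaussians with $\mathrm{Corr}(g_j^\alpha, g_j^\beta) = \rho^{\alpha\beta}$ and distinct columns $j$ are independent. Combined with $\norm{x^\alpha}^2 = n V^{\alpha\alpha}/c$, this reduces each moment to a sum of $W^{\text{post}}$ Gaussian integrals against Hermite-polynomial-like moments of $\sigma_s$.

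For the first moment, $\mathbb{E}[W^{\text{post}}_{ji}W^{\text{post}}_{j'i}] = \delta_{jj'}$ collapses the double sum over $(j,j')$ to the diagonal, and the remaining sum over $i$ produces a factor of $n$. A direct computation then yields $\mathbb{E}_\ell[\mathcal{T}_2^{\alpha\beta}] = c\sqrt{n}\sqrt{V^{\alpha\alpha}V^{\beta\beta}}K_1(\rho^{\alpha\beta})$, and plugging in the expansion $cK_1(\rho) = \rho + \nu(\rho)/n + O(n^{-3/2})$ established in \cite{li2022neural} delivers the second equality.

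For the second moment I apply Isserlis's theorem to the fourth-order product of $W^{\text{post}}$ entries. This produces three pairings: the \emph{internal} pairing $(W_{ji},W_{j'i})(W_{j_2 i_2},W_{j_2' i_2})$ which forces $j=j'$ and $j_2=j_2'$; and the two \emph{cross} pairings which force $i=i_2$ together with either $j=j_2,\,j'=j_2'$ or $j=j_2',\,j'=j_2$. Each pairing leaves a residual sum over free column indices of products of $\sigma_s$'s at correlated Gaussians, which I further split into \emph{off-diagonal} and \emph{diagonal} contributions: if the two free column indices differ, the Gaussians factor and the expectation becomes a product of $K_1$'s; if they coincide, all four $\sigma_s$'s act on the same Gaussian, giving a four-point moment. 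Applying $cK_1 = \rho + O(n^{-1})$ to the off-diagonal parts converts them into the corresponding entries of $V$.

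The main obstacle is the internal pairing. Its off-diagonal part, of order $n$, reproduces $\mathbb{E}_\ell[\mathcal{T}_2^{\alpha\beta}]\,\mathbb{E}_\ell[\mathcal{T}_2^{\delta\omega}]$ up to the claimed error and therefore cancels when subtracting the product of means. Crucially, the $n$ diagonal terms of this same pairing survive and produce an extra $O(1)$ piece proportional to $\mathbb{E}[\sigma_s(g^\alpha)\sigma_s(g^\beta)\sigma_s(g^\delta)\sigma_s(g^\omega)]$. To leading order $\sigma_s = \mathrm{id}$, and Isserlis applied once more yields the Wick decomposition $\rho^{\alpha\beta}\rho^{\delta\omega}+\rho^{\alpha\delta}\rho^{\beta\omega}+\rho^{\alpha\omega}\rho^{\beta\delta}$; the first term is absorbed into the cancellation with the product of means, while the latter two contribute $V^{\alpha\delta}V^{\beta\omega}+V^{\alpha\omega}V^{\beta\delta}$. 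The two cross pairings each contribute one further copy of $V^{\alpha\delta}V^{\beta\omega}$ and $V^{\alpha\omega}V^{\beta\delta}$ respectively via their off-diagonals (their diagonals are subleading), and summing everything produces the overall factor of two in the final claim. The error bookkeeping tracks the $O(n^{-1/2})$ shaped-ReLU corrections in $\sigma_s-\mathrm{id}$ and the $O(n^{-1})$ remainder in the $cK_1$ expansion, all of which fold into the stated error.
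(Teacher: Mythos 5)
Your proof is correct and follows essentially the same route as the paper: positive homogeneity of $\sigma_s$ to pull out the norms and reduce to Gaussian column variables, Isserlis on $W^{\text{post}}$ to obtain the three pairings $\delta_{jj'}\delta_{kk'}$, $\delta_{jk}\delta_{ii'}\delta_{j'k'}$, $\delta_{jk'}\delta_{ii'}\delta_{j'k}$, a diagonal/off-diagonal split of the residual column sums into $K_2$ and $K_1 K_1$ factors, and the cancellation of the internal pairing's off-diagonal against the product of means. The accounting of who cancels whom and who contributes which factor of $V^{\alpha\delta}V^{\beta\omega}+V^{\alpha\omega}V^{\beta\delta}$ matches the paper's computation exactly.
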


\begin{proof}
Recall the definition of $\mathcal{T}_2^{\alpha\beta}$:
    \begin{align*}
    \mathcal{T}_2^{\alpha\beta} :&= \frac{c^2}{n^2\sqrt{n}} \sum_{i,j,j'=1}^n W^{\text{post}}_{ji} W^{\text{post}}_{j'i}\sigma_s\left(\sum_{k} x^\alpha_{k} W^{\text{pre}}_{kj}\right)\sigma_s\left(\sum_{k'}x^\beta_{k'}W^{\text{pre}}_{k'j'}\right) \\
    &= \frac{c^2}{n^2\sqrt{n}} \norm{x^\alpha}\norm{x^\beta}\sum_{i,j,j'=1}^n W^{\text{post}}_{ji} W^{\text{post}}_{j'i}\sigma_s(g_j^\alpha)\sigma_s(g_{j'}^\beta)
\end{align*}
For the mean, using the independence between $W^{\text{pre}}$ and $W^{\text{post}}$, we have that:
\begin{align*}
    \mathbb{E}_\ell\left[\mathcal{T}_2^{\alpha\beta} \right] &= \frac{c^2}{n^2\sqrt{n}} \norm{x^\alpha}\norm{x^\beta}\sum_{i,j,j'=1}^n \mathbb{E}\left[W^{\text{post}}_{ji} W^{\text{post}}_{j'i}\right]\mathbb{E}\left[\sigma_s(g_j^\alpha)\sigma_s(g_{j'}^\beta)\right] \\
    &= \frac{c^2}{n\sqrt{n}} \norm{x^\alpha}\norm{x^\beta}\sum_{j=1}^n \mathbb{E}\left[\sigma_s(g_j^\alpha)\sigma_s(g_{j}^\beta)\right] \\
    &= \frac{c^2}{\sqrt{n}} \norm{x^\alpha}\norm{x^\beta} K_1(\rho^{\alpha\beta}) \\
    &=  c\sqrt{n} \sqrt{V^{\alpha\alpha}V^{\beta\beta}}K_1(\rho^{\alpha\beta}) ,
\end{align*}
which is the desired result. The final expression is the result of the aforementioned expansion for $K_1$ \citep{li2022neural}.
For the covariance, we have that:
\begin{align*}
    \mathbb{E}_\ell[\mathcal{T}_2^{\alpha\beta}\mathcal{T}_2^{\delta\omega}] &= \frac{c^4}{n^5} \norm{x^\alpha}\norm{x^\beta} \norm{x^\delta}\norm{x^\omega}\sum_{i,j,j',i',k,k'} \mathbb{E}\left[W^{\text{post}}_{ji} W^{\text{post}}_{j'i}W^{\text{post}}_{ki'} W^{\text{post}}_{k'i'}\right] \mathbb{E}\left[\sigma_s(g_j^\alpha)\sigma_s(g_{j'}^\beta)\sigma_s(g_k^\delta)\sigma_s(g_{k'}^\omega)\right] \\
    &=  \frac{c^2}{n^3} \sqrt{V^{\alpha\alpha}V^{\beta\beta}V^{\delta\delta}V^{\omega\omega}}\sum_{i,j,j',i',k,k'} \left(\delta_{jj'}\delta_{kk'} + \delta_{jk}\delta_{ii'}\delta_{j'k'} + \delta_{jk'}\delta_{ii'}\delta_{j'k}\right)\mathbb{E}\left[\sigma_s(g_j^\alpha)\sigma_s(g_{j'}^\beta)\sigma_s(g_k^\delta)\sigma_s(g_{k'}^\omega)\right] \,.
\end{align*}
Let's look at each term of the sum separately. For the first term we have that:
\begin{align*}
    &\sum_{i,j,j',i',k,k'} \delta_{jj'}\delta_{kk'}\mathbb{E}\left[\sigma_s(g_j^\alpha)\sigma_s(g_{j'}^\beta)\sigma_s(g_k^\delta)\sigma_s(g_{k'}^\omega)\right] \\
    &= n^2\sum_{j,k}\mathbb{E}\left[\sigma_s(g_j^\alpha)\sigma_s(g_{j}^\beta)\sigma_s(g_k^\delta)\sigma_s(g_{k}^\omega)\right] \\
    &= n^2 \sum_{j}\mathbb{E}\left[\sigma_s(g_j^\alpha)\sigma_s(g_{j}^\beta)\sigma_s(g_j^\delta)\sigma_s(g_{j}^\omega)\right] + n^2\sum_{j\neq k} \mathbb{E}\left[\sigma_s(g_j^\alpha)\sigma_s(g_{j}^\beta)\right]\mathbb{E}\left[\sigma_s(g_k^\delta)\sigma_s(g_{k}^\omega)\right] \\
    &= n^3 K_2(\rho^{\alpha\beta\delta\omega}) + n^3(n-1)K_1(\rho^{\alpha\beta})K_1(\rho^{\delta\omega}),
\end{align*}
where we have defined the fourth moment of the shaped activation: $K_2(\rho^{\alpha\beta\delta\omega}) := \mathbb{E}\left[\sigma_s(g^\alpha)\sigma_s(g^\beta)\sigma_s(g^\delta)\sigma_s(g^\omega)\right] $ for which it holds that \citep[][Lemma~C.2]{li2022neural}:
\begin{align*}
    K_2(\rho^{\alpha\beta\delta\omega}) = \mathbb{E}[g^\alpha g^\beta g^\delta g^\omega] + \mathcal{O}(n^{-1/2}) = \rho^{\alpha\beta}\rho^{\delta\omega} + \rho^{\alpha\delta}\rho^{\beta\omega} + \rho^{\alpha\omega}\rho^{\beta\delta}  + \mathcal{O}(n^{-1/2}).
\end{align*}

The other two summands can be solved similarly:
\begin{align*}
\sum_{i,j,j',i',k,k'}\delta_{jk}\delta_{ii'}\delta_{j'k'}\mathbb{E}\left[\sigma_s(g_j^\alpha)\sigma_s(g_{j'}^\beta)\sigma_s(g_k^\delta)\sigma_s(g_{k'}^\omega)\right] = n^{2} K_2(\rho^{\alpha\beta\delta\omega}) + n^2(n-1)K_1(\rho^{\alpha\delta})K_1(\rho^{\beta\omega}) \,, 
\end{align*}
and
\begin{align*}
\sum_{i,j,j',i',k,k'}\delta_{jk'}\delta_{ii'}\delta_{j'k}\mathbb{E}\left[\sigma_s(g_j^\alpha)\sigma_s(g_{j'}^\beta)\sigma_s(g_k^\delta)\sigma_s(g_{k'}^\omega)\right] = n^{2} K_2(\rho^{\alpha\beta\delta\omega}) + n^2(n-1)K_1(\rho^{\alpha\omega})K_1(\rho^{\beta\delta}) \,. 
\end{align*}
Hence, summing the three terms:
\begin{align*}
    \mathbb{E}_\ell[\mathcal{T}_2^{\alpha\beta}\mathcal{T}_2^{\delta\omega}] &= c^2\sqrt{V^{\alpha\alpha}V^{\beta\beta}V^{\delta\delta}V^{\omega\omega}} \Big(K_2(\rho^{\alpha\beta\delta\omega}) + (n-1)K_1(\rho^{\alpha\beta})K_1(\rho^{\delta\omega}) \\
    &+ \frac{1}{n}\left(K_2(\rho^{\alpha\beta\delta\omega}) + (n-1)K_1(\rho^{\alpha\delta})K_1(\rho^{\beta\omega})\right)+ \frac{1}{n}\left(K_2(\rho^{\alpha\beta\delta\omega}) + (n-1)K_1(\rho^{\alpha\omega})K_1(\rho^{\beta\delta})\right) \Big) \\
    &= c^2\sqrt{V^{\alpha\alpha}V^{\beta\beta}V^{\delta\delta}V^{\omega\omega}} \Big(K_2(\rho^{\alpha\beta\delta\omega}) + nK_1(\rho^{\alpha\beta})K_1(\rho^{\delta\omega}) - K_1(\rho^{\alpha\beta})K_1(\rho^{\delta\omega}) \\
    &+ K_1(\rho^{\alpha\delta})K_1(\rho^{\beta\omega}) + K_1(\rho^{\alpha\omega})K_1(\rho^{\beta\delta})\Big) + \mathcal{O}(n^{-1}).
\end{align*}
Now, subtracting $\mathbb{E}_\ell[\mathcal{T}_2^{\alpha\beta}]\mathbb{E}_\ell[\mathcal{T}_2^{\delta\omega}]$, using the aforementioned expansions for $K_1$ and $K_2$:
\begin{align*}
     \mathbb{E}_\ell[\mathcal{T}_2^{\alpha\beta}\mathcal{T}_2^{\delta\omega}] -  \mathbb{E}_\ell[\mathcal{T}_2^{\alpha\beta}]\mathbb{E}_\ell[\mathcal{T}_2^{\delta\omega}] &= c^2 \sqrt{V^{\alpha\alpha}V^{\beta\beta}V^{\delta\delta}V^{\omega\omega}} \left(\rho^{\alpha\beta}\rho^{\delta\omega} + \rho^{\alpha\delta}\rho^{\beta\omega} + \rho^{\alpha\omega}\rho^{\beta\delta} \right) \\
     &-V^{\alpha\beta}V^{\delta \omega} + V^{\alpha\delta}V^{\beta\omega} + V^{\alpha\omega}V^{\beta\delta} + \mathcal{O}(n^{-1}) \\
     &= 2  V^{\alpha\delta}V^{\beta\omega} + 2 V^{\alpha\omega}V^{\beta\delta} + \mathcal{O}(n^{-1})\; ,
\end{align*}
where in the final step we have used the fact that $c = 1 + \mathcal{O}(n^{-1/2})$. This completes the proof.
\end{proof}

Finally we also have that $\mathcal{T}_1$ and $\mathcal{T}_2$ are uncorrelated, i.e.
\begin{lemma}
\label{lemma:mom-t1-t2-double}
    \begin{equation*}
    \mathbb{E}_\ell[\mathcal{T}_1^{\alpha\beta}\mathcal{T}_2^{\delta\omega}] = 0 \,. 
\end{equation*}
\end{lemma}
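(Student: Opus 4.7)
The plan is to exploit two structural facts: (i) the weights $W^{\text{pre}}$ and $W^{\text{post}}$ are independent of each other and of $\mathcal{F}_\ell$, and (ii) odd moments of mean-zero jointly Gaussian random variables vanish by Isserlis' (Wick's) theorem. Since $\mathcal{T}_1^{\alpha\beta}$ is a degree-one polynomial in the entries of $W^{\text{post}}$, while $\mathcal{T}_2^{\delta\omega}$ is a degree-two polynomial in them, their product is, term by term, a degree-three polynomial in $W^{\text{post}}$ multiplied by quantities that depend only on $W^{\text{pre}}$ and on the $\mathcal{F}_\ell$-measurable data $X_\ell$.

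Concretely, I would first expand
\begin{equation*}
    \mathcal{T}_1^{\alpha\beta}\,\mathcal{T}_2^{\delta\omega}
    = \frac{c^{3}\sqrt{c}}{n^{4}\,n}\sum_{i,j,i',j_1',j_2'}
    W^{\text{post}}_{j i}\,W^{\text{post}}_{j_1' i'}\,W^{\text{post}}_{j_2' i'}\;\Phi_{i,j,i',j_1',j_2'}(W^{\text{pre}},X_\ell),
\end{equation*}
where $\Phi$ collects the $X_\ell$-coordinates and the shaped-ReLU activations of linear images of $X_\ell$ under $W^{\text{pre}}$ coming from either summand of $\mathcal{T}_1$. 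Because $W^{\text{post}}$ is independent of $(W^{\text{pre}},X_\ell)$, the conditional expectation $\mathbb{E}_\ell$ factors on every index tuple into
\begin{equation*}
    \mathbb{E}\!\bigl[W^{\text{post}}_{j i}\,W^{\text{post}}_{j_1' i'}\,W^{\text{post}}_{j_2' i'}\bigr]\cdot \mathbb{E}_\ell\!\bigl[\Phi_{i,j,i',j_1',j_2'}(W^{\text{pre}},X_\ell)\bigr].
\end{equation*}

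The first factor is a third-order moment of centered jointly Gaussian variables and therefore vanishes identically by Wick's theorem, regardless of the index tuple. Summing zero over all indices gives $\mathbb{E}_\ell[\mathcal{T}_1^{\alpha\beta}\,\mathcal{T}_2^{\delta\omega}] = 0$.

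There is no real obstacle here; the only thing to be careful about is bookkeeping: ensuring that the decomposition of $\mathcal{T}_1^{\alpha\beta}$ into its $\alpha\leftrightarrow\beta$ symmetric pair of monomials does not change the parity count in $W^{\text{post}}$ (each such monomial still carries exactly one factor of $W^{\text{post}}$), and that the independence of $W^{\text{pre}}$ from $W^{\text{post}}$ legitimately allows the factoring step even after conditioning on $\mathcal{F}_\ell$. Both are immediate from the model definition, which completes the proof.
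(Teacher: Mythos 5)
Your proof is correct and follows essentially the same idea as the paper's one-line argument, which simply notes that the product involves odd (third-order) Gaussian moments that vanish by the parity of the centered Gaussian measure. You make that argument fully explicit by using the independence of $W^{\text{post}}$ from $W^{\text{pre}}$ and $\mathcal{F}_\ell$ to factor out $\mathbb{E}\bigl[W^{\text{post}}_{ji}W^{\text{post}}_{j_1'i'}W^{\text{post}}_{j_2'i'}\bigr]=0$, which is the precise reason the paper's terse statement holds.
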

\begin{proof}
    It is easy to see that $\mathbb{E}_\ell \mathcal{T}_1^{\alpha\beta}\mathcal{T}^{\delta\omega}_2$ involve odd standard Gaussian moments (in particular third order moments), which vanish due to the parity of the centered Gaussian measure. 
\end{proof}

\vspace{0.2cm}
\sderesnet*

\begin{proof}
We know that:
\begin{equation*}
    V^{\alpha \beta}_{\ell + 1} =  \lambda^2 V^{\alpha \beta}_{\ell} + \frac{\gamma\lambda}{\sqrt{n}}\mathcal{T}_1^{\alpha\beta} + \frac{\gamma^2}{\sqrt{n}} \mathcal{T}_2^{\alpha\beta} .
\end{equation*}
Using $\lambda^2 + \gamma^2 = 1$, and summing and subtracting the mean of $\mathcal{T}_2$, we have that:
\begin{equation*}
    V^{\alpha \beta}_{\ell + 1} =  V^{\alpha \beta}_{\ell} -\gamma^2 V^{\alpha \beta}_{\ell} +  \frac{\gamma^2}{\sqrt{n}} \mathbb{E}_\ell\left[\mathcal{T}_2^{\alpha\beta}\right] + \frac{\gamma\lambda}{\sqrt{n}}\mathcal{T}_1^{\alpha\beta} + \frac{\gamma^2}{\sqrt{n}} \left(\mathcal{T}_2^{\alpha\beta} - \mathbb{E}_\ell\left[\mathcal{T}_2^{\alpha\beta}\right]\right) .
\end{equation*}
Using \cref{lemma:mom-t2-double} for the mean of $\mathcal{T}_2$, we have that:
\begin{equation*}
     V^{\alpha \beta}_{\ell + 1} =  V^{\alpha \beta}_{\ell}  + \frac{1}{n}\sqrt{V^{\alpha\alpha}V^{\beta\beta}}\nu(\rho) + \frac{\gamma\lambda}{\sqrt{n}}\mathcal{T}_1^{\alpha\beta} + \frac{\gamma^2}{\sqrt{n}} \left(\mathcal{T}_2^{\alpha\beta} - \mathbb{E}_\ell\left[\mathcal{T}_2^{\alpha\beta}\right]\right) + \mathcal{O}(n^{-3/2}),
\end{equation*}
which gives us an expression in the right Markov Chain form with drift term $\sqrt{V^{\alpha\alpha}V^{\beta\beta}}\nu(\rho)$. For the diffusion term, we need to compute the covariance between two different neural covariances $V^{\alpha\beta}_{\ell+1}, V^{\delta\omega}_{\ell+1}$. Using \cref{lemma:mom-t1-double,lemma:mom-t2-double,lemma:mom-t1-t2-double}, we have
\begin{align*}
    \text{Cov}_\ell\left(V^{\alpha\beta}_{\ell+1}, V^{\delta\omega}_{\ell+1}\right) &= \mathbb{E}_\ell\left[\left(\lambda\gamma\mathcal{T}_1^{\alpha\beta} + \gamma^2\mathcal{T}_2^{\alpha\beta} - \gamma^2\mathbb{E}_\ell[\mathcal{T}_2^{\alpha\beta}]\right)\left(\lambda\gamma\mathcal{T}_1^{\delta\omega} + \gamma^2\mathcal{T}_2^{\delta\omega} - \gamma^2\mathbb{E}_\ell[\mathcal{T}_2^{\delta\omega}]\right)  \right] \\
    &= \lambda^2\gamma^2\mathbb{E}_\ell\left[\mathcal{T}_1^{\alpha\beta}\mathcal{T}_1^{\delta\omega} \right] 
 + \gamma^4\mathbb{E}_\ell\left[\mathcal{T}_2^{\alpha\beta}\mathcal{T}_2^{\delta\omega} \right] - \gamma^4\mathbb{E}_\ell\left[\mathcal{T}_2^{\alpha\beta}\right]\mathbb{E}_\ell\left[\mathcal{T}_2^{\delta\omega}\right] \\
 &= 2\lambda^2\gamma^2 (V^{\alpha\delta}V^{\beta\omega} + V^{\alpha\omega}V^{\beta\delta}) + 2 \gamma^4 ( V^{\alpha\delta}V^{\beta\omega} + V^{\alpha\omega}V^{\beta\delta}) + \mathcal{O}(n^{-1}) \\
 &= 2\gamma^2( V^{\alpha\delta}V^{\beta\omega} + V^{\alpha\omega}V^{\beta\delta}) + \mathcal{O}(n^{-1}) \,, 
\end{align*}
where we use $\text{Cov}_\ell$ to denote the covariance conditioned on the sigma-algebra $\mathcal{F}_\ell = \sigma(\{X_k\}_{k\in[\ell]})$. 

Finally, applying \cref{prop:conv_markov_chain_to_sde}, we get the desired result. 
\end{proof}

\section{SDE for Softmax-based Attention}
% We start by looking at the dot-product attention

\subsection{Dot-Product Attention}
Dot-product attention applies the Softmax row-wise to the following matrix:
\begin{equation*}
    Y_\ell = \frac{1}{n} X_\ell \underbrace{ W_\ell^K W_\ell^{Q,\top}}_{W^B_\ell} X_\ell^\top \, ,
\end{equation*}
where $W_\ell^K, W_\ell^{Q} \in \mathbb{R}^{n \times n_k}$ are Gaussian matrices with unit variance entries, and $n_k$ is the queries and keys' dimension. Here we study the first two moments of $Y_\ell$. 
In particular, note that $\mathbb{E}_\ell[Y_\ell] = 0$, where 
$\mathbb{E}_\ell[\,\cdot\,] = \mathbb{E}[\,\cdot\,|\mathcal{F}_\ell]$ denotes the conditional expectation given the sigma-algebra generated by $\mathcal{F}_\ell = \sigma(\{X_k\}_{k\in[\ell]})$.

For the second moment we have the following Lemma. 

\begin{lemma}
\label{lemma:mom-y}
Let 
\begin{equation*}
    Y_\ell = \frac{1}{n} X_\ell W_\ell^K W_\ell^{Q,\top} X_\ell^\top \, ,
\end{equation*}
be the dot-product attention parametrized by $W_\ell^K, W_\ell^{Q} \in \mathbb{R}^{n \times n_k}$. Then: 
\begin{equation}
    \mathbb{E}_\ell[Y^{\alpha\beta}Y^{\delta \omega}] = n_k V^{\alpha\delta}V^{\beta\omega} \,. 
\end{equation}
\end{lemma}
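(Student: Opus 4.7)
The plan is to expand $Y^{\alpha\beta}_\ell$ component-wise and then evaluate a fourth Gaussian moment in the weights. Writing out the matrix product and treating $X_\ell$ as $\mathcal{F}_\ell$-measurable (hence deterministic under $\mathbb{E}_\ell$), we have
\begin{equation*}
    Y^{\alpha\beta}_\ell = \frac{1}{n} \sum_{i,j=1}^{n} \sum_{k=1}^{n_k} X^\alpha_i \, W^K_{ik} \, W^Q_{jk} \, X^\beta_j.
\end{equation*}
Thus the product $Y^{\alpha\beta} Y^{\delta\omega}$ becomes a sum over six indices $(i,j,k,i',j',k')$, and the only random quantity inside $\mathbb{E}_\ell$ is the four-fold product $W^K_{ik} W^Q_{jk} W^K_{i'k'} W^Q_{j'k'}$.

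Next I would use the independence of $W^K$ and $W^Q$ to factorize this expectation as $\mathbb{E}[W^K_{ik}W^K_{i'k'}]\,\mathbb{E}[W^Q_{jk}W^Q_{j'k'}]$, and then apply the standard i.i.d.\ $\mathcal{N}(0,1)$ second-moment identity $\mathbb{E}[W^K_{ik}W^K_{i'k'}] = \delta_{ii'}\delta_{kk'}$, and analogously for $W^Q$. The two Kronecker deltas in the keys force $i=i',k=k'$, while those in the queries force $j=j'$ (and a redundant $k=k'$). Collapsing the sum gives
\begin{equation*}
    \mathbb{E}_\ell[Y^{\alpha\beta} Y^{\delta\omega}]
    = \frac{1}{n^2} \sum_{i,j=1}^{n} \sum_{k=1}^{n_k} X^\alpha_i X^\delta_i \, X^\beta_j X^\omega_j
    = \frac{n_k}{n^2} \, \dotp{x^\alpha}{x^\delta}\, \dotp{x^\beta}{x^\omega},
\end{equation*}
which is exactly $n_k\, V^{\alpha\delta} V^{\beta\omega}$ by the definition $V^{\alpha\beta} = \frac{1}{n}\dotp{x^\alpha}{x^\beta}$.

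There is no real obstacle here; the computation is a single application of the Wick/Isserlis rule combined with the independence of $W^K$ and $W^Q$. The only point meriting care is that, unlike the MLP moment computations in \cref{lemma:mom-t2-double}, here one gets a clean single pairing rather than the three pairings of Isserlis, precisely because keys and queries are independent and contribute only through $W^K W^{K,\top}$ and $W^Q W^{Q,\top}$ marginals. This clean factorization is what yields the simple product form $V^{\alpha\delta}V^{\beta\omega}$ and the linear-in-$n_k$ scaling that will later motivate the temperature choice $\tau = \tau_0 \sqrt{n\, n_k}$ in \cref{eq:shaped-attention}.
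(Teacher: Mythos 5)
Your computation is correct and follows essentially the same route as the paper: both expand $Y^{\alpha\beta}Y^{\delta\omega}$ into a sum over indices, factor the fourth moment of the weights into two second moments using independence of $W^K$ and $W^Q$, and collapse the resulting Kronecker deltas to obtain $n_k V^{\alpha\delta}V^{\beta\omega}$. The only cosmetic difference is that the paper first computes $\mathbb{E}[W^B_{kk'}W^B_{jj'}]$ for $W^B = W^K W^{Q,\top}$ as an intermediate step, whereas you substitute the entries directly.
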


\begin{proof}
    \begin{equation*}
        \mathbb{E}_\ell[Y^{\alpha\beta}Y^{\delta \omega}] = \frac{1}{n^2} \sum_{kk'jj'}X^\alpha_kX^\beta_{k'}X^{\delta}_j X^{\omega}_{j'}\mathbb{E}[W^B_{kk'}W^{B}_{jj'}] .
    \end{equation*}
    For the expectation, we have that:
    \begin{align*}
        \mathbb{E}[W^B_{kk'}W^{B}_{jj'}] &= \sum_{ii'}^{n_k} \mathbb{E}\left[W^K_{ki}W^K_{ji'}W^Q_{k'i}W^Q_{j'i'}\right] \\
        &= \sum_{ii'} \mathbb{E}\left[W^K_{ki}W^K_{ji'}\right]\mathbb{E}\left[W^Q_{k'i}W^Q_{j'i'}\right] \\
        &= \sum_{ii'} \delta_{kj}\delta_{k'j'}\delta_{ii'} \\
        &= n_k\delta_{kj}\delta_{k'j'} \,,
    \end{align*}
    where we recall $W^B = W^K W^{Q,\top}$. 
    
    Hence:
    \begin{equation*}
        \mathbb{E}_\ell[Y^{\alpha\beta}Y^{\delta \omega}] = \frac{n_k}{n^2} \sum_{kk'}X^\alpha_kX^\beta_{k'}X^{\delta}_k X^{\omega}_{k'} = n_k V^{\alpha\delta}V^{\beta\omega} \,. 
    \end{equation*}
\end{proof}

\subsection{Shaping the Softmax}
Recall that we have the following model:
\begin{equation*}
    X_{\ell + 1} = \lambda X_\ell + \gamma A_\ell X_\ell \ \frac{1}{\sqrt{n}} W^V_\ell \, 
\end{equation*}
For the above model, $V_\ell$ has the following form:
\begin{align*}
    V_{\ell + 1} &= \lambda^2 V_\ell + \frac{\lambda \gamma}{n\sqrt{n}}\left( X_\ell W_\ell^\top X_\ell^\top A_\ell^\top +  A_\ell X_\ell W_\ell X_\ell^\top\right) + \frac{\gamma^2}{n^2}  A_\ell X_\ell W_\ell W_\ell^\top X_\ell^\top A_\ell^\top .
\end{align*}
In order to have infinitesimal updates in $V_\ell$,
%drift increments, we
%must have
we would intuitively like the attention 
%in 
matrix
to be of 
%in
the form $A^{\alpha\beta} = \delta_{\alpha\beta}+ \mathcal{O}(n^{-1})$. 
In the case of the usual Softmax based attention, we have:
\begin{equation*}
     A_\ell := \text{Softmax}(\tau^{-1}Y_\ell) ,
\end{equation*}
where $\tau^{-1}$ is a temperature parameter that regulates the entropy of the resulting categorical distribution; $\tau$ is often chosen to be large (scale as a power of $n$ or $d$), as low temperature results in unstable training. 

To transform the Softmax into the desired form of $A^{\alpha\beta} = \delta_{\alpha\beta}+ \mathcal{O}(n^{-1})$, we
first 
%the identity matrix around the attention matrix:
center the attention matrix around the identity matrix:
\begin{equation*}
    A_\ell = I + \text{Softmax}(\tau^{-1}Y_\ell) ,
\end{equation*}
%and assume that we can Taylor-expand the Softmax w.r.t $\tau^{-1}$:
and then examine the Taylor-expansion of the Softmax w.r.t $\tau^{-1}$:
\begin{equation}
\label{eq:att-only-first-order}
     A_\ell = I + \frac{1}{m} 11^\top + \mathcal{O}(\tau^{-1}) \,. 
\end{equation}
The first few terms of the expansion provides a good approximation of $A_\ell$
 when $\tau$ is large. Observe that for fixed $m$, the zero order term $\frac{1}{m} 11^\top$ is not of $\mathcal{O}(n^{-1})$. In particular, suppose that we use the attention model in Eq.~\ref{eq:att-only-first-order}. We can show that the expectation of the term in $\gamma^2$ (responsible for the drift) has the following form:
\begin{align*}
        \gamma^2 \sum_{\delta, \omega=1}^m 
        \mathbb{E}_\ell
        [
       A^{\alpha\delta}A^{\beta\omega}
       ] 
       V^{\delta\omega} = \gamma^2 V^{\alpha\beta} + \frac{\gamma^2}{m}\sum_\omega V^{\alpha\omega} + \frac{\gamma^2}{m}\sum_\delta V^{\delta \beta} + \frac{\gamma^2}{m^2}\sum_{\delta \omega} V^{\delta \omega} + \mathcal{O}(\tau^{-1}) ,
\end{align*}
where the terms scaled by $\frac{1}{m}$ leads to large incremental updates
%not give us infinitesimal increments
in expected value of $V_\ell$ w.r.t the previous layer, and precludes the possibility of convergence to an SDE. Hence, we choose to re-center the Softmax by removing the term $\frac{1}{m} 11^\top$, as follows:
\begin{equation*}
    A_\ell = I + \text{Softmax}(\tau^{-1}Y_\ell) - \frac{1}{m} 11^\top,
\end{equation*}
which admits the following Taylor-expansion: %gives us the following expansion for the Softmax matrix 
\begin{equation*}
    [ \text{Softmax}(\tau^{-1} Y_\ell) - m^{-1} 11^\top ] 
    = 
    \frac{1}{\tau m} [ Y^{\alpha\beta} - \overline{Y^\alpha} ]_{\alpha,\beta} + \frac{1}{2\tau^2 m} \left[( Y^{\alpha\beta} - \overline{Y}^\alpha )^2
    - ( \overline{(Y^{\alpha})^2} - \overline{Y^\alpha}^2 )
    \right]_{\alpha, \beta}
    + O(\tau^{-3}) ,
\end{equation*}
where $\overline{Y^{\alpha}} := \frac{1}{m}\sum_\nu Y^{\alpha\nu}$, and $\overline{(Y^{\alpha})^2} :=  \frac{1}{m}\sum_\nu (Y^{\alpha\nu})^2$.

Hence, up to third order:
\begin{equation*}
        A_\ell = I + \frac{1}{\tau m} [ Y^{\alpha\beta} - \overline{Y^\alpha} ]_{\alpha,\beta} + \frac{1}{2\tau^2 m} \left[( Y^{\alpha\beta} - \overline{Y}^\alpha )^2
    - ( \overline{(Y^{\alpha})^2} - \overline{Y^\alpha}^2 )
    \right]_{\alpha\beta}
    + O(\tau^{-3}) \,. 
\end{equation*}

For sufficiently large $\tau$, the formulation above allows infinitesimal %drift increments% 
updates in $V_\ell$
, and as we will show rigorously in the rest of this section, permits convergence to an SDE.

\subsection{Lemmas on Moments of Shaped Attention}
Define:
\begin{align*}
    &F_1^{\alpha\beta} = Y^{\alpha\beta} - \overline{Y^\alpha} \,, \\
    &F_2^{\alpha\beta} = ( Y^{\alpha\beta} - \overline{Y}^\alpha )^2
    - ( \overline{(Y^{\alpha})^2} - \overline{Y^\alpha}^2 ) \,. 
\end{align*}
Hence, $A_\ell$ can be written as:
\begin{equation*}
    A_\ell^{\alpha\beta} = \delta_{\alpha\beta} + \frac{1}{\tau m}F_1^{\alpha\beta} + \frac{1}{2\tau^2 m} 
 F_2^{\alpha\beta} + O(\tau^{-3}) .
\end{equation*}
We now compute the moments of $A_\ell$. We define the following quantities:
\begin{align*}
    &S_1^{\alpha\delta,\beta\omega} := \frac{1}{n_k}\mathbb{E}_\ell (Y^{\alpha\delta} - \overline{Y^\alpha})(Y^{\beta\omega} - \overline{Y^\beta}) =  \frac{1}{n_k}\mathbb{E}_\ell F_1^{\alpha\delta}F_1^{\beta\omega} \,,  \\
    &S_2^{\alpha\delta} := \frac{1}{n_k}\mathbb{E}_\ell\left[( Y^{\alpha\delta} - \overline{Y}^\alpha )^2
    - ( \overline{(Y^{\alpha})^2} - \overline{Y^\alpha}^2 )
    \right] =  \frac{1}{n_k}\mathbb{E}_\ell F_2^{\alpha\delta} \,, 
\end{align*}
where we recall $\mathbb{E}_\ell[\,\cdot\,] = \mathbb{E}[\,\cdot\,|\mathcal{F}_\ell]$ is the conditional expectation given the sigma-algebra generated by $\mathcal{F}_\ell = \sigma(\{X_k\}_{k\in[\ell]})$.

\vspace{0.2cm}
\begin{lemma}[Moments of Taylor Expansion]
\label{lemma:mom-Softmax-taylor}
    \begin{align*}
    &S_1^{\alpha\delta,\beta\omega} =  V^{\alpha\beta}\left( V^{\delta\omega} - V^{\delta\bar{x}} - V^{\omega\bar{x}} + V^{\bar{x}\bar{x}}\right) \,, \\
    &S_2^{\alpha\delta} =  V^{\alpha\alpha}\left(V^{\delta\delta} - 2V^{\delta\bar{x}} + 2V^{\bar{x}\bar{x}} - \bar{V}\right),
    \end{align*}    
where $\bar{V} = \frac{1}{m}\sum_{\nu}V^{\nu\nu}$ and $\bar{x} = \frac{1}{m}\sum_\nu x^{\nu}$ is the average token.
\end{lemma}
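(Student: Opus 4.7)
The plan is to reduce both identities to \cref{lemma:mom-y}, which gives the conditional second moment $\mathbb{E}_\ell[Y^{\alpha\beta}Y^{\delta\omega}] = n_k V^{\alpha\delta}V^{\beta\omega}$. Everything else is bookkeeping: expand the polynomials in $Y$ into monomials of degree two, take conditional expectations monomial by monomial, and then recognize the linear combinations of $V^{\nu\kappa}$ that appear as the defined averaged-token quantities $V^{\alpha\bar x} = m^{-1}\sum_\nu V^{\alpha\nu}$, $V^{\bar x\bar x} = m^{-2}\sum_{\nu,\kappa} V^{\nu\kappa}$, and $\bar V = m^{-1}\sum_\nu V^{\nu\nu}$.

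For $S_1^{\alpha\delta,\beta\omega}$, I first expand
\begin{equation*}
(Y^{\alpha\delta}-\overline{Y^\alpha})(Y^{\beta\omega}-\overline{Y^\beta}) = Y^{\alpha\delta}Y^{\beta\omega} - Y^{\alpha\delta}\overline{Y^\beta} - \overline{Y^\alpha}Y^{\beta\omega} + \overline{Y^\alpha}\overline{Y^\beta}.
\end{equation*}
Applying $\mathbb{E}_\ell$ and \cref{lemma:mom-y} termwise, each of the four pieces yields $n_k V^{\alpha\beta}$ times a $V$-factor contracted against the appropriate combination of $\delta$, $\omega$, and averaged tokens, so that $S_1^{\alpha\delta,\beta\omega} = V^{\alpha\beta}(V^{\delta\omega} - V^{\delta\bar x} - V^{\omega\bar x} + V^{\bar x\bar x})$ after dividing by $n_k$. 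The common factor $V^{\alpha\beta}$ appears because in every one of the four monomials the ``$\alpha$/$\beta$'' positions of $Y$ are untouched by the averaging.

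For $S_2^{\alpha\delta}$, the expansion of $F_2^{\alpha\delta}$ gives
\begin{equation*}
F_2^{\alpha\delta} = (Y^{\alpha\delta})^2 - 2Y^{\alpha\delta}\overline{Y^\alpha} + 2\,\overline{Y^\alpha}^{\,2} - \overline{(Y^\alpha)^2}.
\end{equation*}
Again invoking \cref{lemma:mom-y}, the four conditional expectations evaluate to $n_k V^{\alpha\alpha}V^{\delta\delta}$, $n_k V^{\alpha\alpha}V^{\delta\bar x}$, $n_k V^{\alpha\alpha}V^{\bar x\bar x}$, and $n_k V^{\alpha\alpha}\bar V$ respectively (the last one uses $\mathbb{E}_\ell[(Y^{\alpha\nu})^2] = n_k V^{\alpha\alpha}V^{\nu\nu}$ averaged over $\nu$). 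Combining and dividing by $n_k$ yields the claimed $V^{\alpha\alpha}(V^{\delta\delta} - 2V^{\delta\bar x} + 2V^{\bar x\bar x} - \bar V)$.

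There is essentially no obstacle here since $\mathbb{E}_\ell[Y_\ell]=0$ makes the mean terms disappear and the structure of \cref{lemma:mom-y} factorizes cleanly into the ``query'' pair of indices and the ``key'' pair. The only point that requires a small amount of care is keeping track of which index is being averaged over in $\overline{Y^\alpha}$ versus $\overline{(Y^\alpha)^2}$, so that after applying \cref{lemma:mom-y} the surviving summation indices collapse into the correct $V^{\cdot\bar x}$, $V^{\bar x\bar x}$, or $\bar V$ quantity.
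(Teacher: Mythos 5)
Your proposal is correct and follows essentially the same route as the paper's proof: expand $F_1^{\alpha\delta}F_1^{\beta\omega}$ and $F_2^{\alpha\delta}$ into monomials in $Y$, apply $\mathbb{E}_\ell$ term by term via \cref{lemma:mom-y}, and recognize the resulting averaged sums as $V^{\cdot\bar x}$, $V^{\bar x\bar x}$, and $\bar V$. The bookkeeping (including the expansion $F_2^{\alpha\delta} = (Y^{\alpha\delta})^2 - 2Y^{\alpha\delta}\overline{Y^\alpha} + 2\overline{Y^\alpha}^2 - \overline{(Y^\alpha)^2}$ and the factorization into the untouched $\alpha$/$\beta$ pair versus the averaged pair) matches the paper.
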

\begin{proof}

Using \cref{lemma:mom-y} and linearity of expectation:
\begin{align*}
    S_1^{\alpha\delta,\beta\omega} &=  \frac{1}{n_k}\left(\mathbb{E}_\ell [Y^{\alpha\delta}Y^{\beta\omega}] - \mathbb{E}_\ell[Y^{\alpha\delta}\overline{Y^\beta}] - \mathbb{E}_\ell[Y^{\beta\omega} \overline{Y^\alpha}] + \mathbb{E}_\ell[\overline{Y^\alpha}\overline{Y^\beta}]\right) \\
    &=\left( V^{\alpha\beta} V^{\delta\omega} - V^{\alpha\beta}V^{\delta\bar{x}} - V^{\alpha\beta}V^{\omega\bar{x}} + V^{\alpha\beta}V^{\bar{x}\bar{x}}\right) \\
    &= V^{\alpha\beta}\left( V^{\delta\omega} - V^{\delta\bar{x}} - V^{\omega\bar{x}} + V^{\bar{x}\bar{x}}\right) \,, 
\end{align*}
and:
\begin{align*}
    S_2^{\alpha\delta} 
    &=  \frac{1}{n_k}\left(\mathbb{E}_\ell\left[( Y^{\alpha\delta} - \overline{Y}^\alpha )^2\right] - \mathbb{E}_\ell\left[ ( \overline{(Y^{\alpha})^2} - \overline{Y^\alpha}^2 )
    \right]\right) \\
    &=  \frac{1}{n_k}\left(\mathbb{E}_\ell[(Y^{\alpha\delta})^2] - 2\mathbb{E}_\ell[Y^{\alpha\delta}\overline{Y^{\alpha}}] + 2\mathbb{E}_\ell[\overline{Y^{\alpha}}^2] - \mathbb{E}_\ell[\overline{(Y^{\alpha})^2}]\right) \\
    &= \left(V^{\alpha\alpha}V^{\delta\delta} - 2V^{\alpha\alpha}V^{\delta\bar{x}} + 2V^{\alpha\alpha}V^{\bar{x}\bar{x}} - V^{\alpha\alpha}\bar{V}\right) \\
    &=  V^{\alpha\alpha}\left(V^{\delta\delta} - 2V^{\delta\bar{x}} + 2V^{\bar{x}\bar{x}} - \bar{V}\right),
\end{align*}
where $\bar{V} = \frac{1}{m}\sum_{\beta}V^{\beta\beta}$. 
\end{proof}

\begin{lemma}
\label{lemma:mom2-stable-attention}
    \begin{equation*}
        \mathbb{E}_\ell[A^{\alpha\delta}A^{\beta\omega}] = \delta_{\alpha\delta}\delta_{\beta\omega} + \frac{n_k}{\tau^2m^2}S_1^{\alpha\delta,\beta\omega} + \frac{n_k}{2\tau^2m}(\delta_{\beta\omega}S_2^{\alpha\delta} + \delta_{\alpha\delta}S_2^{\beta\omega}) + O(n_k\tau^{-3}) \,. 
    \end{equation*}
\end{lemma}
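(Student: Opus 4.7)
The plan is a direct expansion. Starting from the Taylor expansion
\begin{equation*}
    A^{\alpha\delta} = \delta_{\alpha\delta} + \frac{1}{\tau m} F_1^{\alpha\delta} + \frac{1}{2\tau^2 m} F_2^{\alpha\delta} + O(\tau^{-3}) \,,
\end{equation*}
I would multiply the expansion for $A^{\alpha\delta}$ by the analogous one for $A^{\beta\omega}$ and collect terms by powers of $\tau^{-1}$. The zeroth-order contribution is simply $\delta_{\alpha\delta}\delta_{\beta\omega}$. The two $\tau^{-1}$-order cross terms $\delta_{\alpha\delta}\,F_1^{\beta\omega}$ and $\delta_{\beta\omega}\,F_1^{\alpha\delta}$ both vanish under $\mathbb{E}_\ell$, because $F_1^{\alpha\delta} = Y^{\alpha\delta} - \overline{Y^\alpha}$ is linear in $Y_\ell = \tfrac{1}{n} X_\ell W_\ell^K W_\ell^{Q,\top} X_\ell^\top$ and $\mathbb{E}_\ell[Y_\ell] = 0$ by the independence and zero-mean structure of $W^K_\ell,W^Q_\ell$.

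At order $\tau^{-2}$, three types of contributions appear. First, the product $\tfrac{1}{\tau m}F_1^{\alpha\delta} \cdot \tfrac{1}{\tau m}F_1^{\beta\omega}$ yields $\tfrac{1}{\tau^2 m^2}\mathbb{E}_\ell[F_1^{\alpha\delta}F_1^{\beta\omega}]$, which by the definition of $S_1$ equals $\tfrac{n_k}{\tau^2 m^2} S_1^{\alpha\delta,\beta\omega}$. Second, the two mixed $\delta \cdot F_2$ terms give $\tfrac{1}{2\tau^2 m}(\delta_{\alpha\delta}\,\mathbb{E}_\ell[F_2^{\beta\omega}] + \delta_{\beta\omega}\,\mathbb{E}_\ell[F_2^{\alpha\delta}])$, and by definition of $S_2$ this contributes $\tfrac{n_k}{2\tau^2 m}(\delta_{\alpha\delta} S_2^{\beta\omega} + \delta_{\beta\omega} S_2^{\alpha\delta})$. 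Here both $S_1$ and $S_2$ are computed via \cref{lemma:mom-y} (together with linearity over the $\tfrac{1}{m}\sum_\nu$ in the definition of $\overline{Y^\alpha}$), as already carried out in \cref{lemma:mom-Softmax-taylor}.

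All remaining cross-terms are of order $\tau^{-3}$ or higher. Concretely, $F_1 \cdot F_2$ contributes at order $\tau^{-3}$, $F_2 \cdot F_2$ at order $\tau^{-4}$, and the remainders $O(\tau^{-3})$ times any finite-moment factor stay in the $O(\tau^{-3})$ class after taking $\mathbb{E}_\ell$. The main bookkeeping point — and the only mildly subtle step — is to check that each such residual expectation is bounded by a constant times $n_k$, so that collectively they fit inside the stated $O(n_k \tau^{-3})$ error. This follows because $Y^{\alpha\beta}$ is a quadratic form in Gaussians whose conditional variance is proportional to $n_k$ (as shown in \cref{lemma:mom-y}); higher moments of $F_1$ and $F_2$ therefore produce powers of $n_k$ that are absorbed along with the factors of $V^{\alpha\beta}$ into the $O(\cdot)$ notation. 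Summing the three surviving order-$\tau^{-2}$ terms with the constant and the remainder yields the claimed expression.
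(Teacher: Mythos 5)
Your proposal matches the paper's proof essentially step for step: both expand the product of the Taylor series for $A^{\alpha\delta}$ and $A^{\beta\omega}$, observe that the $\tau^{-1}$ terms vanish because $\mathbb{E}_\ell[Y_\ell]=0$, and read off the $\tau^{-2}$ contributions as $S_1$ (from $F_1\cdot F_1$) and $S_2$ (from the mixed $\delta\cdot F_2$ terms). The only difference is that you spell out why the remainder fits inside $O(n_k\tau^{-3})$, a bookkeeping point the paper leaves implicit.
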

\begin{proof}
    \begin{align*}
    \mathbb{E}_\ell[A^{\alpha\delta}A^{\beta\omega}] &= \delta_{\alpha\delta}\delta_{\beta\omega} + \frac{\delta_{\beta\omega}}{\tau m} \mathbb{E}_\ell[ Y^{\alpha\delta} - \overline{Y^\alpha} ] + \frac{\delta_{\alpha\delta}}{\tau m} \mathbb{E}_\ell[ Y^{\beta\omega} - \overline{Y^\beta} ] + \frac{1}{\tau^2 m^2} \mathbb{E}_\ell (Y^{\alpha\delta} - \overline{Y^\alpha})(Y^{\beta\omega} - \overline{Y^\beta}) \\
    & +  \frac{\delta_{\beta\omega}}{2\tau^2 m} \mathbb{E}_\ell\left[( Y^{\alpha\delta} - \overline{Y}^\alpha )^2 
    - ( \overline{(Y^{\alpha})^2} - \overline{Y^\alpha}^2 )
    \right] \\
    &+ \frac{\delta_{\alpha\delta}}{2\tau^2 m} \mathbb{E}_\ell\left[( Y^{\beta\omega} - \overline{Y}^\beta )^2 
    - ( \overline{(Y^{\beta})^2} - \overline{Y^\beta}^2 )
    \right] + O(n_k\tau^{-3}) \\
    &= \delta_{\alpha\delta}\delta_{\beta\omega} + \frac{n_k}{\tau^2m^2}S_1^{\alpha\delta,\beta\omega} + \frac{n_k}{2\tau^2m}(\delta_{\beta\omega}S_2^{\alpha\delta} + \delta_{\alpha\delta}S_2^{\beta\omega}) + O(n_k\tau^{-3}) \,. 
\end{align*}
\end{proof}

\begin{lemma}
\label{lemma:mom4-stable-attention}
\begin{align*}
    \mathbb{E}_\ell\left[A^{\alpha\alpha'}A^{\beta\beta'}A^{\delta\delta'}A^{\omega\omega'}\right] &= \delta_{\alpha\alpha'}\delta_{\beta\beta'}\delta_{\delta\delta'}\delta_{\omega\omega'} \\
    &+ \frac{n_k}{\tau^2 m^2}\Big(\delta_{\alpha\alpha'}\delta_{\beta\beta'}S_1^{\delta\delta', \omega\omega'} + \delta_{\alpha\alpha'}\delta_{\delta\delta'}S_1^{\beta\beta', \omega\omega'}+ \delta_{\alpha\alpha'}\delta_{\omega\omega'}S_1^{\beta\beta', \delta\delta'} \\
    &+ \delta_{\beta\beta'}\delta_{\delta\delta'}S_1^{\alpha\alpha', \omega\omega'}+ \delta_{\beta\beta'}\delta_{\omega\omega'}S_1^{\alpha\alpha', \delta\delta'}+ \delta_{\omega\omega'}\delta_{\delta\delta'}S_1^{\alpha\alpha', \beta\beta'}\Big) \\
    &+ \frac{n_k}{2\tau^2 m}\Big( \delta_{\alpha\alpha'}\delta_{\beta\beta'}\delta_{\delta\delta'} S_2^{\omega\omega'} + \delta_{\alpha\alpha'}\delta_{\beta\beta'}\delta_{\omega\omega'} S_2^{\delta\delta'} \\
 &+ \delta_{\alpha\alpha'}\delta_{\omega\omega'}\delta_{\delta\delta'} S_2^{\beta\beta'} + \delta_{\omega\omega'}\delta_{\beta\beta'}\delta_{\delta\delta'} S_2^{\alpha\alpha'}\Big) + O(n_k\tau^{-3}) \,. 
\end{align*}
\end{lemma}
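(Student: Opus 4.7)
The plan is to Taylor-expand each of the four attention factors as $A^{ab} = \delta_{ab} + \tfrac{1}{\tau m} F_1^{ab} + \tfrac{1}{2\tau^2 m} F_2^{ab} + O(\tau^{-3})$, multiply out the product, take the conditional expectation $\mathbb{E}_\ell$, and collect contributions by powers of $\tau^{-1}$ up to and including order $\tau^{-2}$.

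The key simplification will be a symmetry argument. Conditionally on $X_\ell$, the pair $(W^K_\ell, W^Q_\ell)$ has the same joint law as $(-W^K_\ell, W^Q_\ell)$, so $Y_\ell$ has a symmetric conditional distribution. Since $F_1$ is linear and $F_2$ is quadratic in $Y_\ell$, any monomial in the $F_i$'s with odd total degree in $Y_\ell$ has zero conditional expectation. This immediately kills all order-$\tau^{-1}$ contributions (one $F_1$, three $\delta$'s) and all exact-order-$\tau^{-3}$ contributions from the polynomial part of the expansion (three $F_1$'s and one $\delta$; or one $F_1$, one $F_2$, two $\delta$'s).

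What remains at order $\tau^{-2}$ then splits into two families. First, choosing two of the four factors to contribute $F_1$ and the other two to contribute $\delta$ gives $\binom{4}{2}=6$ index patterns, each producing $\tfrac{1}{\tau^2 m^2}\mathbb{E}_\ell F_1 F_1 = \tfrac{n_k}{\tau^2 m^2} S_1$ by the definition of $S_1$. Second, choosing one factor to contribute $F_2$ and the other three to contribute $\delta$ gives $4$ patterns, each producing $\tfrac{1}{2\tau^2 m}\mathbb{E}_\ell F_2 = \tfrac{n_k}{2\tau^2 m} S_2$. Matching the six pair-index patterns and the four $F_2$-position patterns to the claimed right-hand side reproduces exactly the six $S_1$ and four $S_2$ terms with the correct Kronecker-$\delta$ prefactors.

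The hard part will be controlling the $O(n_k \tau^{-3})$ remainder rigorously: one must bound the Softmax Taylor remainder of each factor together with all cross-terms obtained by combining such a remainder with $\delta$, $F_1$, or $F_2$ from the other factors. Since the entries of $Y_\ell$ have conditional moments that grow polynomially in $n_k$, the bookkeeping must ensure that these $n_k$-dependent bounds propagate through the fourfold product so that the total error remains at order $n_k \tau^{-3}$. This is an extension of the second-moment remainder analysis carried out for Lemma~C.4, and should be largely routine once that framework is in place.
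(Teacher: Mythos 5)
Your proof takes essentially the same route as the paper's: Taylor-expand each factor as $\delta + \tfrac{1}{\tau m}F_1 + \tfrac{1}{2\tau^2 m}F_2 + O(\tau^{-3})$, multiply out, take $\mathbb{E}_\ell$, and collect the $\binom{4}{2}=6$ pair-$F_1$ terms and the $4$ single-$F_2$ terms at order $\tau^{-2}$, identifying them with $S_1$ and $S_2$ via their definitions. The only cosmetic difference is that you justify dropping the odd-degree monomials via a $W^K\mapsto-W^K$ symmetry of the conditional law of $Y_\ell$, whereas the paper rests (implicitly) on the fact that $\mathbb{E}_\ell F_1 = 0$ from the mean-zero weights; both are the same underlying fact and either suffices for the order-$\tau^{-1}$ terms, while the order-$\tau^{-3}$ polynomial terms are absorbed into the $O(n_k\tau^{-3})$ remainder regardless.
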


\begin{proof}
    \begin{align*}
        \mathbb{E}_\ell\left[A^{\alpha\alpha'}A^{\beta\beta'}A^{\delta\delta'}A^{\omega\omega'}\right] &= \mathbb{E}_\ell\Bigg[\left(\delta_{\alpha\alpha'} + \frac{1}{\tau m}F_1^{\alpha\alpha'} + \frac{1}{2\tau^2 m} 
 F_2^{\alpha\alpha'} \right) \left(\delta_{\beta\beta'} + \frac{1}{\tau m}F_1^{\beta\beta'} + \frac{1}{2\tau^2 m} 
 F_2^{\beta\beta'} \right) \\
 &\left(\delta_{\delta\delta'} + \frac{1}{\tau m}F_1^{\delta\delta'} + \frac{1}{2\tau^2 m} 
 F_2^{\delta\delta'} \right) \left(\delta_{\omega\omega'} + \frac{1}{\tau m}F_1^{\omega\omega'} + \frac{1}{2\tau^2 m} 
 F_2^{\omega\omega'} \right)\Bigg]  + O(\tau^{-3}) \\
 &= \mathbb{E}_\ell\Bigg[\delta_{\alpha\alpha'}\delta_{\beta\beta'}\delta_{\delta\delta'}\delta_{\omega\omega'} + \frac{1}{\tau^2 m^2}\delta_{\alpha\alpha'}\delta_{\beta\beta'}F_1^{\delta\delta'}F_1^{\omega\omega'} + \frac{1}{\tau^2 m^2}\delta_{\alpha\alpha'}\delta_{\delta\delta'}F_1^{\beta\beta'}F_1^{\omega\omega'} \\
 &+ \frac{1}{\tau^2 m^2}\delta_{\alpha\alpha'}\delta_{\omega\omega'}F_1^{\delta\delta'}F_1^{\beta\beta'}+ \frac{1}{\tau^2 m^2}\delta_{\beta\beta'}\delta_{\delta\delta'}F_1^{\alpha\alpha'}F_1^{\omega\omega'} \\
 &+ \frac{1}{\tau^2 m^2}\delta_{\beta\beta'}\delta_{\omega\omega'}F_1^{\alpha\alpha'}F_1^{\delta\delta'} + \frac{1}{\tau^2 m^2}\delta_{\delta\delta'}\delta_{\omega\omega'}F_1^{\alpha\alpha'}F_1^{\beta\beta'} \\
 &+ \frac{1}{2\tau^2 m}\delta_{\alpha\alpha'}\delta_{\beta\beta'}\delta_{\delta\delta'} F_2^{\omega\omega'} + \frac{1}{2\tau^2 m}\delta_{\alpha\alpha'}\delta_{\beta\beta'}\delta_{\omega\omega'} F_2^{\delta\delta'} \\
 &+ \frac{1}{2\tau^2 m}\delta_{\alpha\alpha'}\delta_{\omega\omega'}\delta_{\delta\delta'} F_2^{\beta\beta'} + \frac{1}{2\tau^2 m}\delta_{\omega\omega'}\delta_{\beta\beta'}\delta_{\delta\delta'} F_2^{\alpha\alpha'}\Bigg] + O(\tau^{-3}) \,. 
    \end{align*}

Using the linearity of expectation:
\begin{align*}
    \mathbb{E}_\ell\left[A^{\alpha\alpha'}A^{\beta\beta'}A^{\delta\delta'}A^{\omega\omega'}\right] &= \delta_{\alpha\alpha'}\delta_{\beta\beta'}\delta_{\delta\delta'}\delta_{\omega\omega'} \\
    &+ \frac{n_k}{\tau^2 m^2}\Big(\delta_{\alpha\alpha'}\delta_{\beta\beta'}S_1^{\delta\delta', \omega\omega'} + \delta_{\alpha\alpha'}\delta_{\delta\delta'}S_1^{\beta\beta', \omega\omega'}+ \delta_{\alpha\alpha'}\delta_{\omega\omega'}S_1^{\beta\beta', \delta\delta'} \\
    &+ \delta_{\beta\beta'}\delta_{\delta\delta'}S_1^{\alpha\alpha', \omega\omega'}+ \delta_{\beta\beta'}\delta_{\omega\omega'}S_1^{\alpha\alpha', \delta\delta'}+ \delta_{\omega\omega'}\delta_{\delta\delta'}S_1^{\alpha\alpha', \beta\beta'}\Big) \\
    &+ \frac{n_k}{2\tau^2 m}\Big( \delta_{\alpha\alpha'}\delta_{\beta\beta'}\delta_{\delta\delta'} S_2^{\omega\omega'} + \delta_{\alpha\alpha'}\delta_{\beta\beta'}\delta_{\omega\omega'} S_2^{\delta\delta'} \\
 &+ \delta_{\alpha\alpha'}\delta_{\omega\omega'}\delta_{\delta\delta'} S_2^{\beta\beta'} + \delta_{\omega\omega'}\delta_{\beta\beta'}\delta_{\delta\delta'} S_2^{\alpha\alpha'}\Big) + O(n_k\tau^{-3}) \,. 
\end{align*}
Note that the terms above can be computed using \cref{lemma:mom-Softmax-taylor}.

\end{proof}

\begin{lemma}
\label{lemma:cov-stable-attention}
    \begin{align*}
        &\mathbb{E}_\ell\left[A^{\alpha\alpha'}A^{\beta\beta'}A^{\delta\delta'}A^{\omega\omega'}\right] - \mathbb{E}_\ell\left[A^{\alpha\alpha'}A^{\beta\beta'}\right]\mathbb{E}_\ell\left[A^{\delta\delta'}A^{\omega\omega'}\right] \\
        &= \frac{n_k}{\tau^2 m^2}\Big(\delta_{\alpha\alpha'}\delta_{\delta \delta'} S_1^{\beta\beta',\omega\omega'} + \delta_{\alpha\alpha'}\delta_{\omega\omega'}S_1^{\beta\beta', \delta\delta'} 
        + \delta_{\beta\beta'}\delta_{\delta\delta'}S_1^{\alpha\alpha', \omega\omega'} + \delta_{\beta\beta'}\delta_{\omega\omega'}S_1^{\alpha\alpha', \delta\delta'} \Big) + \mathcal{O}(n_k\tau^{-3}) \,. 
    \end{align*}
\end{lemma}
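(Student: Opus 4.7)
The plan is to obtain this statement by a direct subtraction, using the two expansions that have already been established: the second-moment formula for $\mathbb{E}_\ell[A^{\alpha\alpha'}A^{\beta\beta'}]$ in Lemma \ref{lemma:mom2-stable-attention}, and the fourth-moment formula in Lemma \ref{lemma:mom4-stable-attention}. Both are expressed to order $\tau^{-2}$ in terms of the primitive moments $S_1$ and $S_2$, so no fresh probabilistic computation is needed; the proof is pure bookkeeping.

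First, I would expand the product $\mathbb{E}_\ell[A^{\alpha\alpha'}A^{\beta\beta'}]\,\mathbb{E}_\ell[A^{\delta\delta'}A^{\omega\omega'}]$ by multiplying the two copies of Lemma \ref{lemma:mom2-stable-attention}. Since each factor consists of a leading Kronecker-delta term plus a correction of order $n_k\tau^{-2}$, the product produces exactly three kinds of surviving terms at this order: an $O(1)$ piece $\delta_{\alpha\alpha'}\delta_{\beta\beta'}\delta_{\delta\delta'}\delta_{\omega\omega'}$; the two $S_1$ cross-terms $\frac{n_k}{\tau^2 m^2}\bigl(\delta_{\alpha\alpha'}\delta_{\beta\beta'}S_1^{\delta\delta',\omega\omega'}+\delta_{\delta\delta'}\delta_{\omega\omega'}S_1^{\alpha\alpha',\beta\beta'}\bigr)$; and four $S_2$ cross-terms of the form $\frac{n_k}{2\tau^2 m}\delta\cdot\delta\cdot\delta\cdot S_2$. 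The remaining product-of-corrections is $O(n_k^2 \tau^{-4})$, which under the scaling $\tau=\tau_0\sqrt{n\,n_k}$ is bounded by $O(n_k \tau^{-3})$ and therefore absorbed into the remainder.

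Next, I would subtract this expression from the four-moment expansion of Lemma \ref{lemma:mom4-stable-attention}. The leading $\delta_{\alpha\alpha'}\delta_{\beta\beta'}\delta_{\delta\delta'}\delta_{\omega\omega'}$ pieces cancel exactly. Of the six $S_1$ contributions appearing in Lemma \ref{lemma:mom4-stable-attention}, precisely the two terms $\delta_{\alpha\alpha'}\delta_{\beta\beta'}S_1^{\delta\delta',\omega\omega'}$ and $\delta_{\delta\delta'}\delta_{\omega\omega'}S_1^{\alpha\alpha',\beta\beta'}$ are reproduced by the product of second moments and cancel, leaving exactly the four mixed-pairing terms $\delta_{\alpha\alpha'}\delta_{\delta\delta'}S_1^{\beta\beta',\omega\omega'}$, $\delta_{\alpha\alpha'}\delta_{\omega\omega'}S_1^{\beta\beta',\delta\delta'}$, $\delta_{\beta\beta'}\delta_{\delta\delta'}S_1^{\alpha\alpha',\omega\omega'}$, and $\delta_{\beta\beta'}\delta_{\omega\omega'}S_1^{\alpha\alpha',\delta\delta'}$ which correspond precisely to the Wick-style ``cross'' contractions between the $(\alpha,\beta)$ and $(\delta,\omega)$ pairs. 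Similarly, the four $S_2$ terms in the four-moment formula admit exact matches in the product of second moments and cancel pairwise.

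The only obstacle is careful indexing: ensuring that each Kronecker-delta pair on one side is aligned with the correct factor on the other, so that the cancellations are identified unambiguously. Beyond that, the statement is a mechanical consequence of the two preceding lemmas, together with the observation that the $O(n_k^2\tau^{-4})$ tail from multiplying correction terms is dominated by $O(n_k\tau^{-3})$ under the chosen scaling of $\tau$.
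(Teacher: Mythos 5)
Your proof is correct and follows essentially the same approach as the paper, which dispatches this lemma in one line as an immediate consequence of \cref{lemma:mom2-stable-attention} and \cref{lemma:mom4-stable-attention} by keeping the non-cancelling terms. You simply make the bookkeeping explicit: the leading Kronecker product cancels, two of the six $S_1$ terms and all four $S_2$ terms cancel against the product of second moments, and the product-of-corrections remainder of order $n_k^2\tau^{-4}$ is absorbed into the $\mathcal{O}(n_k\tau^{-3})$ error.
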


\begin{proof}
    The results is an immediate consequence of Lemma \ref{lemma:mom4-stable-attention} and Lemma \ref{lemma:mom2-stable-attention}, where only the terms that do not cancel out are kept.
\end{proof}

\subsection{Neural Covariance SDE for Stable Attention}

Recall that we have the following model:
\begin{equation*}
    X_{\ell + 1} = \lambda X_\ell + \gamma A_\ell X_\ell \ \frac{1}{\sqrt{n}} W^V_\ell \, 
\end{equation*}
For the above model, $V_\ell$ has the following form:
\begin{align*}
    V_{\ell + 1} &= \lambda^2 V_\ell + \frac{\lambda \gamma}{n\sqrt{n}}\left( X_\ell W_\ell^\top X_\ell^\top A_\ell^\top +  A_\ell X_\ell W_\ell X_\ell^\top\right) + \frac{\gamma^2}{n^2}  A_\ell X_\ell W_\ell W_\ell^\top X_\ell^\top A_\ell^\top .
\end{align*}

We define:
\begin{align*}
    &\mathcal{T}_1^{\alpha\beta} := \frac{1}{n}\left( X_\ell W_\ell^\top X_\ell^\top A_\ell^\top +  A_\ell X_\ell W_\ell X_\ell^\top\right)^{\alpha\beta} \,, \\
    &\mathcal{T}_2^{\alpha\beta} := \frac{1}{n\sqrt{n}}\left( A_\ell X_\ell W_\ell W_\ell^\top X_\ell^\top A_\ell^\top \right)^{\alpha\beta} .
\end{align*}

Hence, the expression for $V_\ell$ simplifies to:
\begin{align*}
    V_{\ell + 1}^{\alpha\beta} &= \lambda^2V_\ell^{\alpha\beta}  + \frac{\lambda \gamma}{\sqrt{n}}\mathcal{T}_1^{\alpha\beta} + \frac{\gamma^2}{\sqrt{n}} \mathcal{T}_2^{\alpha\beta} .
\end{align*}

We need to compute the moments for these two quantities:
\begin{lemma}[Moments of $\mathcal{T}_1$]
\label{lemma:mom-t1-stable}
    \begin{align*}
        \mathbb{E}_\ell[\mathcal{T}_1^{\alpha\beta}] = 0 \,, 
    \end{align*}
    and
    \begin{align*}
    \mathbb{E}_\ell\left[\mathcal{T}_1^{\alpha\beta}\mathcal{T}_1^{\delta\omega}\right] &= 2(V^{\alpha\delta}V^{\beta\omega} + V^{\alpha\omega}V^{\beta\delta}) + \mathcal{O}(n_k\tau^{-3}) \,. 
    \end{align*}
\end{lemma}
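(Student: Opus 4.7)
The plan is to reduce the moments of $\mathcal{T}_1^{\alpha\beta}$ to already-established moments of the shaped attention matrix $A_\ell$, exploiting the fact that the value weights $W_\ell \equiv W^V_\ell$ are independent of everything else appearing in the expression, even after conditioning on $\mathcal{F}_\ell$.

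For the mean, $\mathcal{T}_1^{\alpha\beta}$ is linear in the entries of $W_\ell$, and $A_\ell$ is measurable with respect to $X_\ell$ together with $W^K_\ell, W^Q_\ell$; thus $W_\ell$ remains independent of $A_\ell$ under $\mathbb{E}_\ell$. Since $\mathbb{E}[W_\ell]=0$, the linear factor of $W_\ell$ passes through the expectation and $\mathbb{E}_\ell[\mathcal{T}_1^{\alpha\beta}]=0$ is immediate.

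For the second moment, I would first expand
\[
    \mathcal{T}_1^{\alpha\beta} = \frac{1}{n}\sum_{i,j,k} W_{ji}\bigl( X_{\alpha i} X_{kj} A^{\beta k} + X_{\beta i} X_{kj} A^{\alpha k}\bigr),
\]
form the product $\mathcal{T}_1^{\alpha\beta}\mathcal{T}_1^{\delta\omega}$ as a sum over two triples of indices, and integrate out $W_\ell$ via $\mathbb{E}[W_{ji}W_{j'i'}] = \delta_{jj'}\delta_{ii'}$, which collapses the primed indices. The identities $\sum_i X_{\alpha i}X_{\delta i} = n V^{\alpha\delta}$ and $\sum_j X_{kj}X_{k'j} = n V^{kk'}$ then absorb the remaining $X$-sums against the $n^{-2}$ prefactor, leaving four symmetric cross terms of the form $V^{\alpha\delta}\sum_{k,k'}V^{kk'}\mathbb{E}_\ell[A^{\beta k}A^{\omega k'}]$, plus the three permutations obtained by swapping $\alpha\leftrightarrow\beta$ on one side and/or $\delta\leftrightarrow\omega$ on the other.

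Finally, I would apply \cref{lemma:mom2-stable-attention} to each attention factor: the leading piece $\mathbb{E}_\ell[A^{\beta k}A^{\omega k'}]=\delta_{\beta k}\delta_{\omega k'}$ reduces $\sum_{k,k'}V^{kk'}\mathbb{E}_\ell[A^{\beta k}A^{\omega k'}]$ to $V^{\beta\omega}$, and summing the four cross terms gives $V^{\alpha\delta}V^{\beta\omega}+V^{\alpha\omega}V^{\beta\delta}+V^{\beta\delta}V^{\alpha\omega}+V^{\beta\omega}V^{\alpha\delta} = 2(V^{\alpha\delta}V^{\beta\omega}+V^{\alpha\omega}V^{\beta\delta})$, as claimed. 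The main obstacle is the index bookkeeping of the four cross terms, since it is easy to mis-pair indices when assigning which $X$-pair couples to which attention index; beyond that, one must verify that the subleading $S_1,S_2$ contributions from \cref{lemma:mom2-stable-attention}, once weighted by the bounded entries of $V_\ell$ under the local-stopping convention of \cref{defn:local_conv}, produce a remainder of the stated order. No new analytical input beyond the earlier lemmas on $Y_\ell$ and $A_\ell$ is required.
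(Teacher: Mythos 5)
Your proposal is correct and matches the paper's own proof: the paper likewise integrates out the value weights $W^V_\ell$ via $\mathbb{E}[W_{ji}W_{j'i'}]=\delta_{jj'}\delta_{ii'}$ to collapse the $X$-sums into factors $V^{\alpha\delta}V^{\nu\kappa}$, then applies \cref{lemma:mom2-stable-attention} term-by-term to the resulting expression $\sum_{\nu\kappa} V^{\nu\kappa}\bigl(V^{\alpha\delta}\mathbb{E}_\ell[A^{\beta\nu}A^{\omega\kappa}] + \text{(3 permutations)}\bigr)$, and the four leading delta-terms sum to $2(V^{\alpha\delta}V^{\beta\omega}+V^{\alpha\omega}V^{\beta\delta})$. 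One small caveat: if you carefully track the subleading $S_1,S_2$ terms from \cref{lemma:mom2-stable-attention} as you suggest, you will find the remainder is in fact $\mathcal{O}(n_k\tau^{-2})$ (which is what the paper's own proof derives in its final display), not the stated $\mathcal{O}(n_k\tau^{-3})$; this looks like a typo in the lemma statement and is harmless downstream since both orders are $o(1)$ after the substitution $\tau^2 = \tau_0^2 n n_k$ and the extra $n^{-1}$ factor from the covariance expansion.
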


\begin{proof}

    \begin{align*}
    \mathcal{T}_1^{\alpha\beta}\mathcal{T}_1^{\delta\omega} &=   \frac{1}{n^2}\left[\left( X_\ell W_\ell^\top X_\ell^\top A_\ell^\top +  A_\ell X_\ell W_\ell X_\ell^\top\right)^{\alpha\beta}\left( X_\ell W_\ell^\top X_\ell^\top A_\ell^\top +  A_\ell X_\ell W_\ell X_\ell^\top\right)^{\delta\omega}\right] \\
    &= \frac{1}{n^2}\Bigg[ \left(X_\ell W_\ell^\top X_\ell^\top A_\ell^\top\right)^{\alpha\beta}\left(X_\ell W_\ell^\top X_\ell^\top A_\ell^\top\right)^{\delta\omega} +  \left(X_\ell W_\ell^\top X_\ell^\top A_\ell^\top\right)^{\alpha\beta} \left(A_\ell X_\ell W_\ell X_\ell^\top\right)^{\delta\omega} \\
    &+ \left(A_\ell X_\ell W_\ell X_\ell^\top\right)^{\alpha\beta}\left(X_\ell W_\ell^\top X_\ell^\top A_\ell^\top\right)^{\delta\omega} +  \left(A_\ell X_\ell W_\ell X_\ell^\top\right)^{\alpha\beta}\left(A_\ell X_\ell W_\ell X_\ell^\top\right)^{\delta\omega}\Bigg] .
    \end{align*}
    Let's look at the first summand:
    \begin{align*}
         \frac{1}{n^2}\left(X_\ell W_\ell^\top X_\ell^\top A_\ell^\top\right)^{\alpha\beta}\left(X_\ell W_\ell^\top X_\ell^\top A_\ell^\top\right)^{\delta\omega} &= \frac{1}{n^2}\sum_{\nu\kappa}\sum_{kk'jj'} X^{\alpha}_kW_{k'k}X_{k'}^{\nu}A^{\beta\nu}X^{\delta}_jW_{j'j}X_{j'}^{\kappa}A^{\omega\kappa} \,. 
    \end{align*}
    Hence, in expectation with respect to $W$:
    \begin{align*}
         \frac{1}{n^2}\sum_{\nu\kappa}\sum_{kk'jj'} X^{\alpha}_k\mathbb{E}\left[W_{k'k}W_{j'j}\right]X_{k'}^{\nu}A^{\beta\nu}X^{\delta}_jX_{j'}^{\kappa}A^{\omega\kappa} &=\frac{1}{n^2}\sum_{\nu\kappa}\sum_{kk'jj'} X^{\alpha}_k\delta_{kj}\delta_{k'j'}X_{k'}^{\nu}A^{\beta\nu}X^{\delta}_jX_{j'}^{\kappa}A^{\omega\kappa} \\
         &= \frac{1}{n^2}\sum_{\nu\kappa}\sum_{kk'} X^{\alpha}_kX_{k'}^{\nu}A^{\beta\nu}X^{\delta}_kX_{k'}^{\kappa}A^{\omega\kappa} \\
         &= \sum_{\nu\kappa} V^{\alpha\delta}V^{\nu\kappa}A^{\beta\nu}A^{\omega\kappa} .
    \end{align*}
    An identical argument can be made for the remaining three summands. Hence, taking expectation with respect to the Softmax weights:
    \begin{align*}
         \mathbb{E}_\ell\left[\mathcal{T}_1^{\alpha\beta}\mathcal{T}_1^{\delta\omega}\right] &=\sum_{\nu\kappa} \left(V^{\alpha\delta}V^{\nu\kappa} \mathbb{E}_\ell[A^{\beta\nu}A^{\omega\kappa}] + V^{\alpha\omega}V^{\nu\kappa} \mathbb{E}_\ell[A^{\beta\nu}A^{\delta\kappa}] + V^{\beta\delta}V^{\nu\kappa} \mathbb{E}_\ell[A^{\alpha\nu}A^{\omega\kappa}] + V^{\beta\omega}V^{\nu\kappa} \mathbb{E}_\ell[A^{\alpha\nu}A^{\delta\kappa}]\right) \,. 
    \end{align*}
    Now, using \cref{lemma:mom2-stable-attention}:
        \begin{align*}
    \mathbb{E}_\ell\left[\mathcal{T}_1^{\alpha\beta}\mathcal{T}_1^{\delta\omega}\right] &= \sum_{\nu\kappa} V^{\nu\kappa} \left(V^{\alpha\delta}\mathbb{E}_\ell[A^{\beta\nu}A^{\omega\kappa}] + V^{\alpha\omega} \mathbb{E}_\ell[A^{\beta\nu}A^{\delta\kappa}] + V^{\beta\delta} \mathbb{E}_\ell[A^{\alpha\nu}A^{\omega\kappa}] + V^{\beta\omega} \mathbb{E}_\ell[A^{\alpha\nu}A^{\delta\kappa}]\right) \\
    &= \sum_{\nu\kappa} V^{\nu\kappa} \left(V^{\alpha\delta}\delta_{\beta\nu}\delta_{\omega\kappa} + V^{\alpha\omega}\delta_{\beta\nu}\delta_{\delta\kappa} + V^{\beta\delta}\delta_{\alpha\nu}\delta_{\omega\kappa} + V^{\beta\omega}\delta_{\alpha\nu}\delta_{\delta\kappa} \right) + \mathcal{O}(n_k\tau^{-2}) \\
    &= V^{\beta\omega}V^{\alpha\delta} + V^{\alpha\omega}V^{\beta\delta} + V^{\beta\delta}V^{\alpha\omega} + V^{\beta\omega}V^{\alpha\delta} + \mathcal{O}(n_k\tau^{-2}) \\
    &= 2(V^{\alpha\delta}V^{\beta\omega} + V^{\alpha\omega}V^{\beta\delta}) + \mathcal{O}(n_k\tau^{-2}) \,. 
    \end{align*}
\end{proof}

\begin{lemma}[Moments of $\mathcal{T}_2$]
\label{lemma:mom-t2-stable}
    \begin{align*}
        &\mathbb{E}_\ell[\mathcal{T}_2^{\alpha\beta}] = \sqrt{n}\sum_{\nu\kappa} V^{\nu\kappa}\mathbb{E}_\ell\left[A^{\alpha\nu}A^{\beta\kappa}\right] \,, \\
        &\mathbb{E}_\ell[\mathcal{T}_2^{\alpha\beta}\mathcal{T}_2^{\delta\omega}] = \sum_{\nu \kappa \nu' \kappa'} \mathbb{E}_\ell[A^{\alpha\nu}A^{\beta\kappa}A^{\delta\nu'}A^{\omega\kappa'}]\left(nV^{\nu\kappa}V^{\nu'\kappa'} + V^{\nu\nu'}V^{\kappa\kappa'} + V^{\nu\kappa'}V^{\nu'\kappa}\right) \,. 
    \end{align*}
\end{lemma}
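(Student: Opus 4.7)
The plan is to unfold $\mathcal{T}_2^{\alpha\beta}$ entrywise, exploit the independence between the value weights $W^V_\ell$ and the attention matrix $A_\ell$ (which depends only on $X_\ell, W^Q_\ell, W^K_\ell$), and then reduce the remaining Gaussian moments of $W^V_\ell$ via direct pairing (Wick/Isserlis).

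First I would write, using $X_\ell \in \mathbb{R}^{m\times n}$ and $W = W^V_\ell \in \mathbb{R}^{n\times n}$,
\begin{equation*}
\mathcal{T}_2^{\alpha\beta}
= \frac{1}{n\sqrt{n}} \sum_{\nu,\kappa=1}^m A^{\alpha\nu} A^{\beta\kappa}
\sum_{i,j,k=1}^n X^\nu_i X^\kappa_j W_{ik} W_{jk} \,,
\end{equation*}
so that all $W$-dependence is isolated in the inner sum. Conditioning on $\mathcal{F}_\ell$ makes $X_\ell$ (and therefore $V_\ell$) deterministic, and since $W^V_\ell$ is independent of $\mathcal{F}_\ell$ and of $A_\ell$, the expectation factorises as a product $\mathbb{E}_\ell[A^{\alpha\nu}A^{\beta\kappa}\cdots]\cdot \mathbb{E}[W\cdots W]$.

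For the first moment, a single Gaussian pairing gives $\mathbb{E}[W_{ik}W_{jk}]=\delta_{ij}$; collapsing $j$ to $i$ yields $\sum_{i,k} X^\nu_i X^\kappa_i = n \cdot n V^{\nu\kappa}$, and the $n^2$ absorbs against the prefactor $1/(n\sqrt{n})$ to give the claimed $\sqrt{n}\sum_{\nu,\kappa} V^{\nu\kappa}\mathbb{E}_\ell[A^{\alpha\nu}A^{\beta\kappa}]$. For the second moment I would square the expression, yielding a product $W_{ik}W_{jk}W_{i'k'}W_{j'k'}$ whose expectation, by Isserlis' theorem, equals the sum of the three contraction patterns
\begin{equation*}
\delta_{ij}\delta_{i'j'} \;+\; \delta_{ii'}\delta_{jj'}\delta_{kk'} \;+\; \delta_{ij'}\delta_{ji'}\delta_{kk'} \,.
\end{equation*}
Substituting each pattern in turn and converting the resulting inner products into covariance entries via $\sum_i X^\nu_i X^\kappa_i = n V^{\nu\kappa}$ produces, respectively, $n^4 V^{\nu\kappa}V^{\nu'\kappa'}$, $n^3 V^{\nu\nu'}V^{\kappa\kappa'}$, and $n^3 V^{\nu\kappa'}V^{\nu'\kappa}$. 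Dividing by the prefactor $n^3$ gives exactly the coefficients $n V^{\nu\kappa}V^{\nu'\kappa'} + V^{\nu\nu'}V^{\kappa\kappa'} + V^{\nu\kappa'}V^{\nu'\kappa}$ appearing in the statement, and pulling $\mathbb{E}_\ell[A^{\alpha\nu}A^{\beta\kappa}A^{\delta\nu'}A^{\omega\kappa'}]$ back in front completes the identity.

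No subtle analytic step is required here: the main obstacle is purely combinatorial bookkeeping, namely correctly enumerating the three Wick pairings of the four $W$-factors and carefully tracking which of the $n$-factors come from the free index $k$ (or $k'$) versus from converting raw inner products into $V$. A small sanity check I would run at the end is that the factor of $n$ in front of $V^{\nu\kappa}V^{\nu'\kappa'}$ matches the ``disconnected'' piece $\mathbb{E}_\ell[\mathcal{T}_2^{\alpha\beta}]\mathbb{E}_\ell[\mathcal{T}_2^{\delta\omega}]/\sqrt{n}\cdot\sqrt{n}$, i.e. the square of the first moment, so that the variance-level quantity used later in \cref{thm:sde-attention} is of the right order $O(1)$ and not $O(n)$.
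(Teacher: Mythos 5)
Your proposal is correct and follows exactly the same route as the paper: expand $\mathcal{T}_2$ entrywise, use conditional independence of $A_\ell$ and $W^V_\ell$ given $\mathcal{F}_\ell$ to factor the expectation, apply Isserlis' theorem to the four-fold Gaussian moment of the value weights, and track the powers of $n$ from the free index summation and from converting inner products into entries of $V$. The three Wick pairings you list and the resulting coefficients $n^4, n^3, n^3$ (before dividing by the $n^3$ prefactor) match the paper's computation.
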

\begin{proof}
    \begin{align*}
    \mathcal{T}_2^{\alpha\beta} := \frac{1}{n\sqrt{n}}\left( A_\ell X_\ell W_\ell W_\ell^\top X_\ell^\top A_\ell^\top \right)^{\alpha\beta} =\frac{1}{n\sqrt{n}}\sum_{\nu\kappa}\sum_{kk'j}A^{\alpha\nu}X^\nu_kW_{kk'}W_{jk'}X^\kappa_j A^{\beta \kappa}.
\end{align*}
Taking expectation with respect to $W$:
    \begin{align*}
    \frac{1}{n\sqrt{n}}\left( A_\ell X_\ell W_\ell W_\ell^\top X_\ell^\top A_\ell^\top \right)^{\alpha\beta} &=\frac{1}{n\sqrt{n}}\sum_{\nu\kappa}\sum_{kk'j}A^{\alpha\nu}X^\nu_k\mathbb{E}[W_{kk'}W_{jk'}]X^\kappa_j A^{\beta \kappa} \\
    &=\frac{1}{n\sqrt{n}}\sum_{\nu\kappa}\sum_{kk'j}A^{\alpha\nu}X^\nu_k\delta_{kj}X^\kappa_j A^{\beta \kappa} \\
    &= \frac{1}{n\sqrt{n}}\sum_{\nu\kappa}\sum_{kk'}A^{\alpha\nu} A^{\beta \kappa} X^\nu_k X^\kappa_k \\
    &= \frac{1}{\sqrt{n}}\sum_{\nu\kappa}\sum_{k}A^{\alpha\nu} A^{\beta \kappa} X^\nu_k X^\kappa_k \\
    &= \sqrt{n}\sum_{\nu\kappa}V^{\nu\kappa}A^{\alpha\nu} A^{\beta \kappa}.
\end{align*}
Taking expectation w.r.t the Softmax weights, we get the desired result.

For second moment, we can take the conditional expectation:
\begin{align*}
    \mathbb{E}_\ell[\mathcal{T}_2^{\alpha\beta}\mathcal{T}_2^{\delta\omega}] = \frac{1}{n^3}\sum_{\nu\kappa\nu' \kappa'}\sum_{kk'j ii'j'}A^{\alpha\nu}A^{\beta \kappa}A^{\delta\nu'}A^{\omega \kappa'}X^\nu_k X^\kappa_j X^{\nu'}_i X^{\kappa'}_{j'} \mathbb{E}[W_{kk'}W_{jk'}W_{ii'}W_{j'i'}] \,, 
\end{align*}
where we recall $\mathbb{E}_\ell[\,\cdot\,] = \mathbb{E}[\,\cdot\,|\mathcal{F}_\ell]$ is the conditional expectation given the sigma-algebra generated by $\mathcal{F}_\ell = \sigma(\{X_k\}_{k\in[\ell]})$.

Using Isserlis Theorem, we have that:
\begin{equation*}
    \mathbb{E}[W_{kk'}W_{jk'}W_{ii'}W_{j'i'}]  = \delta_{kj}\delta_{ij'} + \delta_{ki}\delta_{k'i'}\delta_{jj'} + \delta_{kj'}\delta_{k'i'}\delta_{ji} \,. 
\end{equation*}
Hence:
\begin{align*}
    \mathbb{E}_\ell[\mathcal{T}_2^{\alpha\beta}\mathcal{T}_2^{\delta\omega}] &= \frac{1}{n^3}\sum_{\nu\kappa\nu' \kappa'}A^{\alpha\nu}A^{\beta \kappa}A^{\delta\nu'}A^{\omega \kappa'}\sum_{kk'j ii'j'}X^\nu_k X^\kappa_j X^{\nu'}_i X^{\kappa'}_{j'} \Big(\delta_{kj}\delta_{ij'} + \delta_{ki}\delta_{k'i'}\delta_{jj'} + \\
    &+ \delta_{kj'}\delta_{k'i'}\delta_{ji}\Big)\\
    &=  \frac{1}{n^3}\sum_{\nu\kappa\nu' \kappa'}A^{\alpha\nu}A^{\beta \kappa}A^{\delta\nu'}A^{\omega \kappa'}\Big(\sum_{kk' ii'}X^\nu_k X^\kappa_k X^{\nu'}_i X^{\kappa'}_{i} + \sum_{kk'j }X^\nu_k X^\kappa_j X^{\nu'}_k X^{\kappa'}_{j} \\
    &+\sum_{kk'j}X^\nu_k X^\kappa_j X^{\nu'}_j X^{\kappa'}_{k} \Big) \\
    &=  \frac{1}{n^3}\sum_{\nu\kappa\nu' \kappa'}A^{\alpha\nu}A^{\beta \kappa}A^{\delta\nu'}A^{\omega \kappa'}\Big(n^4 V^{\nu\kappa}V^{\nu'\kappa'} + n^3 V^{\nu\nu'}V^{\kappa\kappa'}+ n^{3}V^{\nu\kappa'}V^{\nu'\kappa} \Big) \\
    &= \sum_{\nu\kappa\nu' \kappa'}A^{\alpha\nu}A^{\beta \kappa}A^{\delta\nu'}A^{\omega \kappa'}\Big(n V^{\nu\kappa}V^{\nu'\kappa'} + V^{\nu\nu'}V^{\kappa\kappa'}+ V^{\nu\kappa'}V^{\nu'\kappa} \Big) \,. 
\end{align*}
By taking expectation w.r.t the Softmax parameters, we get the desired result.
\end{proof}

\begin{lemma}[Covariance of $\mathcal{T}_2$]
\label{lemma:cov-t2-stable}
    \begin{equation*}
    \mathbb{E}_\ell[\mathcal{T}_2^{\alpha\beta}\mathcal{T}_2^{\delta\omega}] - \mathbb{E}_\ell[\mathcal{T}_2^{\alpha\beta}]\mathbb{E}_\ell[\mathcal{T}_2^{\delta\omega}] =  \frac{nn_k}{\tau^2 }\mathcal{A}^{\alpha\beta\delta\omega} + V^{\alpha\delta}V^{\beta\omega} + V^{\alpha\omega}V^{\beta\delta} + \mathcal{O}\left(\frac{nn_k}{\tau^3} + \frac{n_k}{\tau^2}\right),
\end{equation*}
where:
\begin{align*}
\mathcal{A}^{\alpha\beta\delta\omega} := \frac{1}{m^2}\sum_{\nu \kappa}\left(V^{\alpha\kappa}V^{\delta\nu}S_1^{\beta\kappa, \omega\nu} + V^{\alpha\kappa}V^{\omega\nu}S_1^{\beta\kappa, \delta\nu} + V^{\beta\nu}V^{\delta\kappa}S_1^{\alpha\nu, \omega\kappa} + V^{\beta\nu}V^{\omega\kappa}S_1^{\alpha\nu, \delta\kappa}\right) \,. 
\end{align*}

\end{lemma}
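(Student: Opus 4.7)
The plan is to combine the two identities from \cref{lemma:mom-t2-stable} and carefully identify the leading-order contributions using the close-to-identity structure of the shaped attention established in \cref{lemma:mom4-stable-attention} and \cref{lemma:cov-stable-attention}. Concretely, \cref{lemma:mom-t2-stable} expresses
\begin{align*}
\mathbb{E}_\ell[\mathcal{T}_2^{\alpha\beta}\mathcal{T}_2^{\delta\omega}] &= \sum_{\nu \kappa \nu' \kappa'} \mathbb{E}_\ell[A^{\alpha\nu}A^{\beta\kappa}A^{\delta\nu'}A^{\omega\kappa'}]\left(nV^{\nu\kappa}V^{\nu'\kappa'} + V^{\nu\nu'}V^{\kappa\kappa'} + V^{\nu\kappa'}V^{\nu'\kappa}\right), \\
\mathbb{E}_\ell[\mathcal{T}_2^{\alpha\beta}]\mathbb{E}_\ell[\mathcal{T}_2^{\delta\omega}] &= n\sum_{\nu\kappa\nu'\kappa'}V^{\nu\kappa}V^{\nu'\kappa'}\mathbb{E}_\ell[A^{\alpha\nu}A^{\beta\kappa}]\mathbb{E}_\ell[A^{\delta\nu'}A^{\omega\kappa'}],
\end{align*}
so the difference naturally splits into a ``diagonal-$V$'' part multiplied by $n$ and an ``off-diagonal-$V$'' part with no $n$ factor.

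For the $n$ part, I would pair the term $n\mathbb{E}_\ell[A^{\alpha\nu}A^{\beta\kappa}A^{\delta\nu'}A^{\omega\kappa'}]V^{\nu\kappa}V^{\nu'\kappa'}$ with the full expression for $\mathbb{E}_\ell[\mathcal{T}_2^{\alpha\beta}]\mathbb{E}_\ell[\mathcal{T}_2^{\delta\omega}]$, so that what remains after the subtraction is exactly $n$ times the attention covariance computed in \cref{lemma:cov-stable-attention}, summed against $V^{\nu\kappa}V^{\nu'\kappa'}$. Substituting that formula, the four Kronecker-delta pairs collapse two indices each, producing
\begin{equation*}
\tfrac{nn_k}{\tau^2 m^2}\sum_{\nu,\kappa}\left(V^{\alpha\kappa}V^{\delta\nu}S_1^{\beta\kappa,\omega\nu} + V^{\alpha\kappa}V^{\omega\nu}S_1^{\beta\kappa,\delta\nu} + V^{\beta\nu}V^{\delta\kappa}S_1^{\alpha\nu,\omega\kappa} + V^{\beta\nu}V^{\omega\kappa}S_1^{\alpha\nu,\delta\kappa}\right),
\end{equation*}
which is precisely $\frac{nn_k}{\tau^2}\mathcal{A}^{\alpha\beta\delta\omega}$, with the remainder $\mathcal{O}(n_k\tau^{-3})$ from \cref{lemma:cov-stable-attention} inflated by $n$ to $\mathcal{O}(nn_k\tau^{-3})$.

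For the remaining, non-$n$ part $\sum_{\nu\kappa\nu'\kappa'}\mathbb{E}_\ell[A^{\alpha\nu}A^{\beta\kappa}A^{\delta\nu'}A^{\omega\kappa'}](V^{\nu\nu'}V^{\kappa\kappa'} + V^{\nu\kappa'}V^{\nu'\kappa})$, I would apply \cref{lemma:mom4-stable-attention}: the leading $\delta_{\alpha\nu}\delta_{\beta\kappa}\delta_{\delta\nu'}\delta_{\omega\kappa'}$ term yields $V^{\alpha\delta}V^{\beta\omega} + V^{\alpha\omega}V^{\beta\delta}$ after the sums collapse, and every other summand in \cref{lemma:mom4-stable-attention} carries a factor of $n_k/\tau^2$, producing at most an $\mathcal{O}(n_k\tau^{-2})$ contribution after summing against the fixed $V$-terms (which are $O(1)$ in $n,\tau$). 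Adding the $\mathcal{O}(n_k\tau^{-3})$ remainder from the higher order Taylor terms in \cref{lemma:mom4-stable-attention} completes the error bookkeeping.

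The main obstacle will be the bookkeeping in the second step: we must verify that the $n$-enhanced contribution $n\mathbb{E}_\ell[A\cdots A]\cdot V^{\nu\kappa}V^{\nu'\kappa'}$ is cleanly cancelled up to $\mathcal{O}(nn_k/\tau^3)$ against the product-of-means, rather than leaving a non-negligible residue. This works precisely because the subtraction is engineered to produce the attention \emph{covariance} (not just the fourth moment), for which \cref{lemma:cov-stable-attention} gives the critical $\tau^{-2}$ decay that tames the $n$ prefactor into an $O(1)$ leading order. Everything else is summation of delta-index identities and collecting errors, so no further conceptual difficulty is expected.
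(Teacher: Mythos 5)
Your proposal is correct and follows the same route as the paper: split the difference from \cref{lemma:mom-t2-stable} into the $n$-weighted piece, which you rewrite as $n$ times the attention covariance from \cref{lemma:cov-stable-attention} contracted against $V^{\nu\kappa}V^{\nu'\kappa'}$ to produce $\frac{nn_k}{\tau^2}\mathcal{A}^{\alpha\beta\delta\omega} + \mathcal{O}(nn_k\tau^{-3})$, and the remaining piece, where the leading Kronecker-delta term of \cref{lemma:mom4-stable-attention} yields $V^{\alpha\delta}V^{\beta\omega}+V^{\alpha\omega}V^{\beta\delta}$ with the $\tau^{-2}$-scaled corrections absorbed into $\mathcal{O}(n_k\tau^{-2})$. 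The index collapses and error tracking match the paper's proof exactly.
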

\begin{proof}
    Using \cref{lemma:mom-t2-stable}, we have that:
    \begin{align*}
        \mathbb{E}_\ell[\mathcal{T}_2^{\alpha\beta}\mathcal{T}_2^{\delta\omega}] - \mathbb{E}_\ell[\mathcal{T}_2^{\alpha\beta}]\mathbb{E}_\ell[\mathcal{T}_2^{\delta\omega}] &= \sum_{\nu \kappa \nu' \kappa'} \mathbb{E}_\ell[A^{\alpha\nu}A^{\beta\kappa}A^{\delta\nu'}A^{\omega\kappa'}]\left(nV^{\nu\kappa}V^{\nu'\kappa'} + V^{\nu\nu'}V^{\kappa\kappa'} + V^{\nu\kappa'}V^{\nu'\kappa}\right)\\
        &- n\sum_{\nu\kappa\nu'\kappa'} V^{\nu\kappa}V^{\nu'\kappa'}\mathbb{E}_\ell\left[A^{\alpha\nu}A^{\beta\kappa}\right]\mathbb{E}_\ell\left[A^{\delta\nu'}A^{\omega\kappa'}\right] \\
        &= n\sum_{\nu\kappa\nu'\kappa'}V^{\nu\kappa}V^{\nu'\kappa'}\left(\mathbb{E}_\ell[A^{\alpha\nu}A^{\beta\kappa}A^{\delta\nu'}A^{\omega\kappa'}] - \mathbb{E}_\ell\left[A^{\alpha\nu}A^{\beta\kappa}\right]\mathbb{E}_\ell\left[A^{\delta\nu'}A^{\omega\kappa'}\right]\right) \\
        &+ \sum_{\nu \kappa \nu' \kappa'} \mathbb{E}_\ell[A^{\alpha\nu}A^{\beta\kappa}A^{\delta\nu'}A^{\omega\kappa'}]\left(V^{\nu\nu'}V^{\kappa\kappa'} + V^{\nu\kappa'}V^{\nu'\kappa}\right) \,. 
    \end{align*}

    Now we can use \cref{lemma:mom2-stable-attention} and \cref{lemma:mom4-stable-attention} to compute the moments of $A$. For the second summand, we simply have:
    \begin{equation*}
\mathbb{E}_\ell\left[A^{\alpha\nu}A^{\beta\kappa}A^{\delta\nu'}A^{\omega\kappa'}\right] = \delta_{\alpha\nu}\delta_{\beta\kappa}\delta_{\delta\nu'}\delta_{\omega\kappa'} + \mathcal{O}(n_k\tau^{-2}) ,
    \end{equation*}
hence:
\begin{equation*}
    \sum_{\nu \kappa \nu' \kappa'} \mathbb{E}_\ell[A^{\alpha\nu}A^{\beta\kappa}A^{\delta\nu'}A^{\omega\kappa'}]\left(V^{\nu\nu'}V^{\kappa\kappa'} + V^{\nu\kappa'}V^{\nu'\kappa}\right) = V^{\alpha\delta}V^{\beta\omega} + V^{\alpha\omega}V^{\beta \delta} + \mathcal{O}(n_k\tau^{-2}) \,. 
\end{equation*}

For the first summand, recall from \cref{lemma:cov-stable-attention} that:
\begin{align*}
        &\mathbb{E}_\ell\left[A^{\alpha\alpha'}A^{\beta\beta'}A^{\delta\delta'}A^{\omega\omega'}\right] - \mathbb{E}_\ell\left[A^{\alpha\alpha'}A^{\beta\beta'}\right]\mathbb{E}_\ell \left[A^{\delta\delta'}A^{\omega\omega'}\right] \\
        &= \frac{n_k}{\tau^2 m^2}\Big(\delta_{\alpha\alpha'}\delta_{\delta \delta'} S_1^{\beta\beta',\omega\omega'} + \delta_{\alpha\alpha'}\delta_{\omega\omega'}S_1^{\beta\beta', \delta\delta'} + \delta_{\beta\beta'}\delta_{\delta\delta'}S_1^{\alpha\alpha', \omega\omega'} + \delta_{\beta\beta'}\delta_{\omega\omega'}S_1^{\alpha\alpha', \delta\delta'} \Big) \\
        &+ \mathcal{O}(n_k\tau^{-3}) \,. 
\end{align*}

Hence:
\begin{align*}
    &n\sum_{\nu\kappa\nu'\kappa'}V^{\nu\kappa}V^{\nu'\kappa'}\left(\mathbb{E}_\ell[A^{\alpha\nu}A^{\beta\kappa}A^{\delta\nu'}A^{\omega\kappa'}] - \mathbb{E}_\ell\left[A^{\alpha\nu}A^{\beta\kappa}\right]\mathbb{E}_\ell\left[A^{\delta\nu'}A^{\omega\kappa'}\right]\right) \\
    &= \frac{nn_k}{\tau^2 m^2}\sum_{\nu\kappa\nu'\kappa'}V^{\nu\kappa}V^{\nu'\kappa'}\Big(\delta_{\alpha\nu}\delta_{\delta \nu'} S_1^{\beta\kappa,\omega\kappa'} + \delta_{\alpha\nu}\delta_{\omega\kappa'}S_1^{\beta\kappa, \delta\nu'} + \delta_{\beta\kappa}\delta_{\delta\nu'}S_1^{\alpha\nu, \omega\kappa'} + \delta_{\beta\kappa}\delta_{\omega\kappa'}S_1^{\alpha\nu, \delta\nu'} \Big) \\
    &+ \mathcal{O}(n n_k\tau^{-3}) \\
    &= \frac{nn_k}{\tau^2} \underbrace{\frac{1}{m^2}\sum_{\nu \kappa}\left(V^{\alpha\kappa}V^{\delta\nu}S_1^{\beta\kappa, \omega\nu} + V^{\alpha\kappa}V^{\omega\nu}S_1^{\beta\kappa, \delta\nu} + V^{\beta\nu}V^{\delta\kappa}S_1^{\alpha\nu, \omega\kappa} + V^{\beta\nu}V^{\omega\kappa}S_1^{\alpha\nu, \delta\kappa}\right)}_{\mathcal{A}^{\alpha \beta \delta \omega}} \\
    &+ \mathcal{O}(n n_k\tau^{-3}) .
\end{align*}

\end{proof}

We are now ready to re-state and proof of \cref{thm:sde-attention}. 

\sdeatt*

\begin{proof}

Recall that:
\begin{align*}
    V_{\ell + 1}^{\alpha\beta} &= \lambda^2V_\ell^{\alpha\beta}  + \frac{\lambda \gamma}{\sqrt{n}}\mathcal{T}_1^{\alpha\beta} + \frac{\gamma^2}{\sqrt{n}} \mathcal{T}_2^{\alpha\beta} .
\end{align*}

\paragraph{Drift.}
Summing and subtracting $\mathbb{E}_\ell[\mathcal{T}_2]$ (and using \cref{lemma:mom-t2-stable}), we have that:
\begin{align*}
    V_{\ell + 1}^{\alpha\beta} &= \lambda^2V_\ell^{\alpha\beta}  + \gamma^2 \sum_{\nu\kappa} V^{\nu\kappa}\mathbb{E}_\ell\left[A^{\alpha\nu}A^{\beta\kappa}\right] + \frac{\lambda \gamma}{\sqrt{n}}\mathcal{T}_1^{\alpha\beta} + \frac{\gamma^2}{\sqrt{n}}\left( \mathcal{T}_2^{\alpha\beta} -\mathbb{E}_\ell[\mathcal{T}_2]\right) \,,
\end{align*}
where we recall $\mathbb{E}_\ell[\,\cdot\,] = \mathbb{E}[\,\cdot\,|\mathcal{F}_\ell]$ is the conditional expectation given the sigma-algebra generated by $\mathcal{F}_\ell = \sigma(\{X_k\}_{k\in[\ell]})$.

Re-stating \cref{lemma:mom2-stable-attention}, we have that:
\begin{equation*}
     \mathbb{E}_\ell[A^{\alpha\nu}A^{\beta\kappa}]=\delta_{\alpha\nu}\delta_{\beta\kappa} + \frac{n_k}{\tau^2m^2}S_1^{\alpha\nu,\beta\kappa} + \frac{n_k}{2\tau^2m}(\delta_{\beta\kappa}S_2^{\alpha\nu} + \delta_{\alpha\nu}S_2^{\beta\kappa}) + O(n_k\tau^{-3}) .
\end{equation*}

Plugging in the expression, and using $\lambda^2 + \gamma^2 = 1$, we have that the drift is:
\begin{align*}
    &\lambda^2V_\ell^{\alpha\beta} + \gamma^2V_\ell^{\alpha\beta} + \gamma^2\frac{n_k}{\tau^2 m}\sum_{\nu\kappa}V^{\nu\kappa}\left(\frac{1}{m}S_1^{\alpha\nu,\beta\kappa} + \frac{1}{2}(\delta_{\beta\kappa}S_2^{\alpha\nu} + \delta_{\alpha\nu}S_2^{\beta\kappa})\right)+ O(n_k\tau^{-3}) \\
    &=V_\ell^{\alpha\beta} + \gamma^2\frac{n_k}{\tau^2}\left(\frac{1}{m^2}\sum_{\nu\kappa}V^{\nu\kappa}S_1^{\alpha\nu,\beta\kappa} + \frac{1}{2m} \sum_{\nu}(V^{\beta \nu}S_2^{\alpha\nu} + V^{\alpha\nu}S_2^{\beta\nu})\right)+ O(n_k\tau^{-3}) \,. 
\end{align*}

From here, it is evident that in order to have the drift scaling as $\mathcal{O}(1/n)$ we need to choose: 
\begin{equation}
    \tau^2 = \tau_0^2 n n_k ,
\end{equation}
where $\tau_0 > 0$ is a constant.

\paragraph{Covariance.}
Recall that:
\begin{align*}
    V_{\ell + 1}^{\alpha\beta} &= \lambda^2V_\ell^{\alpha\beta}  + \gamma^2 \sum_{\nu\kappa} V^{\nu\kappa}\mathbb{E}_\ell\left[A^{\alpha\nu}A^{\beta\kappa}\right] + \frac{\lambda \gamma}{\sqrt{n}}\mathcal{T}_1^{\alpha\beta} + \frac{\gamma^2}{\sqrt{n}}\left( \mathcal{T}_2^{\alpha\beta} -\mathbb{E}_\ell[\mathcal{T}_2]\right).
\end{align*}
Furthermore, we have set $\lambda^2 + \gamma^2 = 1$ and $\tau^2 = \tau_0^2 nn_k$ to have the right scaling for the drift. 

What's left to is to compute the conditional covariance for $V_{\ell+1}$ given $\mathcal{F}_\ell$. 
%\bobby{shouldn't this be $V_{\ell+1}$, conditioned on $V_{\ell}$?}. 
Noting that $\mathcal{T}_1^{\alpha\beta},\mathcal{T}_2^{\delta\omega} $ are uncorrelated, i.e. $\mathbb{E}_\ell\left[\mathcal{T}_1^{\alpha\beta}\mathcal{T}_2^{\delta\omega} \right] = 0$ (similarly to the case of Resnet with shaped ReLU Lemma \ref{lemma:mom-t1-t2-double}), we have that:
\begin{align*}
    \text{Cov}_\ell\left(V^{\alpha\beta}_{\ell+1}, V^{\delta\omega}_{\ell+1}\right) &= \mathbb{E}_\ell\left[\left(\lambda\gamma\mathcal{T}_1^{\alpha\beta} + \gamma^2\mathcal{T}_2^{\alpha\beta} - \gamma^2\mathbb{E}_\ell[\mathcal{T}_2^{\alpha\beta}]\right)\left(\lambda\gamma\mathcal{T}_1^{\delta\omega} + \gamma^2\mathcal{T}_2^{\delta\omega} - \gamma^2\mathbb{E}_\ell[\mathcal{T}_2^{\delta\omega}]\right)  \right] \\
    &= \lambda^2\gamma^2\mathbb{E}_\ell\left[\mathcal{T}_1^{\alpha\beta}\mathcal{T}_1^{\delta\omega} \right] 
 + \gamma^4\mathbb{E}_\ell\left[\mathcal{T}_2^{\alpha\beta}\mathcal{T}_2^{\delta\omega} \right] - \gamma^4\mathbb{E}_\ell\left[\mathcal{T}_2^{\alpha\beta}\right]\mathbb{E}_\ell\left[\mathcal{T}_2^{\delta\omega}\right] \,, 
\end{align*}
where we use $\text{Cov}_\ell$ to denote the conditional covariance given the sigma-algebra generated by $\mathcal{F}_\ell = \sigma(\{X_k\}_{k\in[\ell]})$.

Using \cref{lemma:mom-t1-stable} and \cref{lemma:cov-t2-stable}, we have that:
\begin{align*}
    \text{Cov}_\ell\left(V^{\alpha\beta}_{\ell+1}, V^{\delta\omega}_{\ell+1}\right)  &= \lambda^2\gamma^2\left( 2(V^{\alpha\delta}V^{\beta\omega} + V^{\alpha\omega}V^{\beta\delta})\right) + \gamma^4\left(\frac{1}{\tau_0^2}\mathcal{A}^{\alpha\beta\delta\omega}+ V^{\alpha\delta}V^{\beta\omega} + V^{\alpha\omega}V^{\beta\delta} \right) \\
    &= \gamma^2(2-\gamma^2)\left(V^{\alpha\delta}V^{\beta\omega} + V^{\alpha\omega}V^{\beta\delta} \right) + \frac{\gamma^4}{\tau_0^2}\mathcal{A}^{\alpha\beta\delta\omega} \,. 
\end{align*}
% 
% \lorenzo{Now we can use Theorem ...}
Now we can apply \cref{prop:conv_markov_chain_to_sde} for locally Lipschitz drift and covariance coefficients, which gives us the desired result in local convergence in the Skorohod topology. 

\end{proof}

We will also restate and prove \cref{cor:full_transformer}. 

\transformersde*

\begin{proof}

To combine the results of \cref{thm:sde-resnet} and \cref{thm:sde-attention}, it is sufficient to combine the following (simplified) iterated Markov updates into one Markov chain 
\begin{equation}
    U_\ell = V_\ell + \frac{b(V_\ell)}{n} + \frac{\Sigma(V_\ell)^{1/2} \xi_\ell}{\sqrt{n}} \,, \quad 
    V_{\ell+1} = U_\ell + \frac{b_{\text{ReLU}}(U_\ell)}{n} + \frac{\Sigma_{\text{lin}}(U_\ell)^{1/2} \xi_\ell^\prime}{\sqrt{n}} \,, 
\end{equation}
where $\xi_\ell, \xi_\ell^\prime$ are independent zero mean and identity covariance random vectors. 

Since in the limit, we have that either updates are infinitesimal, i.e. 
\begin{equation}
    |U_{\ell} - V_\ell| \xrightarrow{n\to\infty} 0 \quad \text{almost surely,}
\end{equation}
then we can write $b_{\text{ReLU}(U_\ell)} = \widehat b_{\text{ReLU},n}(V_\ell, \omega_\ell)$, where eventually we have that 
\begin{equation}
    \lim_{n\to\infty} \mathbb{E}_\ell \left[ 
    \widehat b_n(V_\ell, \omega_\ell)  \right] 
    = b(V_\ell) \,, 
\end{equation}
so it will not contribute towards affecting the limit. 
Here we recall $\mathbb{E}_\ell[\,\cdot\,] = \mathbb{E}[\,\cdot\,|\mathcal{F}_\ell]$ is the conditional expectation given the sigma-algebra generated by $\mathcal{F}_\ell = \sigma(\{X_k\}_{k\in[\ell]})$. 
We can treat $\Sigma_{\text{lin}}(U_\ell)$ similarly to get the Markov chain update 
\begin{equation}
    V_{\ell+1} = V_\ell + \frac{b(V_\ell) + \widehat b_{\text{ReLU}, n}(V_\ell, \omega_\ell)}{n} 
    + \frac{\Sigma(V_\ell)^{1/2} \xi_\ell}{\sqrt{n}}
    + \frac{\widehat \Sigma_{\text{lin},n}(V_\ell)^{1/2} \xi_\ell^\prime}{\sqrt{n}} \,, 
\end{equation}
which converges to the following SDE with two Brownian motions using \cref{prop:conv_markov_chain_to_sde}
\begin{equation}
    dV_t = [b(V_t) + b_{\text{res}}(V_t)] \, dt + 
    \Sigma(V_t)^{1/2} \, dB_t  
     + \Sigma_{\text{res}}(V_t)^{1/2} dB_t^\prime \,. 
\end{equation}

Observe that since the two Brownian motions $B_t,B_t^\prime$ are independent, it's equivalent to write 
\begin{equation}
    \Sigma(V_t)^{1/2} \, dB_t  
     + \Sigma_{\text{res}}(V_t)^{1/2} dB_t^\prime
    \overset{d}{=} 
    [\Sigma(V_t) + \Sigma_{\text{res}}(V_t)^{1/2}]^{1/2} \, dB_t \,. 
\end{equation}

We recover the desired results from considering a more general form of the iterated Markov updates as in \cref{prop:conv_markov_chain_to_sde}, which do not hinder the above derivation. 

\end{proof}

\section{Preliminary Experiments}
\label{sec:experiments}

We perform preliminary experiments to understand the effect of \emph{shaped attention} on training deep Transformer architectures. In particular, we consider a pre-training masked language modeling task, where we mask 15\% of the tokens.
We use a subset of the English Wikipedia \textit{20220301.en} and English \textit{bookcorpus} datasets \citep{huggingfacewiki,huggingfacebook}.  
As a baseline, we adopt a Pre-LN Transformer encoder architecture with 18 or 24 blocks. 
For the residual feedforward layer, we shape the ReLU activation according to \cref{eq:resnet}, by changing its negative slope to $s_{-} = 1-1/\sqrt{n}$ instead of $0$. 
We then incorporate our shaped attention by replacing the attention mechanism as dictated in \cref{eq:shaped-attention}. We also add scalar multipliers $\gamma_1,\gamma_2 \in \mathbb{R}$ both to the identity and centering terms of the shaped attention:
\begin{equation}
\label{eq:shaped-attention-train}
    A_\ell = \gamma_1 I + \text{Softmax}(\tau^{-1} Y_\ell) - \gamma_2 \frac{1}{m} \mathbf{1} \mathbf{1}^\top \,, 
    \quad 
    \tau = \tau_0\sqrt{nn_k} \, , 
\end{equation}
and propose two ways to set $\gamma_1,\gamma_2$ during training. In both alternatives we initialize $\gamma_1,\gamma_2=1$, thus leveraging the stability properties of shaped attention at initialization. During training, we either:
\begin{enumerate}
\item \textbf{Recover}. Linearly decrease $\gamma_1, \gamma_2$ to zero with in the first $4000$ steps, thus recovering the standard attention layer. Apply the same schedule for the shaped-ReLU slope $s_-$, recovering the usual ReLU activation. This approach recovers the vanilla Transformer architecture (without LayerNorm).
\item \textbf{Learn}. Learn all the shaping parameters $\gamma_1$, $\gamma_2$ and $s_-$.
\end{enumerate}
The intuitive rationale behind these choices is that at initialization we want good signal propagation and a non-degenerate covariance (%proven by
according to our theory,
%). Hence, we use 
this requires the shaped attention and ReLU). On the other hand, we also allow the model to more dynamically make use of the nonlinearity during training to modify the correlation structure with \textbf{Recover} or \textbf{Learn}. We report that without either
%any these two %
adjustment, \emph{shaped attention} is still trainable but at much slower rates. 

We also incorporate the %extra 
$\frac{1}{\sqrt{n}}$ factor into the initialization of the queries and keys weights by decreasing the variance of
%both 
their entries by a factor of $\frac{1}{n}$. This allows us to not re-tune the learning rate for the queries and keys. We stress that at at initialization, the two formulations ($\tau = \sqrt{nn_k}$ and $\tau = \sqrt{n_k}$ with decreased weight's variance) are equivalent. In both alternatives we train all the skip connection parameters $\lambda$, $\gamma$, and initialize $\gamma$ in the grid $(0.05, 0.1, 0.2)$ and set $\tau_0=1$. We report training instabilities (loss divergence) for larger values of $\gamma$. All models ---including the baselines --- use Adam \cite{kingma2014adam} with learning rate warmup of $4000$ steps, the learning rate is tuned in the grid $(0.0001, 0.0005, 0.001, 0.005)$.  We report the train/test loss after $100K$ optimization steps, averaged over $4$ random seeds. All the other experimental details can be found in \cref{sec:exp-details}.

\paragraph{The Shaped Transformer is Trainable.}
In \cref{tab:main_experiment}, we compare the train/test loss of the two variant of shaped attention with the baseline Pre-LN model. Notice that our model (in both variants) achieves comparable performance to standard Transformers across all the reported values of $\gamma$.  

\paragraph{GLUE evaluation}
Furthermore, we evaluate the trained models on three datasets from the GLUE benchmark \citep{wang2018glue} and summarize the results in \cref{tab:glue}. These demonstrate that our shaped Transformers holds promise by outperforming our pre-ln baseline.

\begin{table*}[!h]
\centering
\begin{tabular}{@{}llcccl@{}}\toprule
&& \multicolumn{2}{c}{$d=18$} & \multicolumn{2}{c}{$d=24$}\\
\cmidrule(lr){2-4}\cmidrule(lr){5-6}
& $\gamma$ & Train Loss & Test Loss & Train Loss & Test Loss \\

\midrule
\multirow{3}{*}{Learn}

& 0.05  & $2.09_{\pm 0.02}$ & $2.07_{\pm 0.02}$ & $2.03_{\pm 0.02}$ & $2.03_{\pm 0.01}$ \\
& 0.1       & $2.10_{\pm 0.02}$ & $2.07_{\pm 0.01}$ & $\mathbf{2.02}_{\pm 0.01}$ & $2.02_{\pm 0.03}$ \\
& 0.2  & $2.07_{\pm 0.05}$ & $2.06_{\pm 0.02}$ & $2.09_{\pm 0.04}$ & $2.09_{\pm 0.03}$ \\

\midrule
\multirow{3}{*}{Recover}
& 0.05  & $2.05_{\pm 0.03}$ & $2.04_{\pm 0.02}$ & $2.02_{\pm 0.04}$ & $2.00_{\pm 0.01}$ \\
& 0.1       & $2.05_{\pm 0.01}$ & $2.04_{\pm 0.01}$ & $2.06_{\pm 0.03}$ & $2.00_{\pm 0.02}$ \\
& 0.2  & $\mathbf{2.01}_{\pm 0.03}$ & $\mathbf{2.03}_{\pm 0.01}$ & $2.05_{\pm 0.04}$ & $2.00_{\pm 0.01}$ \\

\midrule

\multirow{1}{*}{Baseline}
&  & $2.08_{\pm 0.03}$ & $2.07_{\pm 0.01}$ & $2.08_{\pm 0.03}$ & $2.01_{\pm 0.01}$\\
\bottomrule
\end{tabular}
\caption{Training and test loss of the proposed \emph{shaped attention}, both in the "Recover" and "Learn" alternatives. Baseline refers to the vanilla Pre-LN Transformer. We report the best run under the learning rates $(0.0001, 0.0005, 0.001, 0.005)$, averaged over $4$ random seeds. We include the confidence interval of $\pm$ one standard deviation.}
\label{tab:main_experiment}
\end{table*}

\begin{table*}[!h]
\centering
\begin{tabular}{@{}llccl@{}}\toprule
\cmidrule(lr){2-4}\cmidrule(lr){5-5}
& Model & COLA & MRPC & RTE  \\

\midrule
\multirow{3}{*}{$d=18$}

& Baseline  & $0.139_{\pm 0.008}$ & $0.797_{\pm 0.010}$ & $0.504_{\pm 0.024}$  \\
& Learn, $\gamma=0.2$       & $0.182_{\pm 0.041}$ & $0.813_{\pm 0.005}$ & $0.528_{\pm 0.019}$  \\
& Recover, $\gamma=0.2$ & $0.221_{\pm 0.042}$ & $0.809_{\pm 0.009}$ & $0.520_{\pm 0.029}$  \\

\midrule
\multirow{3}{*}{$d=24$}
& Baseline  & $0.022_{\pm 0.055}$ & $0.785_{\pm 0.009}$ & $0.48_{\pm 0.046}$ \\
& Learn, $\gamma=0.05$       & $0.150_{\pm 0.026}$ & $0.812_{\pm 0.004}$ & $0.513_{\pm 0.019}$  \\
& Recover, $\gamma=0.05$  & $0.211_{\pm 0.039}$ & $0.803_{\pm 0.012}$ & $0.529_{\pm 0.019}$  \\

\midrule

& BERT (Geiping et al.)  & 0.103 & 74.8 &  0.509 \\

\bottomrule
\end{tabular}
\caption{Evaluation of the pretrained models on a subset of the GLUE benchmark. We take the checkpoints of the baselines and selected shaped Transformers ($\gamma=0.2$ for the shallower model $d=18$, $\gamma=0.05$ for the deeper model $d=24$). Then, we fine-tune on each of three datasets from GLUE (COLA, MRPC, RTE) for 5 epochs and Adam optimizer $lr=5e-4$ with batch size 16, and report test evaluation metric for the corresponding task.}
\label{tab:glue}
\end{table*}

\paragraph{Entropy Collapse for Large Learning rates.} 
To understand the sources of training instability, we keep track of the entropy of the probability distribution induced by the Softmax, as it has been observed that the Transformer's training is unstable in the low-entropy regime \cite{zhai2023stabilizing}. 
The entropy is calculated for each row of the Softmax matrix, and it is averaged across rows and heads. 
The results are in Fig.~\ref{fig:training-entropy}. 
Notice how for the large learning rate regime observed in Fig.~\ref{fig:training-entropy}, the entropy collapses for the baseline model, but not for the proposed \emph{shaped Transformer}. 
Entropy collapse indicates that the Softmax distribution degenerates to a point mass, which is itself caused by large logits. 
Remarkably, this phenomenon does not affect the \texttt{recover} setting, despite recovering the Transformer architecture (without layer normalization) after the warm-up period. 

\begin{figure}[t!]
     \centering
     \begin{subfigure}[b]{0.45\textwidth}
         \centering
          \includegraphics[width=\textwidth]{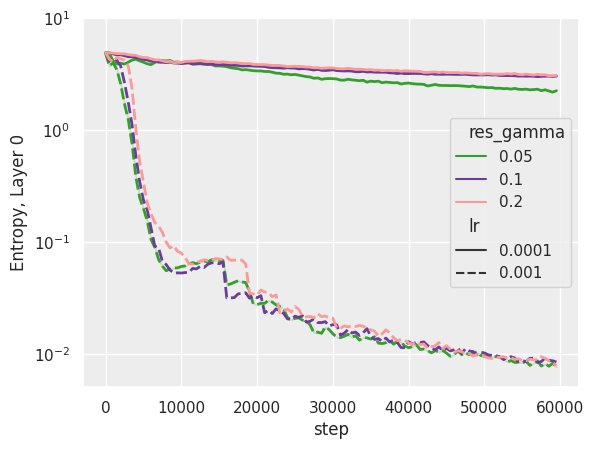}
        %\caption{$d=18$}
         % \label{subfig:rho_path} % doesn't work with \cref for some reason 
     \end{subfigure}
     % \hfill
     \begin{subfigure}[b]{0.45\textwidth}
         \centering
         \includegraphics[width=\textwidth]{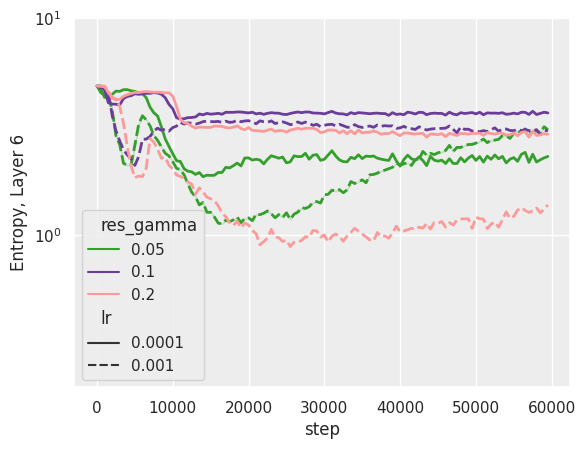}
        %\caption{$d=24$}
     %    %  \label{fig:three sin x}
     \end{subfigure}

          \begin{subfigure}[b]{0.45\textwidth}
         \centering
          \includegraphics[width=\textwidth]{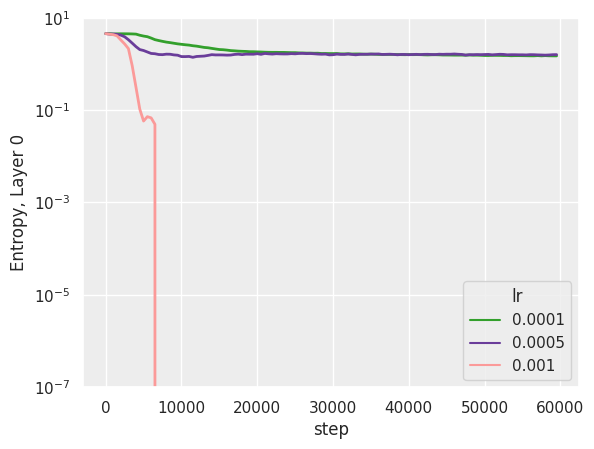}
        %\caption{$d=18$}
         % \label{subfig:rho_path} % doesn't work with \cref for some reason 
     \end{subfigure}
     % \hfill
     \begin{subfigure}[b]{0.45\textwidth}
         \centering
         \includegraphics[width=\textwidth]{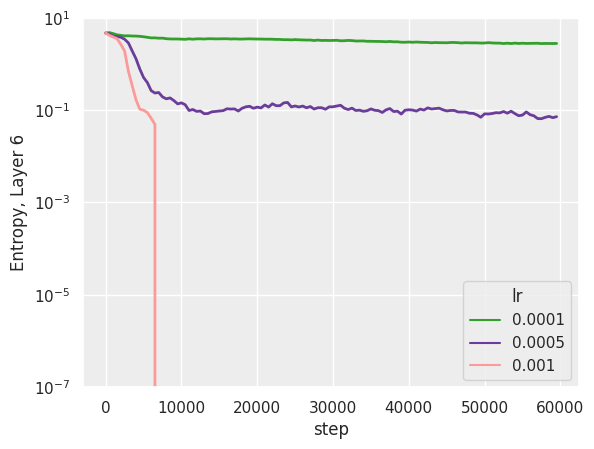}
        %\caption{$d=24$}
     %    %  \label{fig:three sin x}
     \end{subfigure}
\caption{
Dynamics of the mean entropy across heads for selected layers (first and seventh) and learning rates. (Above): \emph{shaped attention} in \texttt{recover} setting. (Below): Baseline transformers. Notice that at $lr=0.001$ the entropy collapses for the baseline model. 
}
\label{fig:training-entropy}
\end{figure}

\subsection{Experimental details}
\label{sec:exp-details}
\paragraph{Dataset.} We use a subset of the English Wikipedia \textit{20220301.en} and English \textit{bookcorpus} datasets 
\citep{huggingfacewiki,huggingfacebook}. 
The sentences are tokenized using by pre-training a tokenizer on the training set. We use a vocabulary size of $32000$, and a maximum sequence length of $128$ tokens. 

\paragraph{Model parameters.} We use an embedding size of $n=768$ and $8$ multi-attention heads. The batch size is fixed to $32$ sequences. All the initial weights are sampled from $\mathcal{N}(0, n^{-1})$, with the exception of the queries and keys' weights $W^{K}, W^{Q}$ in the \emph{shaped attention} case, that are sampled from $\mathcal{N}(0, n^{-3/2})$ (as explained in \cref{sec:experiments}). The feedforward layer maps the $n=768$-dimensional embedding to the larger dimension $3072$, as in the Hugging face implementation of Bert \cite{huggingbert}.  

\paragraph{Optimization.}
We train using Adam \cite{kingma2014adam} with betas parameters ($0.9$, $0.999$) and learning rate chosen in the grid $(0.0001, 0.0005, 0.001, 0.005)$. We do not use weight decay.

\paragraph{Computational Resources.}
The experiments are executed on Nvidia DGX-1 GPU nodes equipped with 4 20-core Xeon E5-2698v4 processors, 512 GB of memory and 8 Nvidia V100 GPUs.

\section{Additional Figures}
\label{sec:additional_figures}

\begin{figure}[h]
    \centering
    \includegraphics[width=0.5\textwidth]{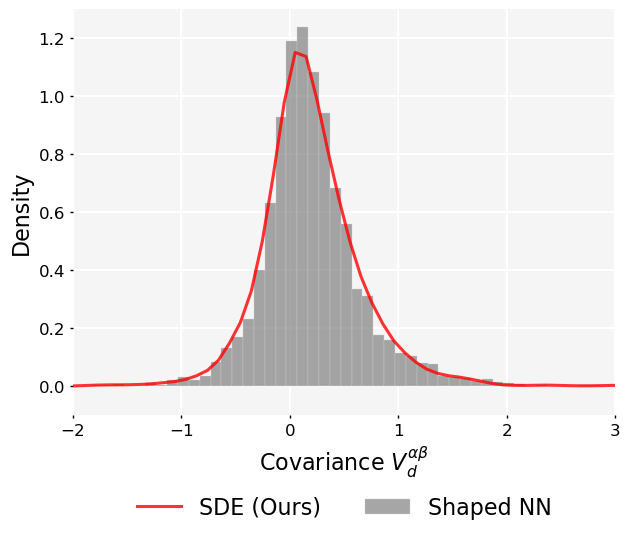}
    \caption{Kernel density estimate and histogram of covariances from the covariance SDE in \cref{thm:sde-attention} and shaped attention NN (\cref{eq:shaped-attention}). Simulated with $n = 200, d = 150, \gamma = 1/\sqrt{8}, \tau_0 = 1, V^{\alpha\beta}_0 = 0.2$, SDE step size $0.01$, and $2^{12}$ samples.}
    \label{fig:covariance_SDE_vs_NN}
\end{figure}

\begin{figure}[h]
    \centering
    \includegraphics[width=0.8\textwidth]{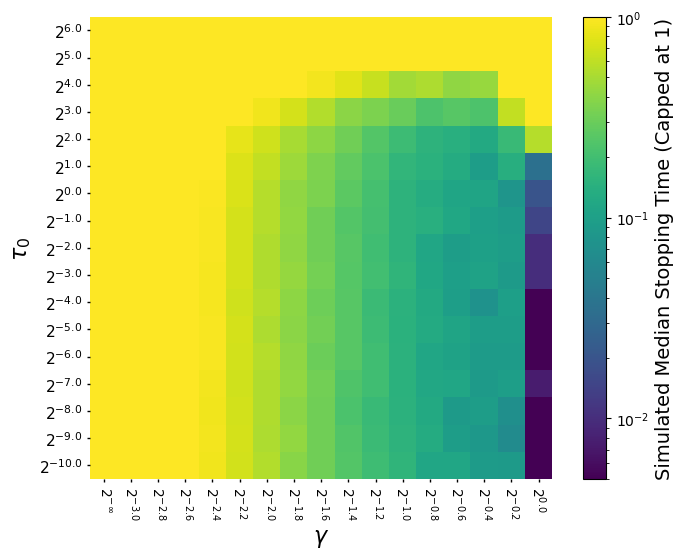}
    \caption{Median of the stopping time capped at 1,  of the shaped attention neural network with respect to its parameters $\gamma$ and $\tau_0$. % 
Stopping time is defined as $t^* = d^*/n$ with $d^*$ the maximum depth beyond which one of the eigenvalues of the covariance matrix exceeds $10^4$ or drops below $10^{-4}$. Simulated with $n=d=200$, and $100$ samples used to estimate the median. To demonstrate the potential numerical instabilities, we had to choose an \emph{adversarial} set of parameters: 
in particular, an unrealistically large norm (approx. $10\sqrt{n}$) for the initial tokens $X_0$, which enlarges the eigenvalues of $V_0$ to the order of $100$. }
    \label{fig:covariance_SDE_vs_NN}
\end{figure}

\end{document}